\newtheorem{example}{Example}
\newtheorem{lemma}{Lemma}
\newtheorem{theorem}{Theorem}
\newtheorem{proposition}{Proposition}
\newcommand{\trace}{\operatorname{trace}}
\newcommand{\cD}{\mathcal{D}}
\newcommand{\cH}{\mathcal{H}}
\newcommand{\EE}{\mathbf{E}}
\newcommand{\RR}{\mathbb{R}}
\newcommand{\cF}{{\mathcal{F}}}
\newcommand{\diag}{\operatorname{diag}}
\newcommand{\cL}{\mathcal{L}}
\newcommand{\ie}{\emph{i.e.}}
\newcommand{\y}{{y}}
\newcommand{\ty}{\tilde{\y}}
\newcommand{\ModelName}{{PFG }}
\def\1{\bm{1}}
\DeclareMathAlphabet{\mathsfit}{\encodingdefault}{\sfdefault}{m}{sl}
\SetMathAlphabet{\mathsfit}{bold}{\encodingdefault}{\sfdefault}{bx}{n}
\DeclareMathOperator*{\argmin}{arg\,min}
\def\beq{\begin{equation} }\def\eeq{\end{equation} }\def\1{\mathbf{1}}\def\cov{{\mathbf{ Cov}}}
\title{Particle-based Variational Inference with\\ Preconditioned Functional Gradient Flow}
\author{Hanze Dong$^\star$, Xi Wang$^\ddag$, Yong Lin$^\dag$, Tong Zhang$^\star$$^\dag$\\
{$^\star$ Department of Mathematics, HKUST}
\\
{$^\dag$ Department of Computer Science and Engineering, HKUST}\\
{$^\ddag$ College of Information and Computer Science, UMass Amherst}
}
\begin{document}

\maketitle

\begin{abstract}

Particle-based variational inference (VI) minimizes the KL divergence between model samples and the target posterior with gradient flow estimates. With the popularity of Stein variational gradient descent (SVGD), the focus of particle-based VI algorithms has been on the properties of functions in Reproducing Kernel Hilbert Space (RKHS) to approximate the gradient flow. However, the requirement of RKHS restricts the function class and algorithmic flexibility. 
This paper offers a general solution to this problem by introducing a functional regularization term that encompasses the RKHS norm as a special case. 
This allows us to propose a new particle-based VI algorithm called  \emph{preconditioned functional gradient flow} (PFG).
Compared to SVGD, PFG has several advantages. It has a larger function class, improved scalability in large particle-size scenarios, better adaptation to ill-conditioned distributions, and provable continuous-time convergence in KL divergence. Additionally, non-linear function classes such as neural networks can be incorporated to estimate the gradient flow. Our theory and experiments demonstrate the effectiveness of the proposed framework.

\end{abstract}

\section{Introduction}

 Sampling from unnormalized density is a fundamental problem in machine learning and statistics, especially for posterior sampling. 
 Markov Chain Monte Carlo (MCMC) \citep{welling2011bayesian,hoffman2014no,chen2014stochastic} and Variational inference (VI) \citep{ranganath2014black, jordan1999introduction, blei2017variational} are two mainstream solutions: MCMC is asymptotically unbiased but sample-exhausted; VI is computationally efficient but usually biased.
 Recently, particle-based VI algorithms \citep{liu2016stein,detommaso2018stein,liu2019understanding} tend to minimize the Kullback-Leibler (KL) divergence between particle samples and the posterior, and
 absorb the advantages of both MCMC and VI: 
      (1) non-parametric flexibility and asymptotic unbiasedness;
     (2) sample efficiency with the interaction between particles;
     (3) deterministic updates.
Thus, these algorithms are competitive in sampling tasks, such as Bayesian inference \citep{liu2016stein,feng2017learning,detommaso2018stein}, probabilistic models \citep{wang2016learning,pu2017vae}.
 
 Given a target distribution $p_*(x)$, particle-based VI aims to find $g(t,x)$, so that
starting with $X_0 \sim p_0$, the 
distribution $p(t,x)$ of the following method:  
$
   d X_t = g(t,X_t) d t 
$,
converges to $p_*(x)$ as $t \to \infty$.
By the continuity equation \citep{jordan1998variational}, we can capture the evolution of $p(t,x)$ by
\begin{align}\label{eq:cont}
\frac{\partial p(t,x)}{\partial t} = -\nabla \cdot \left(p(t,x) g(t,x)\right).\end{align}

In order to measure the ``closeness'' between $p(t,\cdot)$ and $p_*$, we typically adopt the KL divergence,
\begin{equation}
    D_{\mathrm{KL}}(t) = \int p(t,x) \ln \frac{p(t,x)}{p_*(x)} dx.
\end{equation}
Using chain rule and integration by parts, we have
\begin{align}\label{eq:evo}
 \frac{d D_{\mathrm{KL}}(t)}{d t}
= -\int   p(t,x) [\nabla \cdot g(t,x) +  g(t,x)^\top
\nabla_x \ln {p_*(x)}] d x,
\end{align}
which captures the evolution of KL divergence.

 To minimize the KL divergence, one needs to define a ``gradient'' to update the particles as our $g(t,x)$. The most standard approach, \emph{Wasserstein gradient} \citep{ambrosio2005gradient}, defines a gradient for $p(t,x)$ in the Wasserstein space, which contains probability measures with bounded second moments. In particular, for any functional $\cL$ that maps probability density $p(t,x)$ to a non-negative scalar, we say that the particle density $p(t,x)$ follows the
 Wasserstein gradient flow of $\cL$ if $g(t,x)$ is the gradient field of $L^2(\RR^d)$-functional derivative of $\cL$ \citep{villani2009optimal}. For KL divergence, the solution 
is $ \nabla\ln\frac{p_*(x)}{p(t,x)}$.
 However, the computation of deterministic and time-inhomogeneous Wasserstein gradient is non-trivial. It is necessary to restrict the function class of $g(t,x)$ to obtain a tractable form.

 
 
Stein variational gradient descent (SVGD) is the most popular particle-based algorithm, which provides a tractable form to update particles with the kernelized gradient flow \citep{chewi2020svgd,liu2017steinflow}. It updates particles by minimizing the KL divergence with a functional gradient measured in RKHS. By restricting the functional gradient with bounded RKHS norm, it has an explicit formulation: $g(t,x)$ can be obtained by minimizing Eq.~\eqref{eq:evo}.
Nonetheless, there are still some limitations due to the restriction of RKHS: 
 (1) the expressive power is limited because kernel method is known to suffer from the curse of dimensionality \citep{geenens2011curse}; 
 (2) with $n$ particles, the $O(n^2)$ computational overhead of kernel matrix is required.
 {Further, we identify another crucial limitation of SVGD: the kernel design is highly non-trivial. Even in the simple Gaussian case, where particles start with $\mathcal{N}(0,I)$ and $p_* =\mathcal{N}(\mu_*,\Sigma_*)$, commonly used kernels such as linear and RBF kernel, have fundamental drawbacks in SVGD algorithm (Example \ref{exp:breg}).
 }
 
Our motivation originates from functional gradient boosting  \citep{friedman2001greedy,nitanda2018gradient,johnson2019framework}. For each $p(t,x)$, we find a proper function as $g(t,x)$ in the function class $\cF$ to minimize Eq.~\eqref{eq:evo}. 
In this context, we design a regularizer for the functional gradient to approximate variants of ``gradient'' explicitly. 
We propose a regularization family to penalize the particle distribution's functional gradient output. For well-conditioned $-\nabla^2 \ln {p_*}$\footnote{For any positive-definite matrix, the condition number is the ratio of the maximal eigenvalue to the minimal eigenvalue. 
A low condition number is well-conditioned, while a high condition number is ill-conditioned.}, we can approximate the Wasserstein gradient directly;
For ill-conditioned $-\nabla^2 \ln {p_*}$, we can adapt our regularizer to approximate a preconditioned one.
Thus, our functional gradient is an approximation to the preconditioned Wasserstein gradient. {Regarding the function space, we do not restrict the function in RKHS. Instead, we can use non-linear function classes such as neural networks to obtain a better approximation capacity.
The flexibility of the function space can lead to a better sampling algorithm, which is supported by our empirical results.}

  \noindent\textbf{Contributions.} We present a novel particle-based VI framework that incorporates functional gradient flow with general regularizers.
  We leverage a special family of regularizers to approximate the preconditioned Wasserstein gradient flow, which proves to be more effective than SVGD. 
  The functional gradient in our framework explicitly approximates the preconditioned Wasserstein gradient, making it well-suited to handle ill-conditioned cases and delivering provable convergence rates. Additionally, our proposed algorithm eliminates the need for the computationally expensive $O(n^2)$ kernel matrix, resulting in increased computational efficiency for larger particle sizes. Both theoretical and empirical results demonstrate the superior performance of our framework and proposed algorithm.

 

\section{Analysis}

 \noindent\textbf{Notations.} In this paper, we use $x$ to denote particle samples in $\RR^d$. The distributions are assumed to be absolutely continuous w.r.t. the Lebesgue measure.
 The probability density function of the posterior is denoted by $p_*$.
 $p(t,x)$ (or $p_t$) refers to 
 particle distribution at time $t$.
 For scalar function $p(t,x)$, $\nabla_x p(t,x)$ denotes its gradient w.r.t. $x$. For vector function $g(t,x)$, $\nabla_x g(t,x)$, $\nabla_x\cdot g(t,x)$, $\nabla^2_x g(t,x)$ denote its Jacobian matrix, divergence, Hessian w.r.t. $x$. $\frac{\partial p(t,x)}{\partial t}$ and $\frac{\partial g(t,x)}{\partial t}$ denotes the partial derivative w.r.t. $t$.
 Without ambiguity, $\nabla$ stands for $\nabla_x$ for conciseness. Notation $\Vert x\Vert_H^2$ stands for 
 $x^\top H x$ and  $\Vert x\Vert_I$ is denoted by $\Vert x\Vert$. Notation $\Vert \cdot\Vert_{\cH^d}$  denotes the RKHS norm on $\RR^d$.




We let $g(t,x)$ belong to a vector-valued function class $\cF$, and  find the best functional gradient direction. Inspired by the gradient boosting algorithm for regression and classification problems, we approximate the gradient flow by a function $g(t,x) \in \cF$ with a regularization term, which solves  the following minimization formulation:
\begin{equation}\label{eq:update}
    g(t,x) = \argmin_{f \in \cF} \left[-
\int   p(t,x) [\nabla \cdot f(x) +  f(x)^\top
\nabla \ln {p_*(x)} ] d x + Q({f})\right],
\end{equation}
where 
$Q(\cdot)$ is a regularization term that limit the output magnitude of $f$.
This regularization term also implicitly determines the underlying ``distance metric'' used to define the gradient estimates $g(t,x)$ in our framework. 
When $Q(x) = \frac{1}{2}\int p(t,x) \Vert f(x)\Vert^2 dx $,  Eq.~\eqref{eq:update} is equivalent to 
\begin{equation}\label{eq:wass2}
g(t,x)=\argmin_{ f\in\mathcal{{F}}}\int p(t,x) \left\Vert {f}(x)-\nabla \ln\frac{p_*(x)}{p_t(x)}\right\Vert^2 dx.
\end{equation}
If $\mathcal{F}$ is well-specified, i.e., $\nabla \ln\frac{p_*(x)}{p_t(x)}\in\cF$, we have $
g(t,x)=\nabla\ln\frac{p_*(x)}{p_t(x)},
$
which is the direction of Wasserstein gradient flow. Interestingly, despite the computational intractability of Wasserstein gradient, Eq.~\eqref{eq:update} provides a tractable variational approximation.
 

 For RKHS, we will show that SVGD is a special case of Eq.~\eqref{eq:update}, where $Q(f) = \frac{1}{2}\Vert f\Vert_{\cH^d}^2$ (Sec. \ref{sec:svgd_reg}).  We point out that RKHS norm usually fails to regularize the functional gradient properly since it is fixed for any $p(t,x)$.
 Our next example shows that SVGD is a suboptimal solution for Gaussian case.

\begin{figure}[t]
    \centering
    \begin{tabular}{ccccc}
         \includegraphics[trim={0.8cm .7cm 1cm 1cm},clip,width=0.16\textwidth]{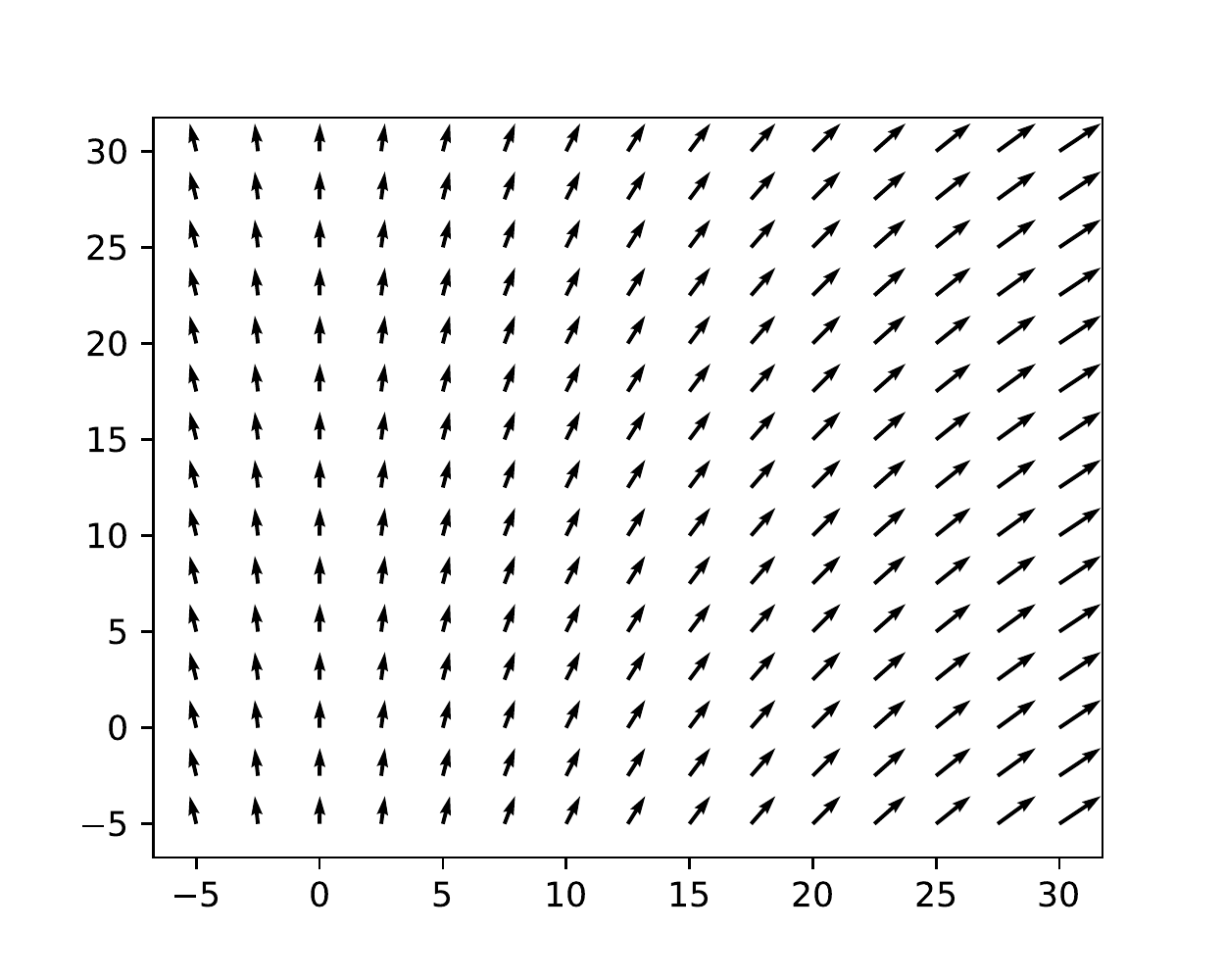}
        &\includegraphics[trim={0.8cm .7cm 1cm 1cm},clip,width=0.16\textwidth]{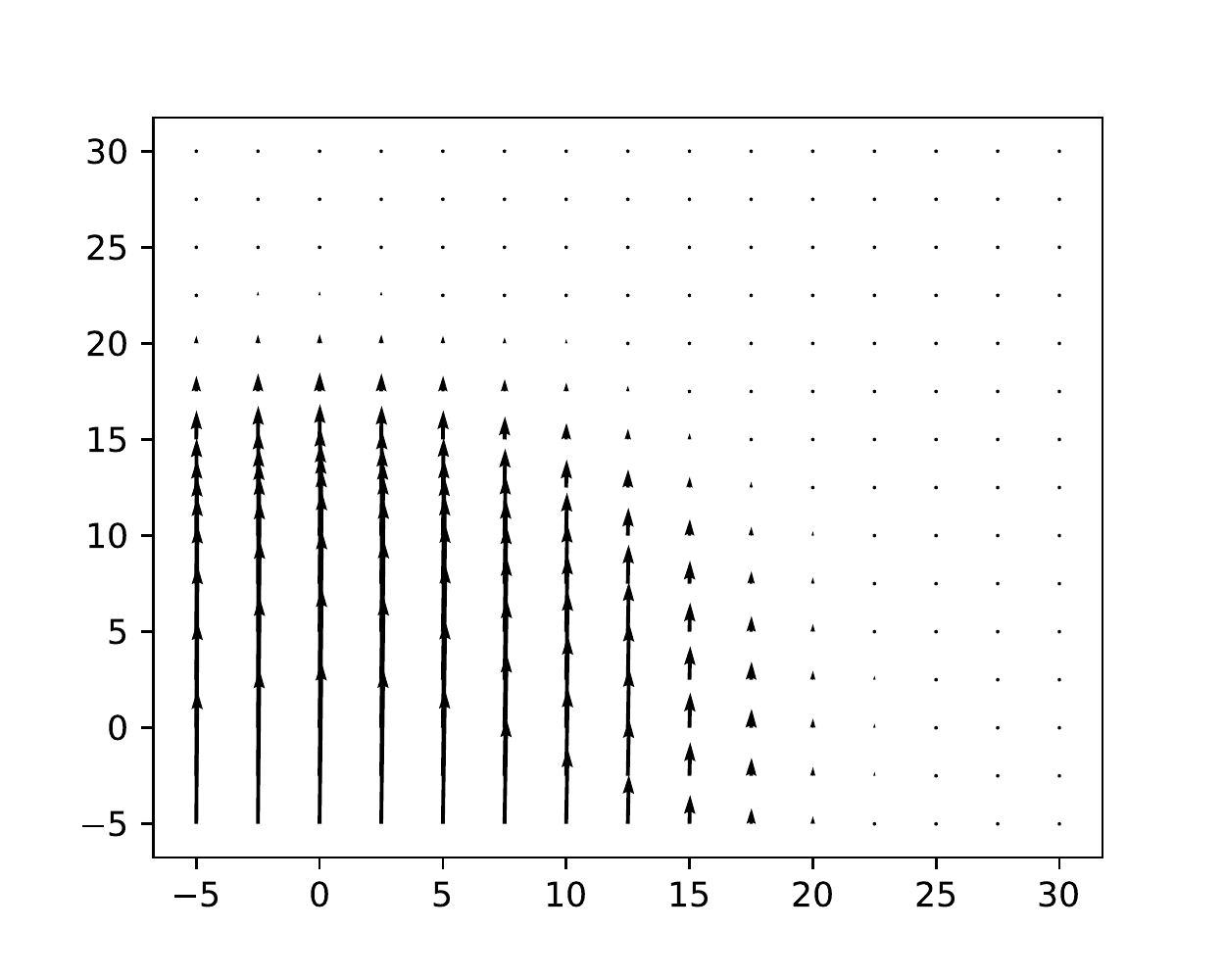}
        &
          \includegraphics[trim={0.8cm .7cm 1cm 1.cm},clip,width=0.16\textwidth]{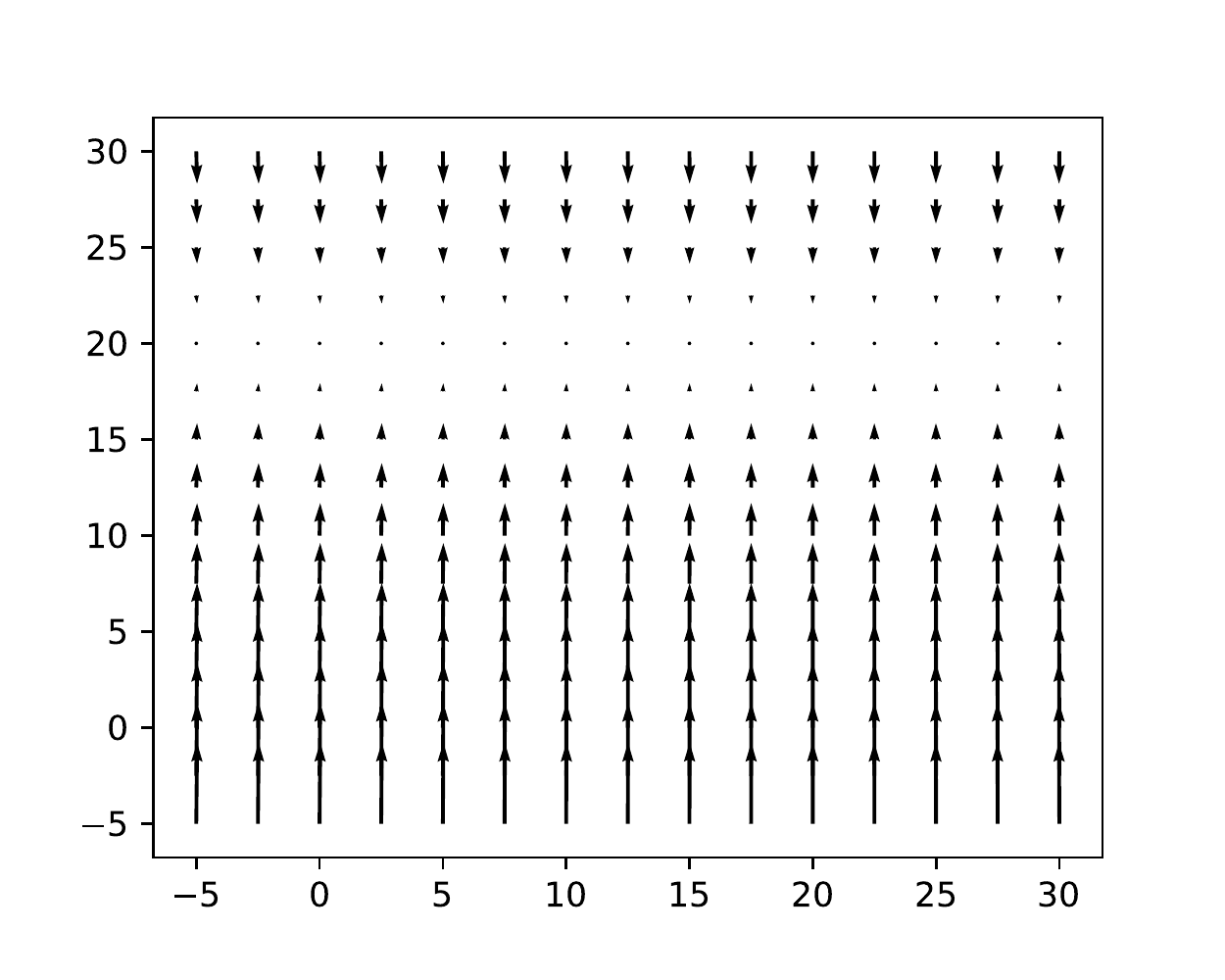}  
        &\includegraphics[trim={0.8cm .7cm 1cm 1cm},clip,width=0.16\textwidth]{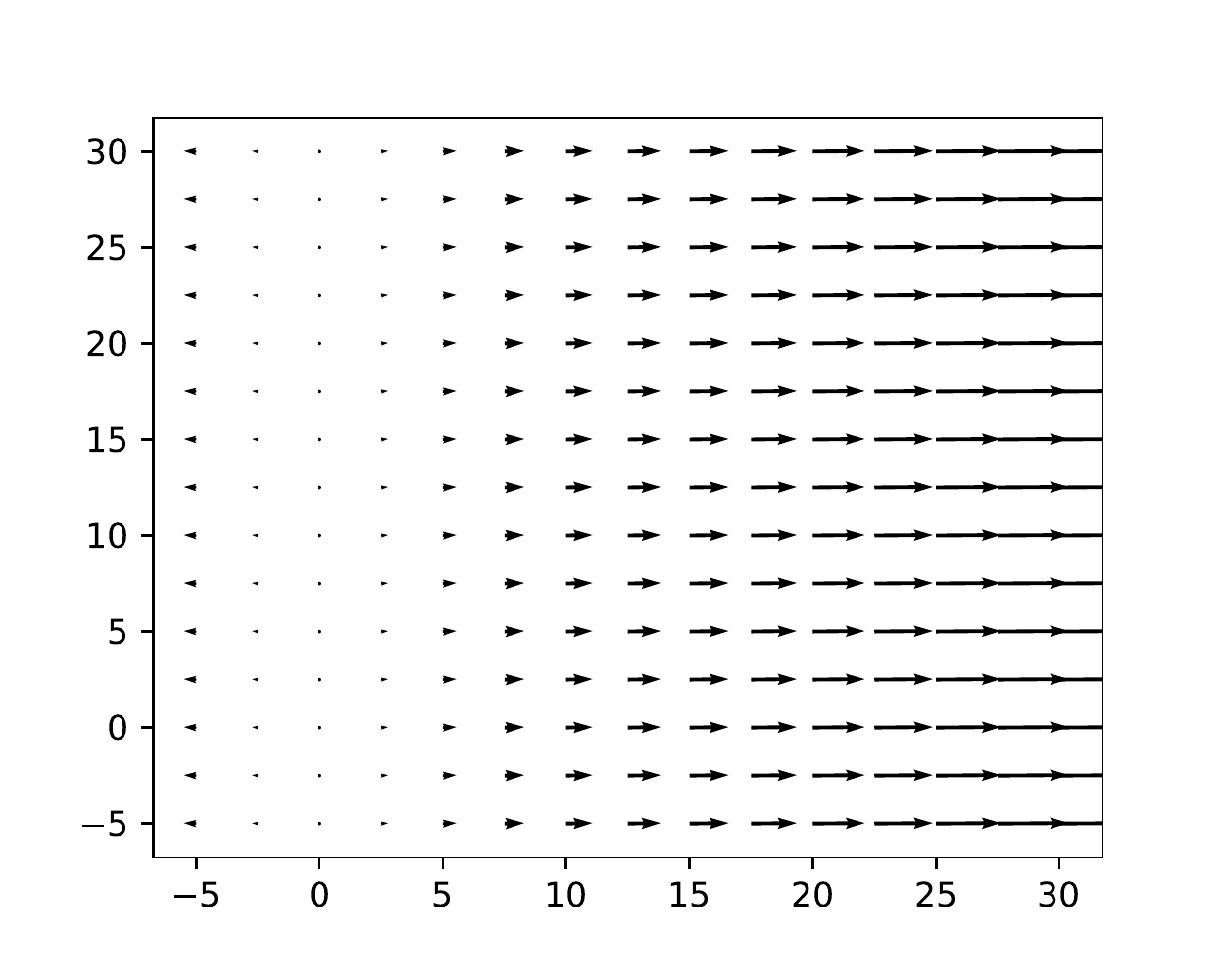}
        &
  \includegraphics[trim={0.8cm .7cm 1cm 1cm},clip,width=0.16\textwidth]{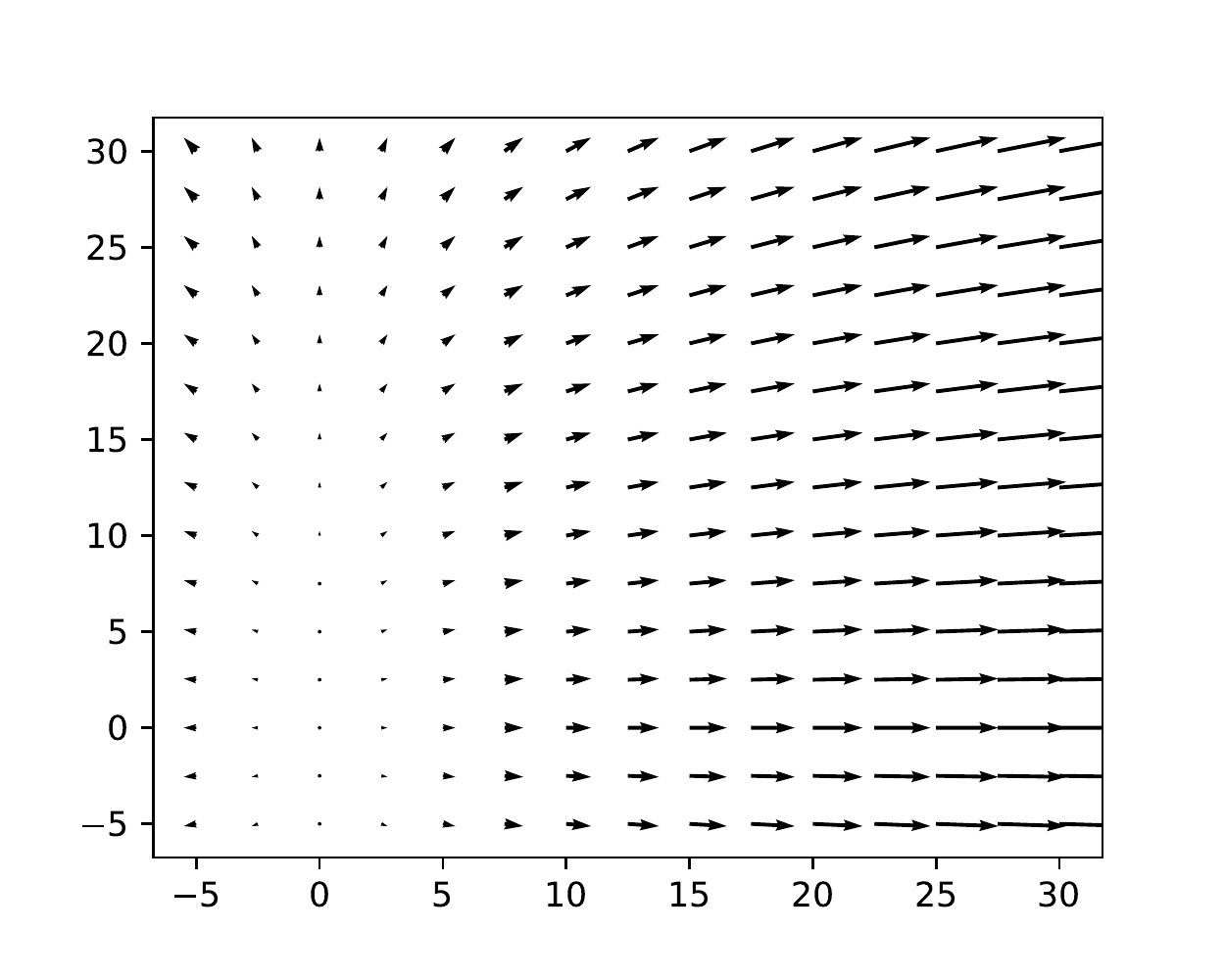}  
\\
{\tiny (a) $\frac{1}{2}\Vert f\Vert_{\cH^d}^2$ (linear)}&
{\tiny (b) $\frac{1}{2}\Vert f\Vert_{\cH^d}^2$ (RBF)}&
{\tiny (c) $\frac{1}{2}\EE_{p_t}\Vert f\Vert^2$ }&
{\tiny (d) $\frac{1}{2}\EE_{p_t}\Vert
f\Vert_{\Sigma_*^{-1}}^2
$ }&
{\tiny (e) Optimal transport}
\\
    \end{tabular}

    \caption{Illustration of the functional gradient $g(t,x)$ compared with optimal transport. (a)-(d) denotes the corresponding $Q$ of the regularized functional gradient. (a)-(b) are SVGD algorithms with linear and RBF kernel. Optimal transport denotes the direction of the shortest path towards $p_*$.
    }
    \label{fig:exp_vec}
\end{figure}

\begin{figure}[t]
    \centering
    \begin{tabular}{cccc}
 \includegraphics[trim={0.8cm .7cm 1cm 1cm},clip,width=0.22\textwidth]{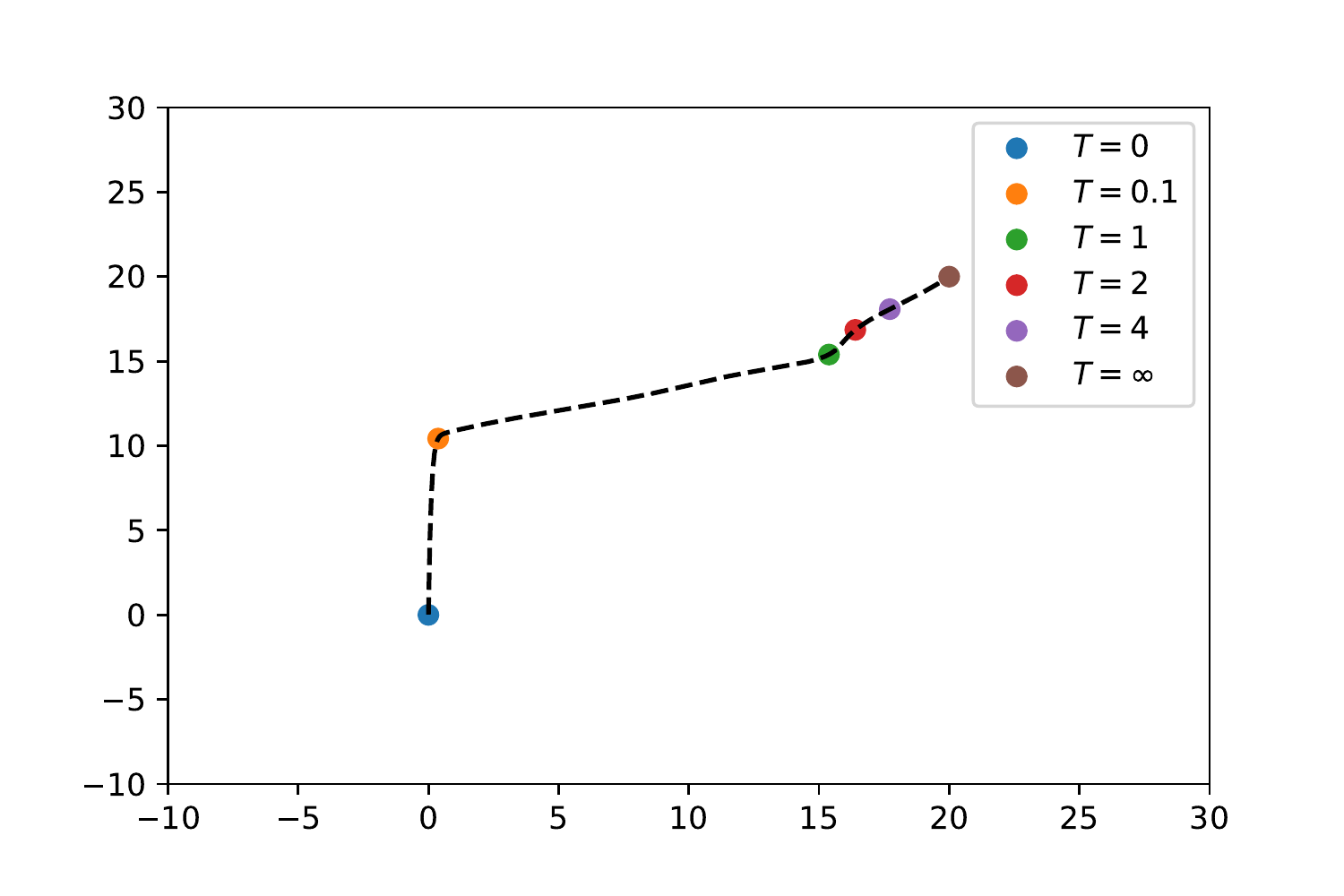}
        & \includegraphics[trim={0.8cm .7cm 1cm 1cm},clip,width=0.22\textwidth]{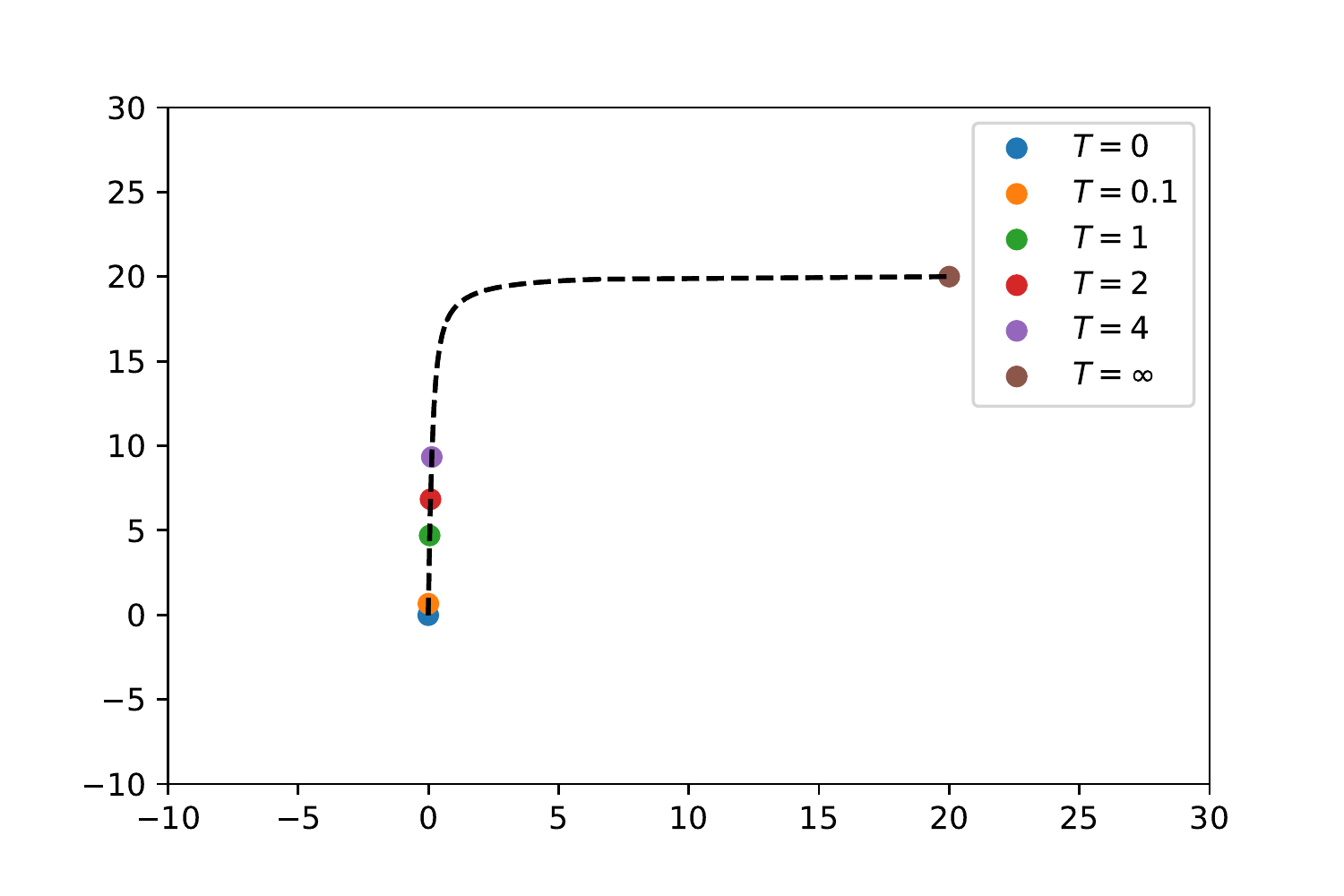}
        &
      \includegraphics[trim={0.8cm .7cm 1cm 1cm},clip,width=0.22\textwidth]{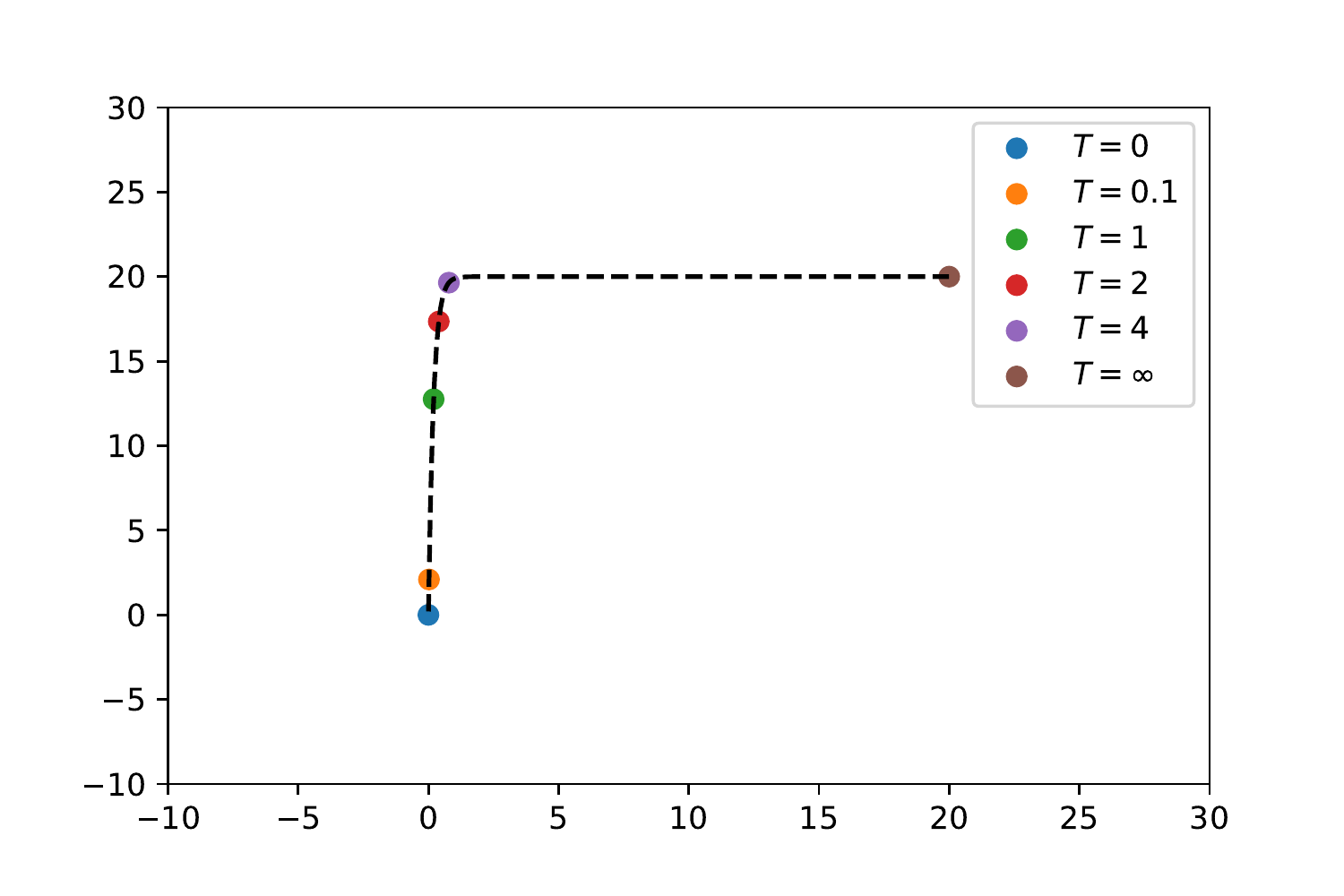}  
        &
    \includegraphics[trim={0.8cm .7cm 1cm 1cm},clip,width=0.22\textwidth]{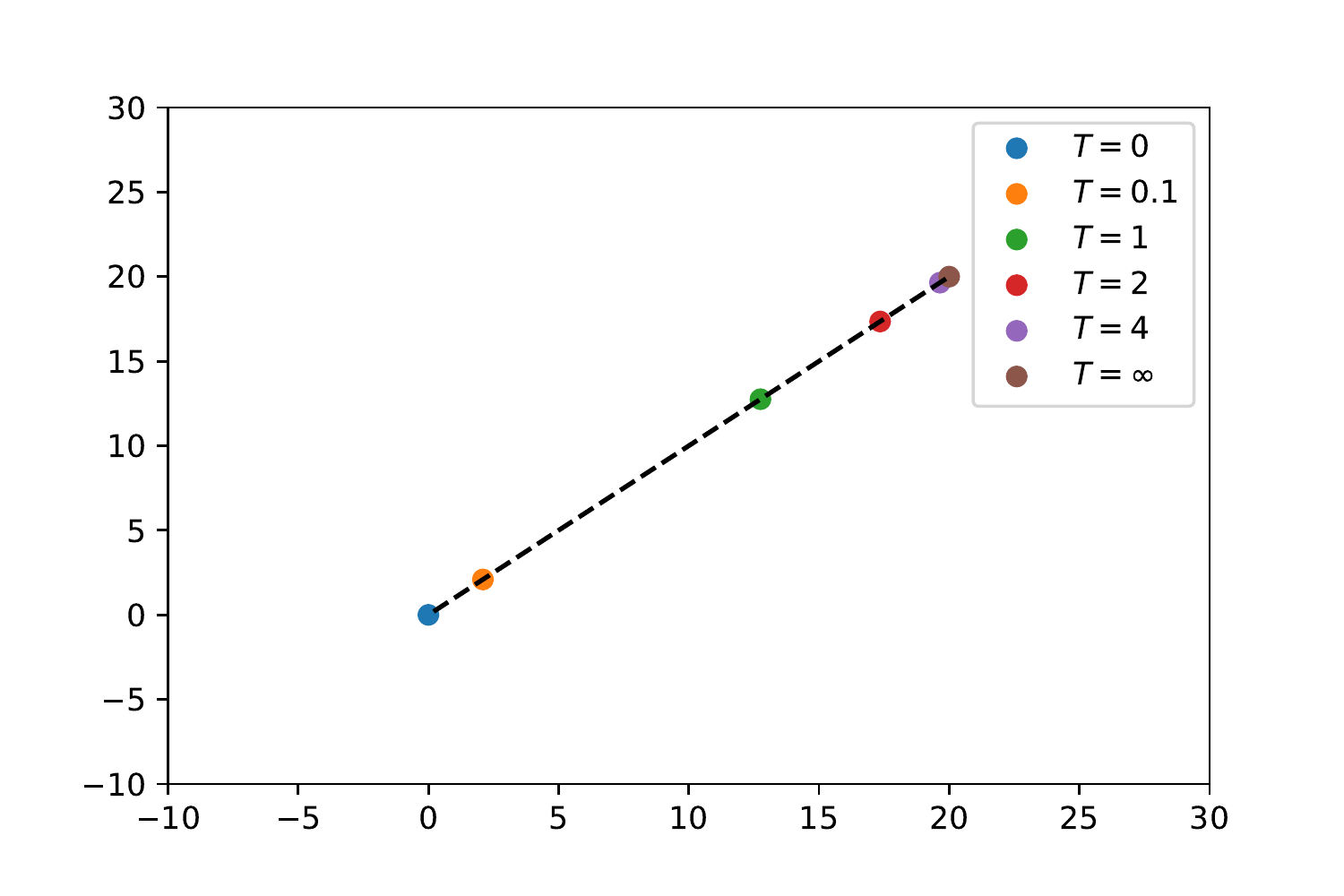}
\\
 \includegraphics[trim={0.8cm .7cm 1cm 1cm},clip,width=0.22\textwidth]{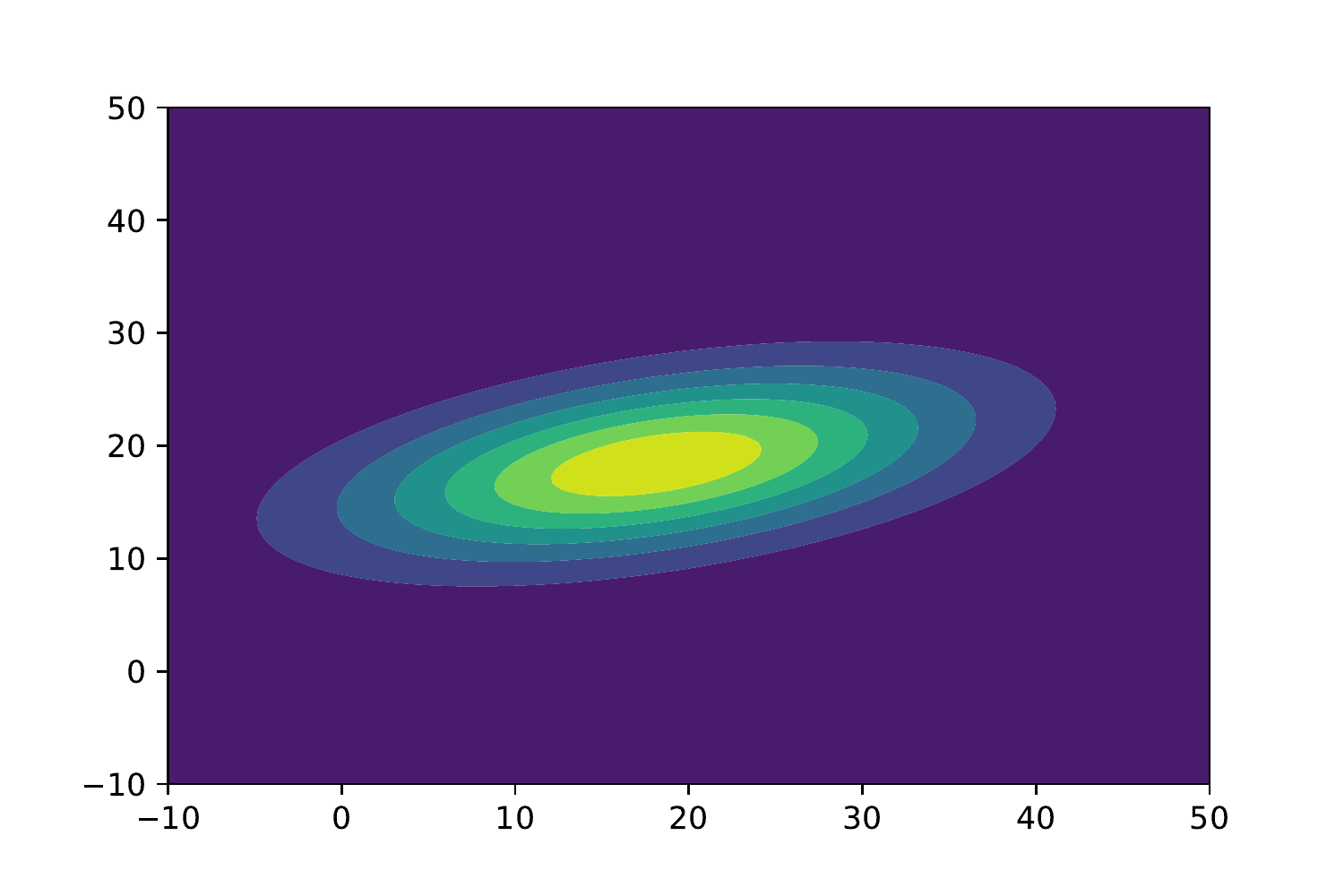}
        & \includegraphics[trim={0.8cm .7cm 1cm 1cm},clip,width=0.22\textwidth]{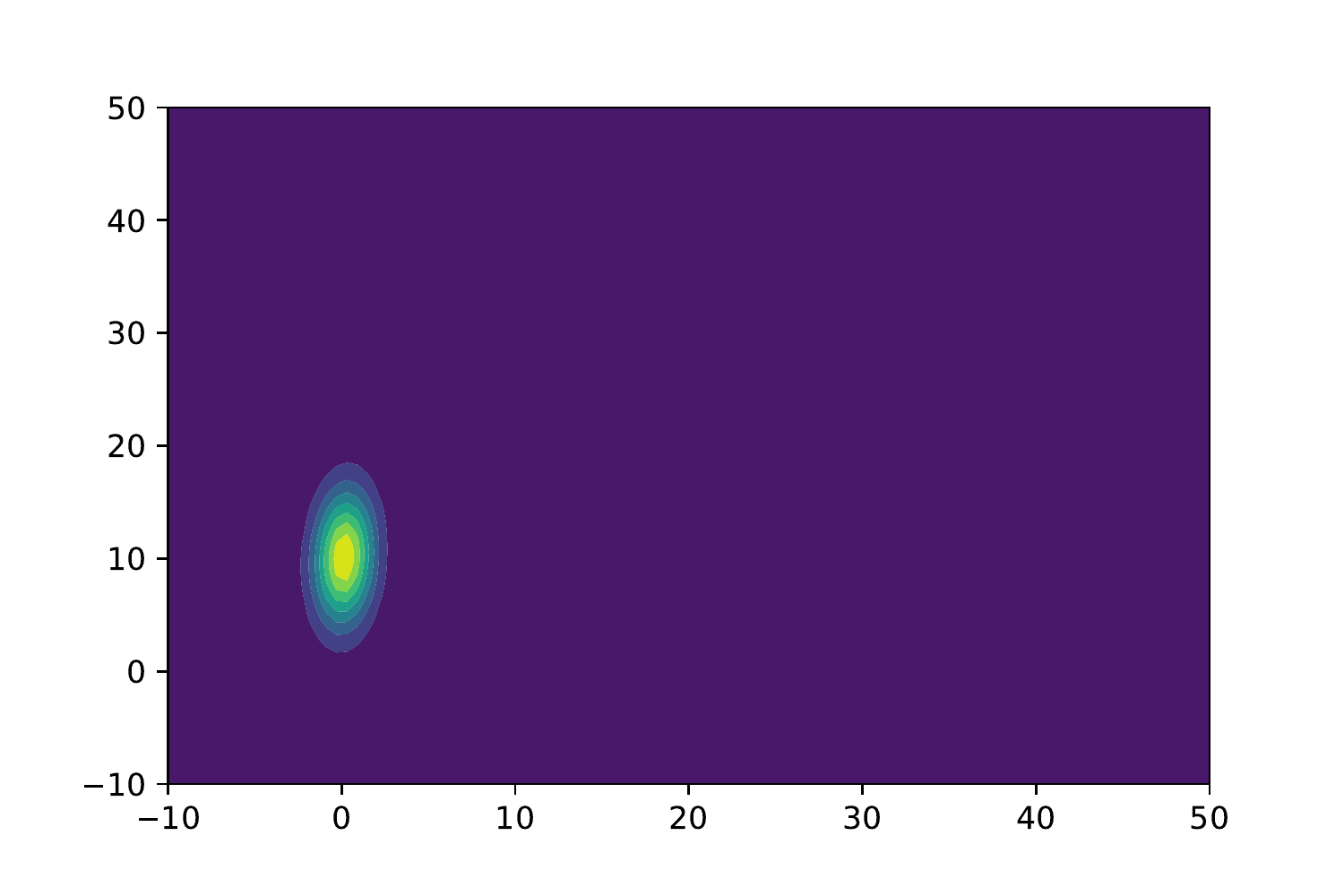}
        &
 \includegraphics[trim={0.8cm .7cm 1cm 1cm},clip,width=0.22\textwidth]{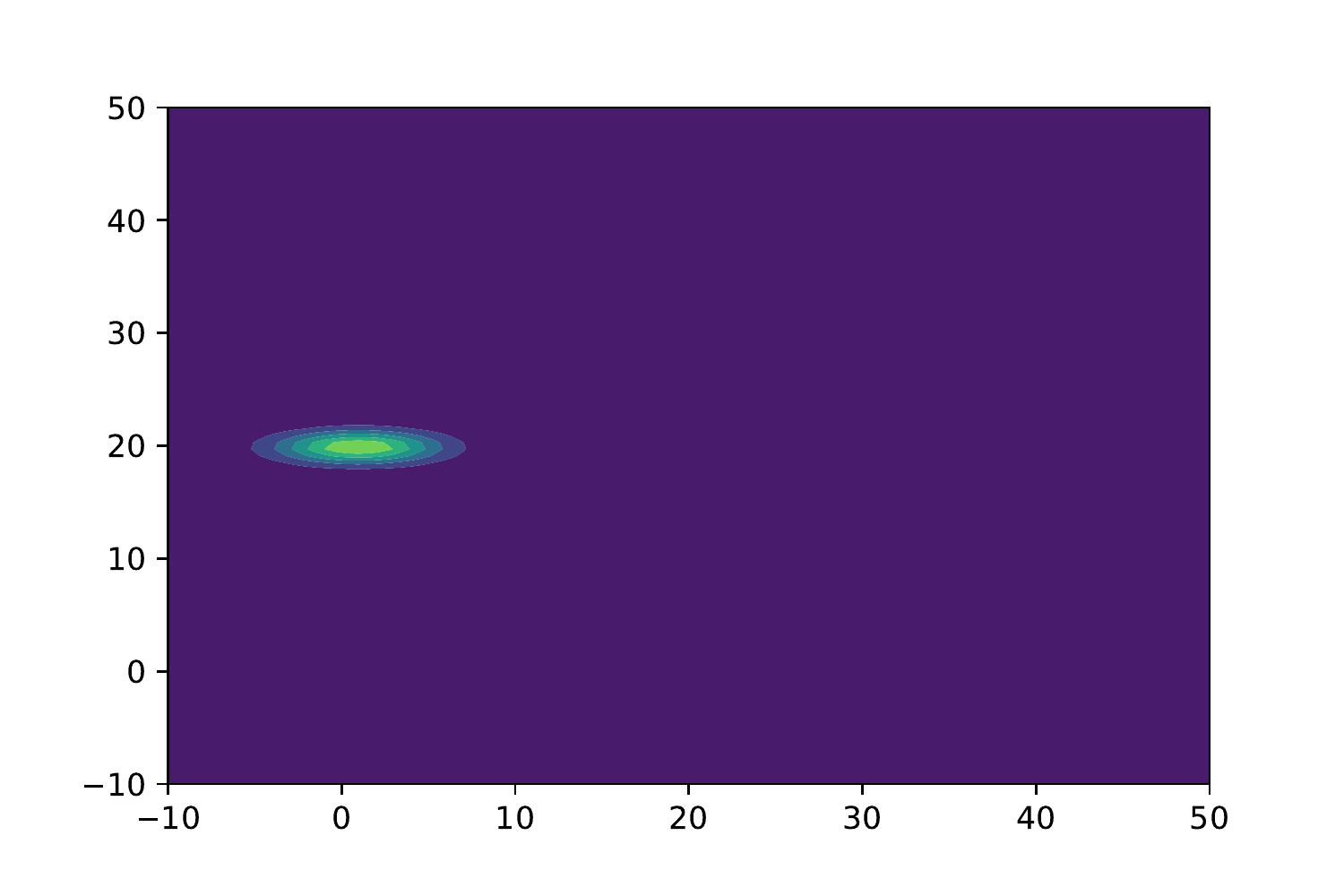}  
        &
        \includegraphics[trim={0.8cm .7cm 1cm 1cm},clip,width=0.22\textwidth]{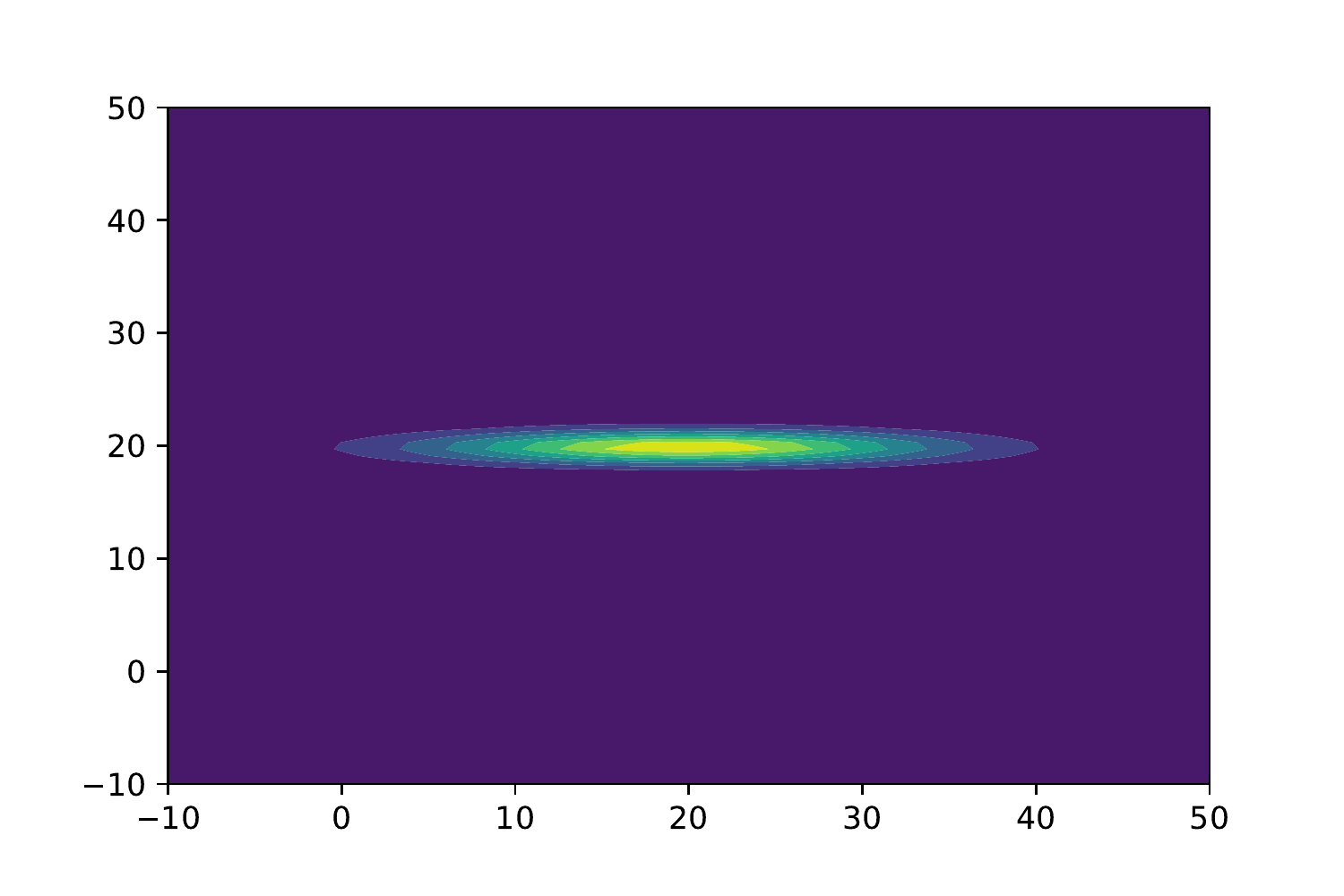}
\\
{\tiny (a) $Q(f)=\frac{1}{2}\Vert f\Vert_{\cH^d}^2$ (linear)}&
{\tiny (b) $Q(f)=\frac{1}{2}\Vert f\Vert_{\cH^d}^2$ (RBF)}&
{\tiny (c) $Q(f)=\frac{1}{2}\EE_{p_t}\Vert f\Vert^2$ }&
{\tiny (d) $Q(f)=\frac{1}{2}\EE_{p_t}\Vert f\Vert_{\Sigma_*^{-1}}^2$ }
\end{tabular}

    \caption{Evolution of particle distribution from $\mathcal N([0,0]^\top,I)$ to $\mathcal N([20,20]^\top,\diag(100,1))$ (first row: evolution of particle mean $\mu_t$; second row: particle distribution $p(5,x)$ at $t=5$)
    }
    \label{fig:sde}
\end{figure}



\begin{example}\label{exp:breg}
 Consider that $p(t,\cdot)$ is $\mathcal N(\mu_t,\Sigma_t)$, $p_*$
 is $\mathcal N(\mu_*,\Sigma_*)$.  We consider the SVGD algorithm with linear kernel, RBF kernel, and regularized functional gradient formulation with $Q(f)=\frac{1}{2}\EE_{p_t}\Vert f\Vert^2$, and $Q(f)=\frac{1}{2}\EE_{p_t}\Vert f\Vert^2_{\Sigma_*^{-1}}$. Starting with $\mathcal N(0,I)$, 
 Fig.~\ref{fig:exp_vec} plots the $g(0,x)$ with different $Q$ and the optimal transport direction;
 the path of $\mu_t$ and $p(5,x)$ are illustrated in Fig.~\ref{fig:sde}. The detailed mathematical derivations and analytical results are provided in Appendix \ref{app:exp1}.

\end{example}

Example \ref{exp:breg} shows the comparison of different regularizations for the functional gradient.
For RKHS norm, we consider the most commonly used kernels: linear and RBF. 
Fig. \ref{fig:exp_vec} shows the $g(0,x)$ of different regularizers: only $Q(f)=\frac{1}{2}\EE_{p_t}\Vert f\Vert^2_{\Sigma_*^{-1}}$ approximates the optimal transport direction, while other $g(0,x)$ deviates significantly. 
SVGD with linear kernel underestimates the gradient of large-variance directions;
SVGD with RBF kernel even suffers from gradient vanishing in low-density area. 
Fig. \ref{fig:sde} demonstrates the path of $\mu_t$ with different regularizers.
For linear kernel, due to the curl component ($\mu_*\mu_t^\top$ term in linear SVGD is non-symmetric; See Appendix for details),  $p(5,x)$ is rotated with an angle.
For RBF kernel, the functional gradient of SVGD is not linear, leading to slow convergence. 
The $L_2$ regularizer is suboptimal due to the ill-conditioned $\Sigma_*$.
We can see that $Q(f)=\frac{1}{2}\EE_{p_t}\Vert f\Vert^2_{\Sigma_*^{-1}}$ produces the optimal path for $\mu_t$ (the line segment between $\mu_0$ and $\mu_*$). 


\subsection{General Regularization}

Inspired by the Gaussian case, we consider the general form
\begin{equation}\label{eq:reg}
 Q({f}(x))= \frac{1}{2}\int p(t,x) \Vert f(x)\Vert_H^2 dx 
\end{equation}
where $H$ is a symmetric positive definite matrix. 
Proposition \ref{prop:equivalent} shows that 
with Eq.~\eqref{eq:reg}, the resulting functional gradient is the approximation of preconditioned Wasserstein gradient flow. It also implies the well-definedness and existence of $g(t,x)$ when $\cF$ is closed, since $\Vert\cdot\Vert_H$ is lower bounded by 0. 



\begin{proposition}\label{prop:equivalent} Consider the case that $Q({f}) = \frac{1}{2}\int p(t,x) \Vert {f}(x)\Vert^2_H dx $, where $H$ is a symmetric positive definite matrix.  Then the functional gradient defined by Eq.~\eqref{eq:update} is equivalent as 
\begin{equation}\label{eq:def_g}
    g(t,x) =\argmin_{f\in\cF}
\frac{1}{2}\int   p(t,x) \left\Vert  {f}(x) - H^{-1} \nabla \ln \frac{p_*(x)}{p(t,x)}\right\Vert_H^2
 dx   
\end{equation}
\end{proposition}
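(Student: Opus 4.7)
The plan is to show that the objective in Eq.~\eqref{eq:update} and the quadratic objective in Eq.~\eqref{eq:def_g} differ only by an additive constant that does not depend on $f$, so they share the same minimizer. The only nontrivial manipulation is an integration by parts to rewrite the divergence term, after which everything is a direct algebraic matching.

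\textbf{Step 1: integration by parts on the divergence term.} First I would apply integration by parts (assuming $p(t,x) f(x) \to 0$ sufficiently fast at infinity, which is a mild condition on $\cF$ and on the tails of $p_t$) to obtain
\[
\int p(t,x)\, \nabla\cdot f(x)\, dx \;=\; -\int f(x)^\top \nabla p(t,x)\, dx \;=\; -\int p(t,x)\, f(x)^\top \nabla \ln p(t,x)\, dx .
\]
Substituting this into the bracket in Eq.~\eqref{eq:update} with $Q(f)=\tfrac{1}{2}\int p(t,x)\|f(x)\|_H^2 dx$, the objective becomes
\[
J(f) \;=\; -\int p(t,x)\, f(x)^\top \nabla \ln\frac{p_*(x)}{p(t,x)}\, dx \;+\; \frac{1}{2}\int p(t,x)\, \|f(x)\|_H^2\, dx .
\]

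\textbf{Step 2: complete the square in the $H$-inner product.} For a symmetric positive definite $H$, the identity
\[
\frac{1}{2}\|f - H^{-1} u\|_H^2 \;=\; \frac{1}{2}\|f\|_H^2 \;-\; f^\top u \;+\; \frac{1}{2} u^\top H^{-1} u
\]
holds pointwise (using $H\cdot H^{-1}u = u$). Taking $u(x) = \nabla\ln\frac{p_*(x)}{p(t,x)}$ and integrating against $p(t,x)$ gives
\[
\frac{1}{2}\!\int p(t,x)\Big\|f(x) - H^{-1}\nabla\ln\tfrac{p_*(x)}{p(t,x)}\Big\|_H^2 dx \;=\; J(f) \;+\; C(t),
\]
where $C(t) = \tfrac{1}{2}\int p(t,x)\, \big(\nabla\ln\tfrac{p_*(x)}{p(t,x)}\big)^\top H^{-1} \big(\nabla\ln\tfrac{p_*(x)}{p(t,x)}\big)\, dx$ depends only on $p_t$ and $p_*$, not on $f$.

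\textbf{Step 3: conclude.} Since the two objectives differ by the $f$-independent quantity $C(t)$, their arg-mins over $f\in\cF$ coincide, yielding Eq.~\eqref{eq:def_g}. The main (and essentially only) obstacle is justifying the integration by parts in Step 1; I would either assume $\cF$ consists of functions for which $p(t,x)f(x)$ vanishes at infinity, or restrict to a dense subclass (e.g., Schwartz or compactly supported functions) and extend by continuity, which is the standard convention used already in the SVGD literature invoked earlier in the paper. Positive definiteness of $H$ is used only to ensure $H^{-1}$ exists in Step~2 and that the squared $H$-norm is non-negative, consistent with the preceding remark on well-definedness.
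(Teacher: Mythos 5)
Your proposal is correct and takes essentially the same route as the paper's proof: integration by parts folds the divergence term into $-f(x)^\top\nabla\ln\frac{p_*(x)}{p(t,x)}$, and then completing the square in the $H$-inner product adds only the $f$-independent constant $\frac{1}{2}\EE_{p_t}\bigl\Vert H^{-1}\nabla\ln\frac{p_*}{p_t}\bigr\Vert_H^2$. The only difference is that you state explicitly the tail-decay condition needed to justify the integration by parts, which the paper leaves implicit (it is essentially the tail regularity required of $\cF$ in assumption $[\mathbf{A_1}]$).
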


\noindent\textbf{Remark.}  Our regularizer is similar to the
Bregman divergence in mirror descent. 
We can further extend $Q(\cdot)$ with a convex function $h(\cdot):\RR\rightarrow [0,\infty)$, where the regularizer is defined by $Q({f}(x))= \EE_{p_t} h({f}(x))$. We can adapt more complex geometry in our framework with proper $h(\cdot)$.



\subsection{Convergence analysis}\label{conv}

\paragraph{Equilibrium condition.}  To provide the theoretical guarantees, we show that the stationary distribution of our $g(t,x)$ update is $p_*$. Meanwhile, the evolution of KL-divergence is well-defined (without the explosion of functional gradient) and descending. We list the regularity conditions below.

\noindent{[$\mathbf{A_1}$]} (Regular function class)
The discrepancy induced by function class $\mathcal F$ is positive-definite: for any $q\neq p$, there exists $f\in\cF$ such that
$
\EE_q[\nabla \cdot f(x)+f(x)^\top \nabla \ln p(x)] >0
$. For $f\in\cF$ and $c\in\RR$, $cf\in\cF$, and $\cF$ is closed. The tail of $f$ is regular: $\lim_{\Vert x\Vert \rightarrow\infty} f(\theta,x) p_*(x) = 0$ for $f\in\mathcal{F}$.

\noindent{[$\mathbf{A_2}$]} ($L$-Smoothness) For any $x\in\RR^d$, $p_*(x)>0$ and $p(t,x)>0$ are $L$-smooth densities, \emph{i.e.}, $\Vert \nabla \ln p(x)-\nabla \ln p(y)\Vert\leq L\Vert x-y\Vert$, with $\EE_p\Vert x\Vert^2<\infty$.

Particularly, [$\mathbf{A_1}$] is similar to the positive definite kernel in RKHS, which guarantees the decrease of KL divergence. 
[$\mathbf{A_2}$] is designed to make the gradient well-defined: RHS of Eq.~\eqref{eq:evo} is finite.



\begin{proposition}\label{prop:equi}
Under $[\mathbf{A_1}]$, $[\mathbf{A_2}]$, when we update $X_t$ as Eq. (\ref{eq:def_g}), we have
$
-\infty <\frac{d D_{\mathrm{KL}}}{d t} < 0
$
 for all $p(t,x)\neq p_*(x)$.
\ie, 
$
g(t,x) = 0 \text{  if and only if $p(t,x)=p_*(x)$.}
$

\end{proposition}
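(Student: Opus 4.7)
The plan is to combine the variational definition of $g(t,x)$ in Eq.~\eqref{eq:update} with the KL evolution identity in Eq.~\eqref{eq:evo}. Setting
\begin{equation*}
J(f) \;=\; -\int p(t,x)\bigl[\nabla\cdot f(x) + f(x)^\top \nabla\ln p_*(x)\bigr]\,dx \;+\; \tfrac{1}{2}\int p(t,x)\Vert f(x)\Vert_H^2\,dx,
\end{equation*}
we have $g = \argmin_{f\in\cF} J(f)$, and Eq.~\eqref{eq:evo} rearranges into the key identity $\frac{d D_{\mathrm{KL}}}{dt} = J(g) - \tfrac{1}{2}\EE_{p_t}\Vert g\Vert_H^2$. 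Since $[\mathbf{A_1}]$ implies $0 \in \cF$ (take $c = 0$ in the scaling-closure property), and $J(0) = 0$, the optimality of $g$ yields $J(g) \leq 0$, hence $\frac{d D_{\mathrm{KL}}}{dt} \leq -\tfrac{1}{2}\EE_{p_t}\Vert g\Vert_H^2 \leq 0$.

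Next, I would strengthen this to strict negativity whenever $p_t \neq p_*$. By the positive-definiteness clause of $[\mathbf{A_1}]$ there is an $f_0 \in \cF$ with $\alpha := \EE_{p_t}[\nabla\cdot f_0 + f_0^\top \nabla\ln p_*] > 0$, and scaling-closure gives $c f_0 \in \cF$ for every $c$. A direct calculation shows $J(c f_0) = -c\,\alpha + \tfrac{c^2}{2}\EE_{p_t}\Vert f_0\Vert_H^2 < 0$ for all sufficiently small $c > 0$, so $J(g) \leq J(cf_0) < 0$, and the key identity forces $\frac{d D_{\mathrm{KL}}}{dt} < -\tfrac{1}{2}\EE_{p_t}\Vert g\Vert_H^2 \leq 0$. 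The same scaling argument read contrapositively gives $g \equiv 0 \Rightarrow p_t = p_*$. Conversely, when $p_t = p_*$, integration by parts using the tail regularity from $[\mathbf{A_1}]$ yields the Stein-type identity $\EE_{p_*}[\nabla\cdot f + f^\top \nabla\ln p_*] = 0$, so $J(f) = \tfrac{1}{2}\EE_{p_*}\Vert f\Vert_H^2 \geq 0$ and is uniquely minimized at $f \equiv 0$ by positive-definiteness of $H$, giving $g \equiv 0$.

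For the lower bound $\frac{d D_{\mathrm{KL}}}{dt} > -\infty$, I would invoke Prop.~\ref{prop:equivalent}: since $0 \in \cF$, the minimality of $g$ in Eq.~\eqref{eq:def_g} gives $\EE_{p_t}\Vert g - H^{-1}\nabla\ln(p_*/p_t)\Vert_H^2 \leq \EE_{p_t}\Vert H^{-1}\nabla\ln(p_*/p_t)\Vert_H^2$, and a triangle inequality then controls $\EE_{p_t}\Vert g\Vert_H^2$ by $\EE_{p_t}\Vert \nabla\ln(p_*/p_t)\Vert_{H^{-1}}^2$. Under $[\mathbf{A_2}]$, $L$-smoothness of $\ln p_*$ and $\ln p_t$ gives linear growth of $\Vert\nabla\ln(p_*/p_t)\Vert$ in $\Vert x\Vert$, which is square-integrable against $p_t$ because $\EE_{p_t}\Vert x\Vert^2 < \infty$. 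A further integration by parts rewrites $\frac{d D_{\mathrm{KL}}}{dt} = -\EE_{p_t}[g^\top \nabla\ln(p_*/p_t)]$, and an $H$-weighted Cauchy--Schwarz step $|g^\top a| \leq \Vert g\Vert_H \Vert H^{-1}a\Vert_H$ closes the argument.

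The main obstacle, I expect, is carefully discharging the boundary terms at infinity in all of the integration-by-parts manipulations (both in the Stein identity for the $p_t = p_*$ direction and in the reduction $\EE_{p_t}[\nabla\cdot g + g^\top \nabla\ln p_*] = \EE_{p_t}[g^\top \nabla\ln(p_*/p_t)]$ used for the upper and lower bounds). This is precisely where the tail regularity condition in $[\mathbf{A_1}]$ must mesh with the smoothness and second-moment assumptions in $[\mathbf{A_2}]$; all other steps reduce to algebraic manipulation of the quadratic functional $J(f)$ and a one-dimensional scaling argument in $c$.
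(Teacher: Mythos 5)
Your proposal is correct and follows essentially the same route as the paper's proof: the strict descent $\frac{d D_{\mathrm{KL}}}{dt}<0$ comes from the same one-dimensional scaling argument $c\mapsto J(cf_0)=-c\alpha+\tfrac{c^2}{2}\EE_{p_t}\Vert f_0\Vert_H^2$ combined with minimality of $g$, and the finiteness bound comes from controlling $\EE_{p_t}\Vert g\Vert^2$ by $\EE_{p_t}\Vert\nabla\ln(p_*/p_t)\Vert^2$ (the paper via Young's inequality and a Vempala--Wibisono-type lemma, you via the least-squares form of Prop.~\ref{prop:equivalent} and a triangle inequality --- an immaterial difference). Your explicit Stein-identity argument for the converse direction ($p_t=p_*\Rightarrow g\equiv 0$) is a small addition that the paper leaves implicit.
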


 Proposition \ref{prop:equi} shows that the continuous dynamics of our $g(t,x)$ is well-defined and 
 the KL divergence along the dynamics is descending. The only stationary distribution for $p(t,x)$ is $p_*(x)$.

\noindent\textbf{Convergence rate.} \quad The convergence rate of our framework mainly depends on (1) the capacity of function class $\cF$; (2) the complexity of $p_*$. In this section, we analyze that when the approximation error is small and the target $p_*$ is log-Sobolev, the KL divergence converges linearly.


\noindent{[$\mathbf{A_3}$]} ($\epsilon$-approximation) For any $t>0$, there exists $f_t(x)\in\cF$ and $\epsilon<1$, such that
{\footnotesize $$
\int p(t,x)\left\Vert
f_t(x) - 
H^{-1}\nabla\ln \frac{p(t,x)}{p_*(x)}\right\Vert_H^2 dx\leq \epsilon \int p(t,x)\left\Vert
\nabla\ln \frac{p(t,x)}{p_*(x)}\right\Vert_{H^{-1}}^2 dx
$$}

\noindent{[$\mathbf{A_4}$]}
 The target $p_*$ satisfies $\mu$-log-Sobolev inequality ($\mu>0$): For any differentiable function $g$, we have
 $
    \begin{aligned}
    \mathbb{E}_{p_*}\left[g^2\ln g^2\right]-\mathbb{E}_{p_*}\left[g^2\right]\ln \mathbb{E}_{p_*}\left[g^2\right]\le \frac{2}{\mu} \mathbb{E}_{p_*}\left[\left\|\nabla g\right\|^2\right].
    \end{aligned}
$

Specifically, [$\mathbf{A_3}$] is the error control of gradient approximation. With universal approximation theorem \citep{hornik1989multilayer}, any continuous function can be estimated by neural networks, which indicates the existence of such a function class. [$\mathbf{A_4}$] is a common assumption in sampling literature. It is more general than strongly concave assumption \citep{vempala2019rapid}. 

\begin{theorem}\label{the:rate}
Under \noindent{$[\mathbf{A_1}]$}-\noindent{$[\mathbf{A_4}]$}, for any $t>0$, we assume that the largest eigenvalue, $\lambda_{\max}(H)=m$. Then we have $\frac{d D_{\mathrm{KL}}(t)}{dt}\leq -\frac{1-\epsilon}{2}\mathbb{E}_{p_t} \left\|\nabla \ln \frac{p_t(x)}{p_*(x)}\right\|^2_{H^{-1}}
$ and
$
D_{\mathrm{KL}}(t) \leq \exp(-(1-\epsilon)\mu t/m) D_{\mathrm{KL}}(0).
$

\end{theorem}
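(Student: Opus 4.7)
The plan is to chain four ingredients: the evolution equation for the KL divergence, the $\epsilon$-approximation property of the minimizer of Eq.~\eqref{eq:def_g}, a Young-type inequality in the $H$-inner product that converts approximation error into a descent bound, and the log-Sobolev inequality combined with Gr\"onwall's lemma.

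First, I would rewrite the KL time derivative from Eq.~\eqref{eq:evo}. Integration by parts (valid under the regularity in $[\mathbf{A_1}]$--$[\mathbf{A_2}]$) gives $\int p\,\nabla\!\cdot\!g\,dx = -\int p\,g^\top\nabla\ln p\,dx$, so
\[
\frac{d D_{\mathrm{KL}}(t)}{d t} \;=\; -\int p(t,x)\, g(t,x)^\top \nabla\ln\frac{p_*(x)}{p(t,x)}\,dx.
\]
Setting $v(t,x):= H^{-1}\nabla\ln(p_*/p_t)$ so that $\nabla\ln(p_*/p_t) = Hv$, this derivative is exactly the $p_t$-weighted $H$-inner product $\int p_t\, g^\top H v\,dx$, which is the natural object to attack with the approximation hypothesis.

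Next, I would invoke $[\mathbf{A_3}]$ on the minimizer $g$ of Eq.~\eqref{eq:def_g}: since $g$ is optimal, the bound transfers to it directly (the sign flip between $\nabla\ln(p/p_*)$ and $\nabla\ln(p_*/p)$ is absorbed by the squared norm, and $\Vert H^{-1}u\Vert_H^2 = \Vert u\Vert_{H^{-1}}^2$ reconciles the two formulations), giving
\[
\int p(t,x)\,\Vert g - v\Vert_H^2\,dx \;\leq\; \epsilon \int p(t,x)\,\Vert \nabla\ln(p_t/p_*)\Vert_{H^{-1}}^2\,dx.
\]
The key algebraic step is the pointwise decomposition $g^\top H v = \Vert v\Vert_H^2 + (g-v)^\top H v$ combined with Young's inequality $|(g-v)^\top H v|\leq \tfrac12\Vert g-v\Vert_H^2 + \tfrac12\Vert v\Vert_H^2$, yielding $g^\top H v \geq \tfrac12(\Vert v\Vert_H^2 - \Vert g-v\Vert_H^2)$. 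Integrating against $p_t$ and plugging in the above $\epsilon$-bound gives the first claim
\[
\frac{d D_{\mathrm{KL}}(t)}{d t} \;\leq\; -\frac{1-\epsilon}{2}\,\mathbb{E}_{p_t}\!\left\Vert\nabla\ln\frac{p_t(x)}{p_*(x)}\right\Vert_{H^{-1}}^2.
\]

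For the second claim, $\lambda_{\max}(H)=m$ gives $H^{-1}\succeq m^{-1}I$, so $\Vert\cdot\Vert_{H^{-1}}^2 \geq m^{-1}\Vert\cdot\Vert^2$. Combining with the Fisher-information form of $[\mathbf{A_4}]$ (applied with $g^2 = p_t/p_*$), namely $\mathbb{E}_{p_t}\Vert\nabla\ln(p_t/p_*)\Vert^2 \geq 2\mu D_{\mathrm{KL}}(t)$, yields $\frac{d D_{\mathrm{KL}}}{d t}\leq -\frac{(1-\epsilon)\mu}{m}D_{\mathrm{KL}}(t)$, and Gr\"onwall's lemma gives the exponential decay. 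The main subtlety is extracting the sharp constant $(1-\epsilon)/2$ rather than the weaker $(1-\sqrt{\epsilon})$ that a plain Cauchy--Schwarz would produce -- this is precisely why the Young splitting inside the $H$-inner product is needed. A secondary point is the passage from the log-Sobolev inequality to the Fisher-KL comparison, which is a standard consequence that requires only the smoothness and integrability supplied by $[\mathbf{A_2}]$.
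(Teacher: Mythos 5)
Your proposal is correct and follows essentially the same route as the paper's proof: the same decomposition of $g$ around $H^{-1}\nabla\ln(p_*/p_t)$, the same Young-type splitting in the $H$-inner product to get the sharp $(1-\epsilon)/2$ constant (the paper phrases it as Cauchy--Schwarz followed by $ab\le\tfrac12 a^2+\tfrac12 b^2$, which is the integrated form of your pointwise bound), and the same log-Sobolev plus $\lambda_{\max}(H)=m$ plus Gr\"onwall finish. Your remark that the $\epsilon$-approximation bound transfers to the minimizer $g$ via optimality (handling the sign flip in $[\mathbf{A_3}]$) is a small point the paper glosses over, but it does not change the argument.
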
 

Theorem \ref{the:rate} shows that when $p_*$ is log-Sobolev, our algorithm achieves linear convergence rate 
with a function class powerful enough to approximate the preconditioned Wasserstein gradient. Moreover, considering the discrete algorithm, we have to make sure that the step size is proper, \emph{i.e.}, $\Vert H^{-1}\nabla^2\ln p_*(x)\Vert \leq c $, for some constant $c>0$. Thus, the preconditioned $H = -c\nabla^2 \ln p_*$ is better than plain one $H = cLI$, since $(-\nabla^2 \ln p_*)^{-1}\succeq L^{-1}I$. For better understanding, we have provided a Gaussian case discretized proof in Appendix 
\ref{sec:gauss_dis} to illustrate this phenomenon. 



\section{PFG: Preconditioned Functional Gradient Flow}

\begin{algorithm}[H]
\small
\DontPrintSemicolon
\KwInput{ Unnormalized target distribution $p_*(x) =e^{-U(x)}$, 
$f_\theta(x):\RR^d\rightarrow\RR^d$, 
 initial particles (parameters) $\{x^i_0\}_{i=1}^n$, $\theta_0$, 
iteration parameter $T,T'$, step size $\eta,\eta'$, regularization function $h(\cdot)$.
}
\For{$t=1,\dots,T$}{

Assign $\theta^0_t = \theta_{t-1} $;

\For{$t'=1,\cdots,T'$}{
 {Compute $\hat{L}(\theta) = \frac{1}{n}\sum_{i=1}^n \left( h(f_\theta(x_t^i)) + f_\theta(x_t^i)\cdot \nabla U(x_t^i) - \nabla\cdot f_\theta(x_t^i)\right) $  }
 
 Update $\theta_t^{t'} = \theta_t^{t'-1} - \eta'\nabla \hat{L}(\theta_t^{t'-1})$;
}

Assign $\theta_t = \theta_{t}^{T_1} $ and update particles
$x^i_t=x^i_t + \eta  \left(  f_{\theta_t}(x_t^i)\right)$ for all $i=1,\cdots,n$;
}

\KwReturn{Optimized particles $\{x^i_T\}_{i=1}^n$}
\caption{PFG: Preconditioned Functional Gradient Flow}
\label{alg}
\end{algorithm}

\vspace{-0.2cm}
\subsection{Algorithm}

We will realize our algorithm with parametric $f_\theta$ (such as neural networks) and discretize the update.

\noindent\textbf{Parametric Function Class.} We can let 
$
\cF=\{f_\theta(x):\theta\in\Theta  \}$
and apply $g(t,x) = {f}_{\hat{\theta}_t}(x) $, such that
\begin{equation}\label{eq:general}
  \hat{\theta}_t=\argmin_{\theta\in\Theta} \left[
\int   p(t,x) [-\nabla \cdot f_\theta(x) - f_\theta(x)^\top
\nabla \ln {p_*(x)} + \frac{1}{2}\Vert{f}_\theta(x)\Vert_{H}^2 ] d x 
 \right],  
\end{equation}

where $H$ is a symmetric positive definite matrix estimated at time $t$.   Eq.~\eqref{eq:general} is a direct result from Eq.~\eqref{eq:update} and \eqref{eq:reg}. The  parametric function class allows ${f}_\theta$ to be optimized by iterative algorithms. 

\noindent\textbf{Choice of $H$.} Considering the posterior mean trajectory, it is equivalent to the conventional optimization, so that $-\nabla^2\ln p_*$ is ideal (Newton's method) to implement discrete algorithms.
We use diagonal Fisher information estimators for efficient computation as Adagrad
\citep{duchi2011adaptive}. We approximate the preconditioner $H$ for all particles at each time step $t$ by moving averaging.

\noindent\textbf{Discrete update.} We can present our algorithm by discretizing Eq.~\eqref{eq:update} and \eqref{eq:general}. Given $X_0\sim p_0$, we update $X_k$ as $
X_{k+1} = X_k + \eta {f}_{\hat{\theta}_k}(X_k),
$
where 
$  \hat{\theta}_k
$ is obtained by Eq.~\eqref{eq:general} with (stochastic) gradient descent. The integral over $p(k,x)$ is estimated by particle samples.   Full procedure is presented in Alg. \ref{alg}, where the regularizer $h$ is $\frac{1}{2}\Vert\cdot\Vert_H^2$ by default.

\subsection{Comparison with SVGD}
\subsubsection{SVGD from a Functional Gradient View}\label{sec:svgd_reg}

For simplicity, we prove the case under finite-dimensional feature map\footnote{Infinite-dimensional version is provided in Appendix \ref{sec:inf}.}, $\psi(x):\RR^d\rightarrow\RR^h$.
We assume that 
$
\cF= \{ W \psi(x): W\in\RR^{d\times h}\} ,
$
and let kernel
$
k(x,y) = \psi(x)^\top \psi(y) ,
$  $Q(f)=\frac{1}{2}\Vert f\Vert_{\cH^d}^2=\frac{1}{2}\Vert W\Vert_F^2$, where RKHS norm is the Frobenius norm of $W$.
The solution is defined by
\begin{equation}
    \hat{W}_t = 
\argmin_{W\in\RR^{d\times h}} 
-\int   p(t,x) \trace[W \nabla \psi(x) +  W \psi(x)
\nabla \ln {p_*(x)}^\top] d x + \frac{1}{2}\Vert W\Vert_F^2,
\end{equation}
which gives
$
\hat{W}_t = 
\int   p(t,x') [\nabla \psi(x')^\top +
 \nabla \ln {p_*(x')} \psi(x')^\top] d x' .
$
This implies that 
\begin{equation}
g(t,x)=\hat{W}_t \psi(x)
= \int   p(t,y) [\nabla_{y} k(y,x)+
\nabla \ln {p_*(y)} k(y,x)] d y,
\end{equation}
which is equivalent to SVGD. For linear function classes, such as RKHS, $Q$ can be directly applied to the parameters, such as $W$ here. The regularization of
SVGD is $\frac{1}{2}\Vert \cdot\Vert^2_F$ (the norm defined in RKHS).  
For non-linear function classes, such as neural networks, the RKHS norm cannot be defined.


\subsubsection{Limitations of SVGD}\label{sec:limit}

\noindent \textbf{Kernel function class.} As in Section \ref{sec:svgd_reg}, RKHS only contains linear combination of the feature map functions, which suffers from curse of dimensionality \citep{geenens2011curse}. On the other hand, some non-linear function classes, such as neural network,  performs well on high dimensional data \citep{lecun2015deep}. The extension to non-linear function classes is needed for better performance.

\noindent \textbf{Gradient preconditioning in SVGD.} In Example \ref{exp:breg}, when $-\nabla^2 \ln p_*$ is ill-conditioned, PFG algorithm follows the shortest path from $\mu_0$ to $\mu_*$. 
Although SVGD can implement preconditioning matrices as \citep{wang2019stein}, due to the curl component and time-dependent Jacobian of $d\mu_t/dt$, any symmetric matrix cannot provide optimal preconditioning (detailed derivation in Appendix).

\noindent \textbf{Suboptimal convergence rate.} 
For log-Sobolev $p_*$, 
SVGD with commonly used bounded smoothing kernels (such as RBF kernel) cannot reach the linear convergence rate \citep{duncan2019geometry} and the explicit KL convergence rate is unknown yet.
Meanwhile, the Wasserstein gradient flow converges linearly. When the function class is sufficiently large, PFG converges provably faster than SVGD. 

\noindent \textbf{Computational cost.} For SVGD, main computation cost comes from the kernel matrix: with $n$ particles, we need $O(n^2)$ memory and computation. Our algorithm uses an iterative approximation to optimize $g(t,x)$, whose memory cost is independent of $n$ and computational cost is $O(n)$ \citep{bertsekas2011incremental}. 
The repulsive force between particles is achieved by $\nabla\cdot f$ operator on each particle.


\section{Experiment}\label{exp}

To validate the effectiveness of our algorithm, we have conducted experiments on both synthetic and real datasets. Without special declarations, we use parametric two-layer neural networks with Sigmoid activation as our function class. To approximate $H$ in real datasets, we use the approximated diagonal Hessian matrix $\hat H$, and choose $H = \hat H^\alpha$, where $\alpha\in \{0,0.1,0.2,0.5,1\}$;
the inner loop $T'$ of PFG is chosen from $\{1,2,5,10\}$, the hidden layer size is chosen from $\{32,64,128,256,512\}$. The parameters are chosen by validation.
 More detailed settings are provided in the Appendix.
 



\begin{wrapfigure}{r}{0.6\textwidth}
    \centering
    \begin{tabular}{ccc}
  \includegraphics[trim={0.2cm .2cm .2cm .2cm},clip,width=0.18\textwidth]{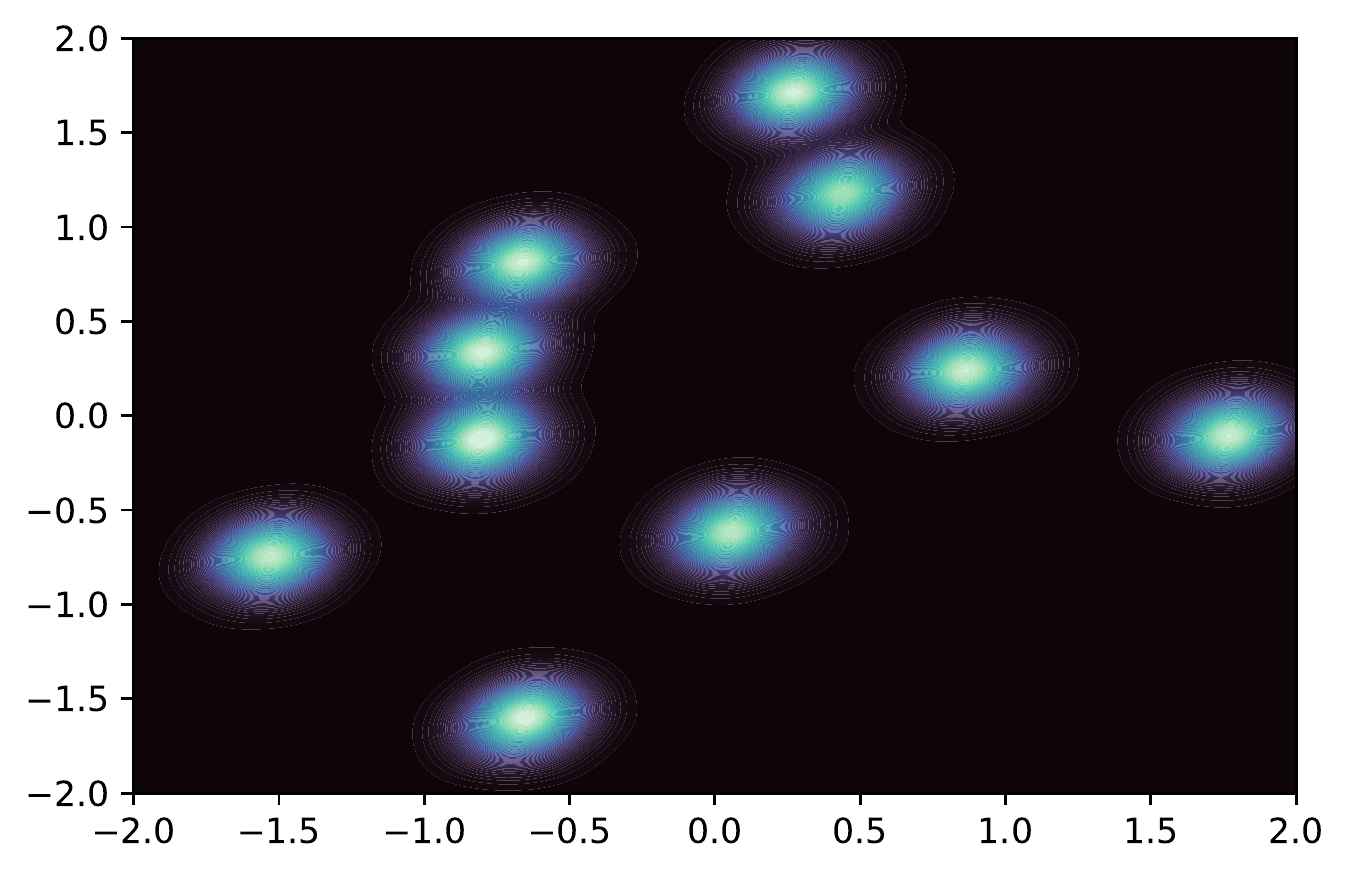}  
        & \includegraphics[trim={0.2cm .2cm .2cm .2cm},clip,width=0.18\textwidth]{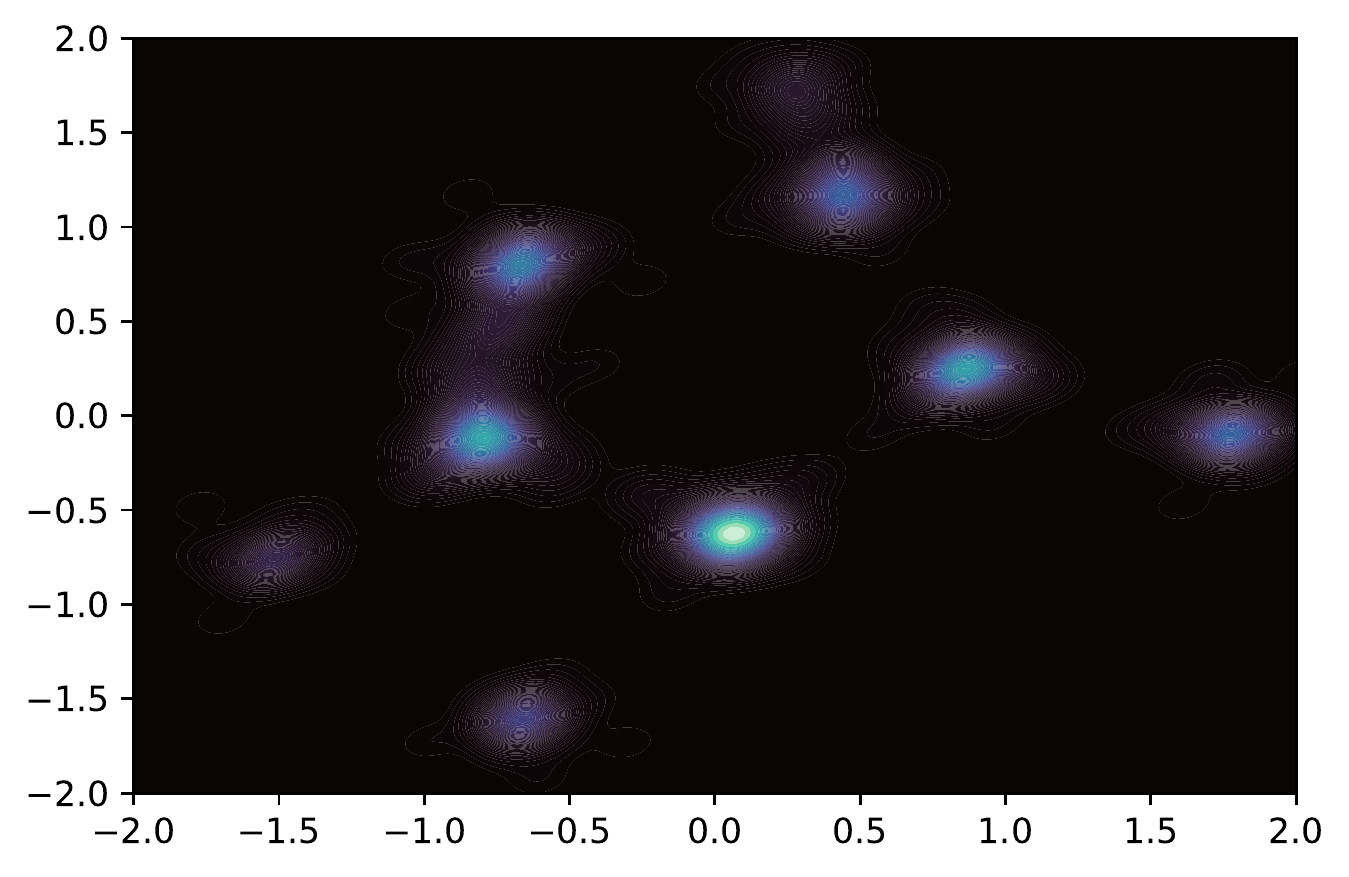}
        & \includegraphics[trim={0.2cm .2cm .2cm .2cm},clip,width=0.18\textwidth]{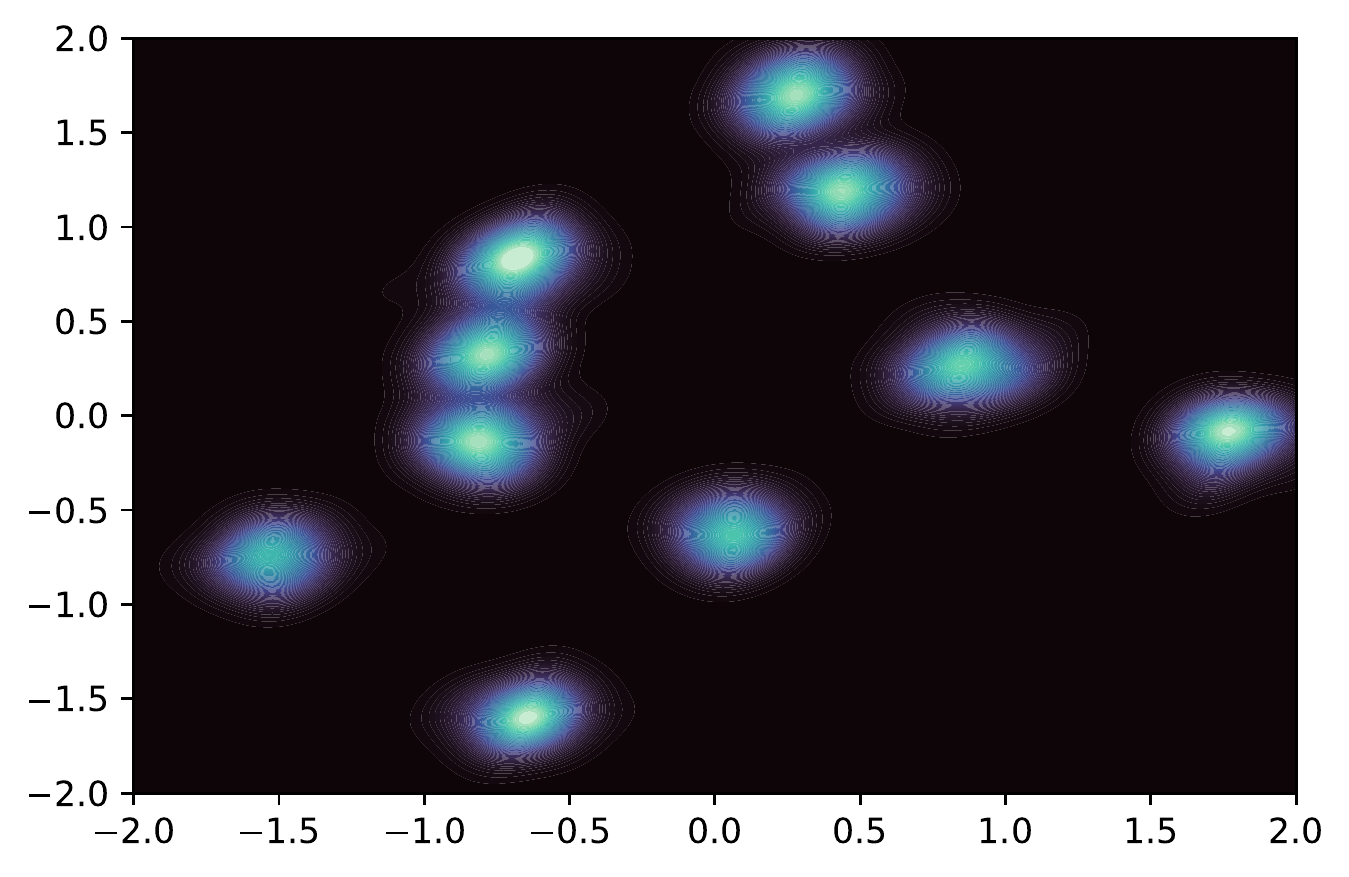}
\\
{\tiny (a) True Density}&
{\tiny (b) SVGD Density }&
{\tiny (c) PFG Density }
\\
  \includegraphics[trim={0.2cm .2cm .2cm .2cm},clip,width=0.18\textwidth]{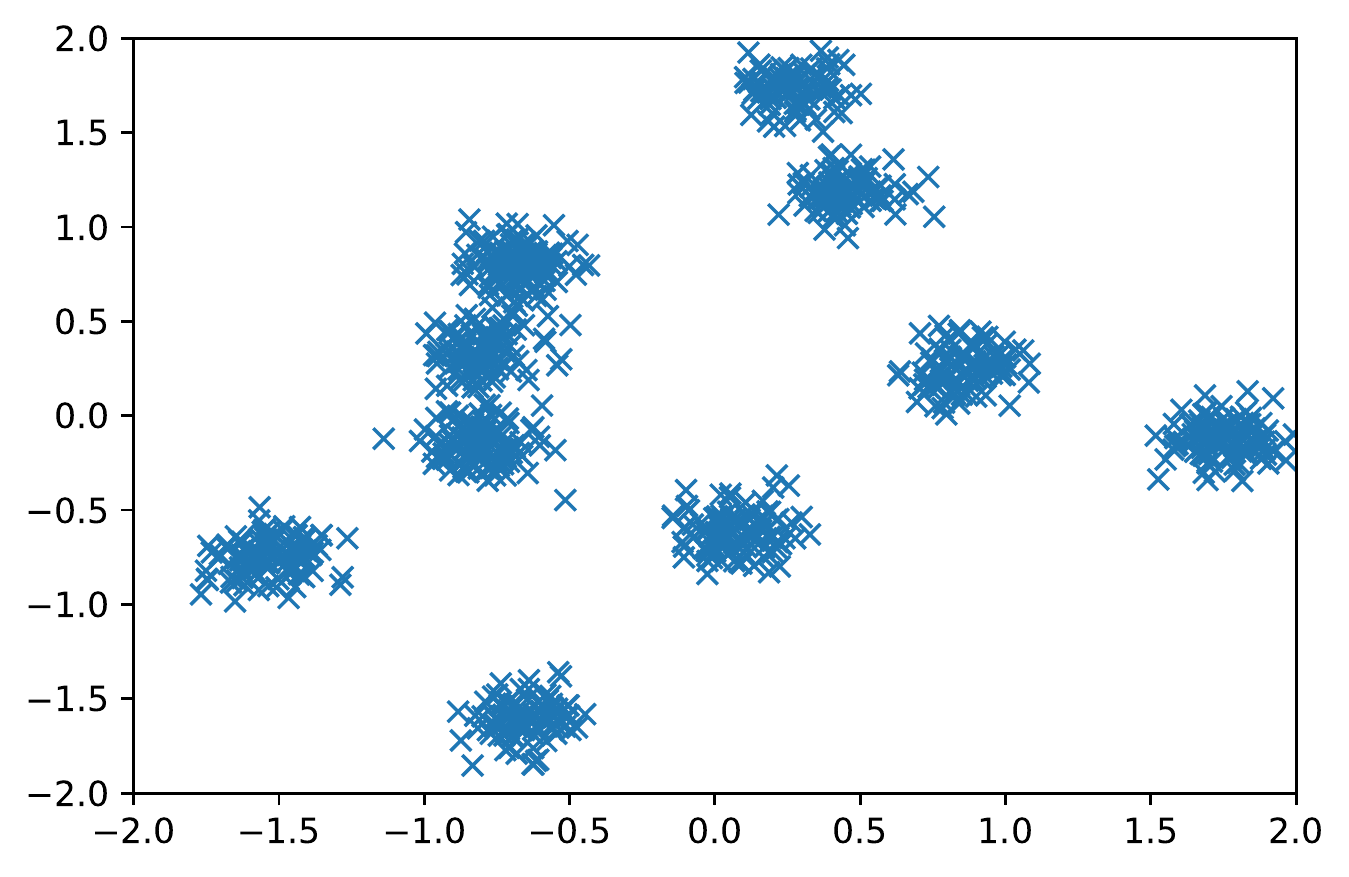}  
        & \includegraphics[trim={0.2cm .2cm .2cm .2cm},clip,width=0.18\textwidth]{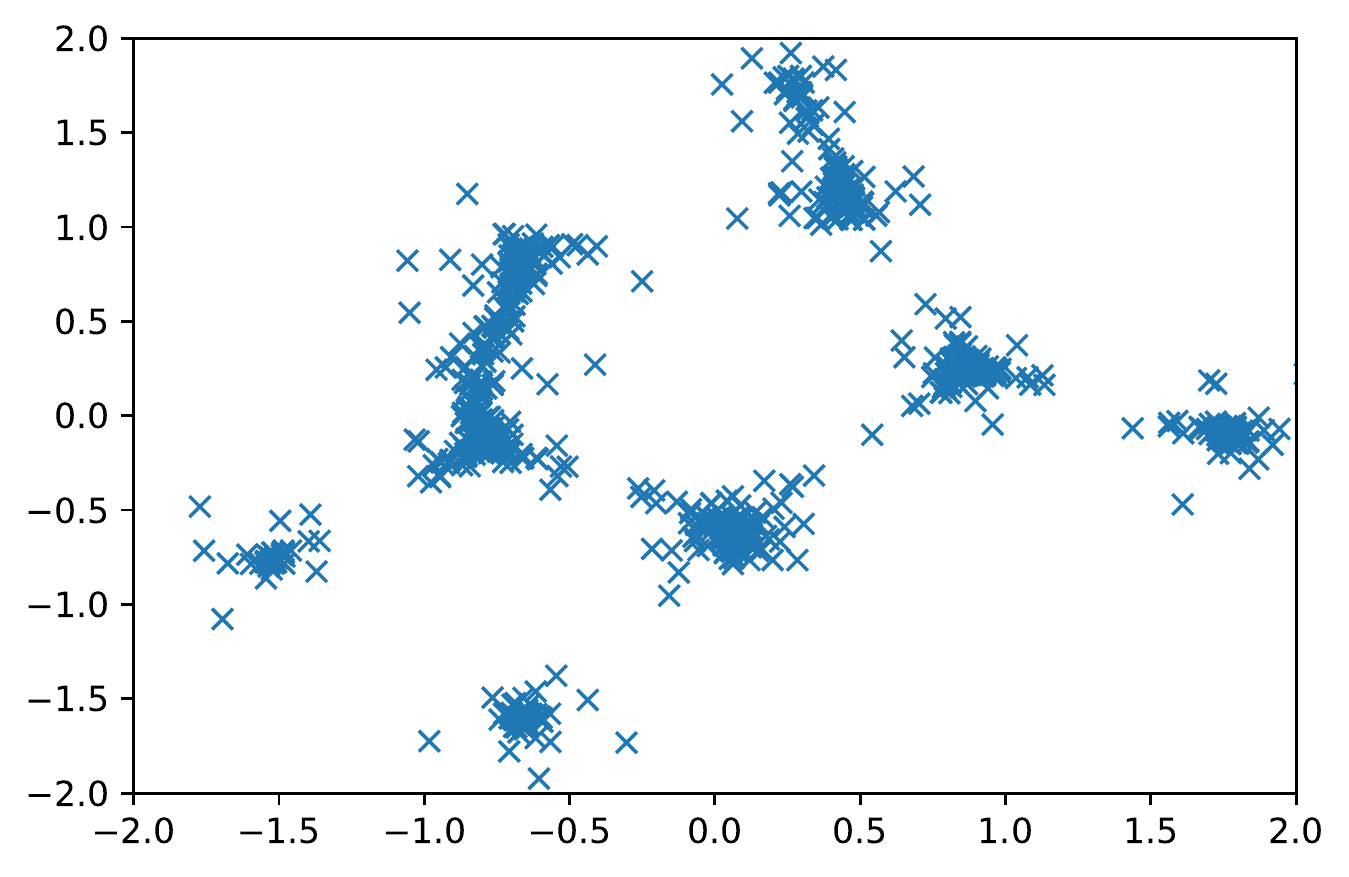}
        & \includegraphics[trim={0.2cm .2cm .2cm .2cm},clip,width=0.18\textwidth]{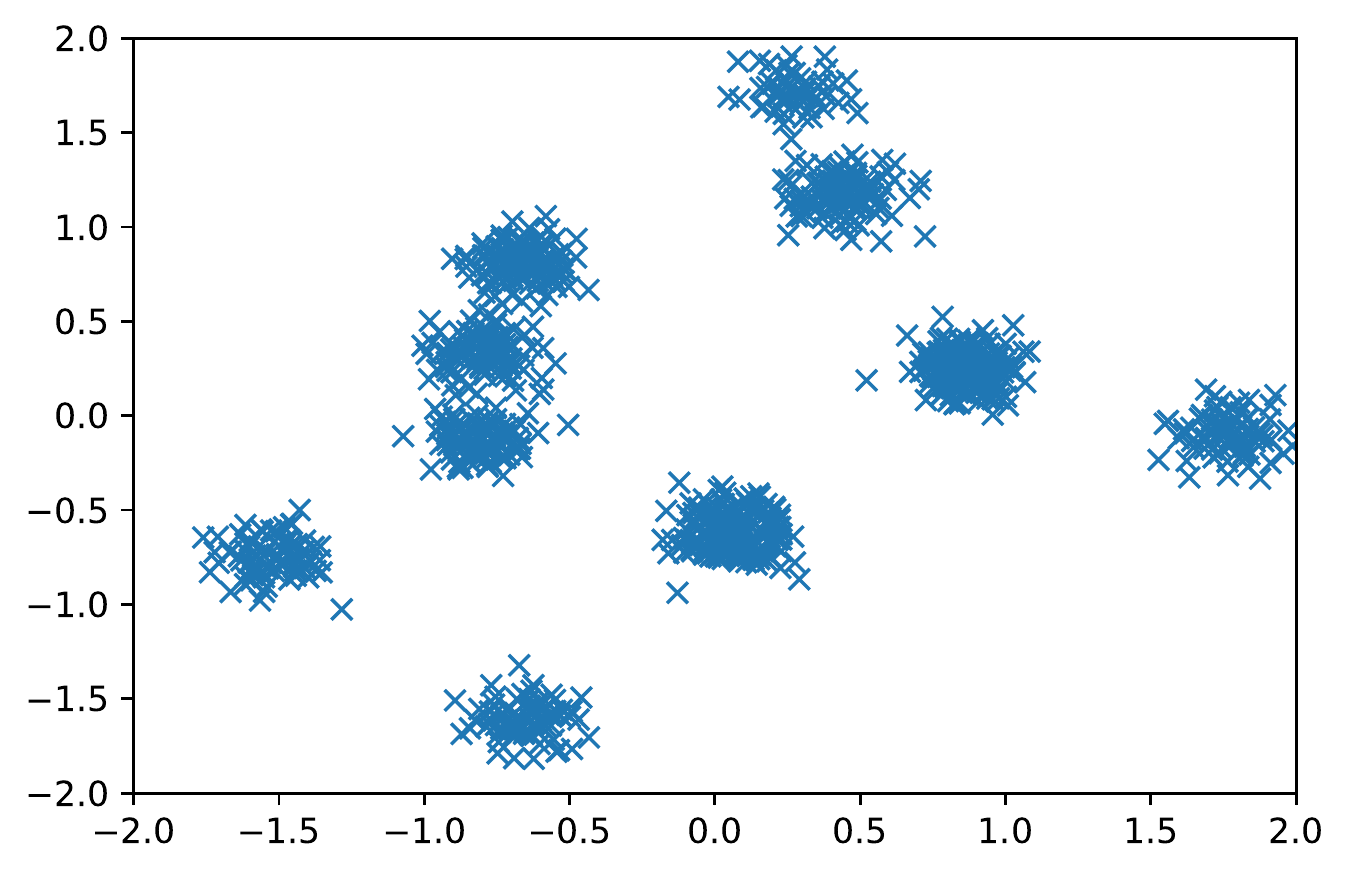}
        \\
        {\tiny (d) True Samples}&
{\tiny (e) SVGD Samples}&
{\tiny (f) PFG Samples }\\
  \includegraphics[trim={0.2cm .2cm .2cm .2cm},clip,width=0.18\textwidth]{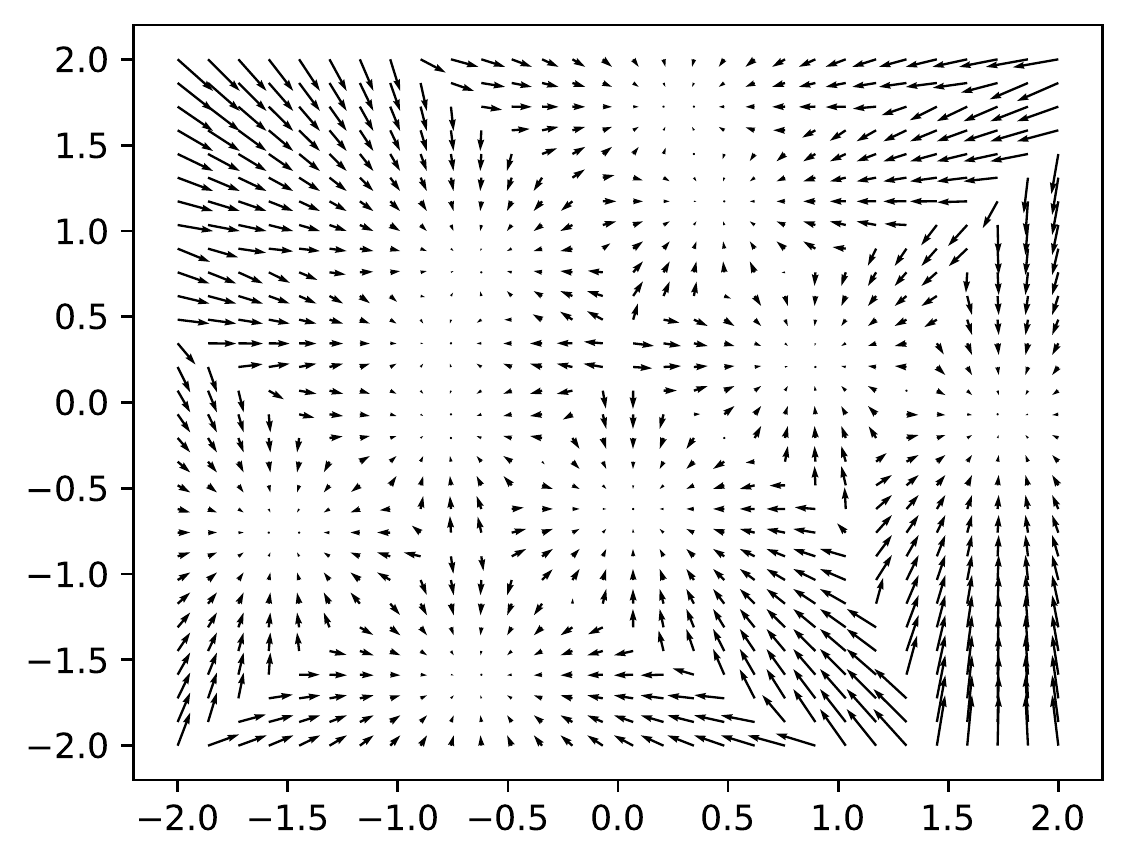}  
        & \includegraphics[trim={0.2cm .2cm .2cm .2cm},clip,width=0.18\textwidth]{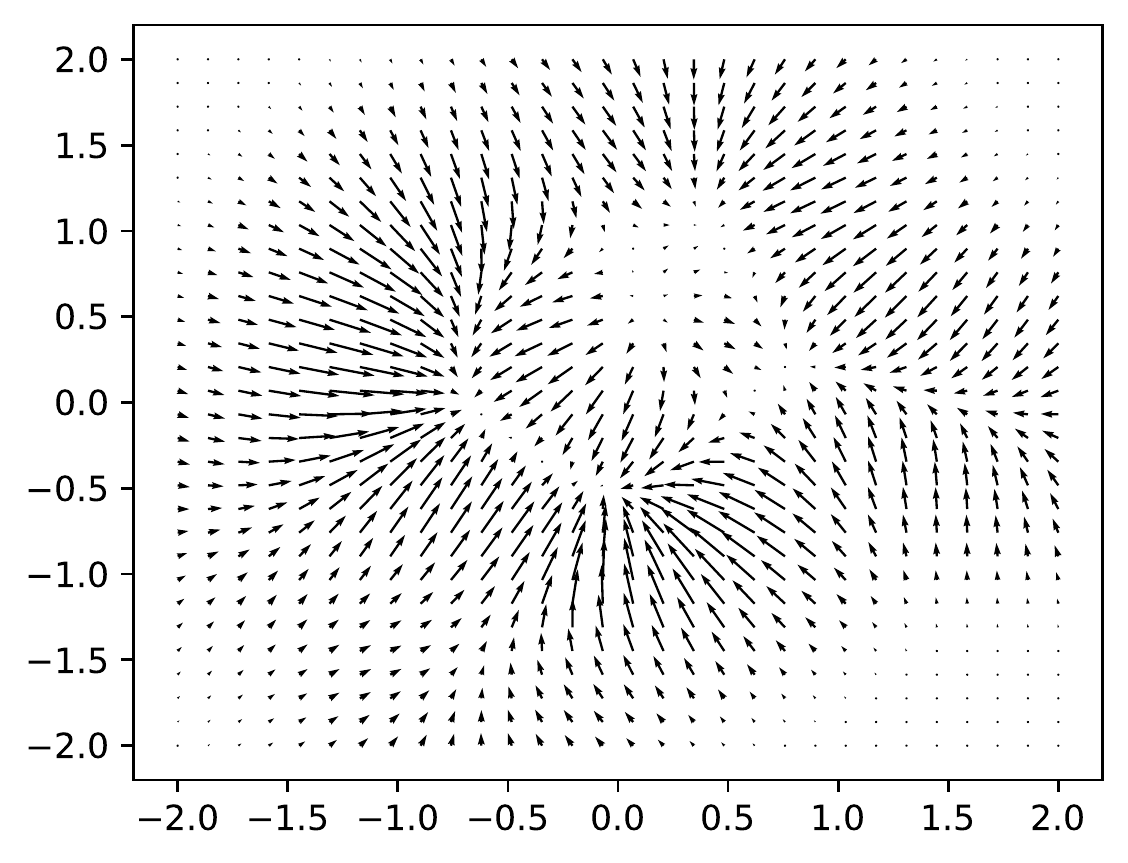}
        & \includegraphics[trim={0.2cm .2cm .2cm .2cm},clip,width=0.18\textwidth]{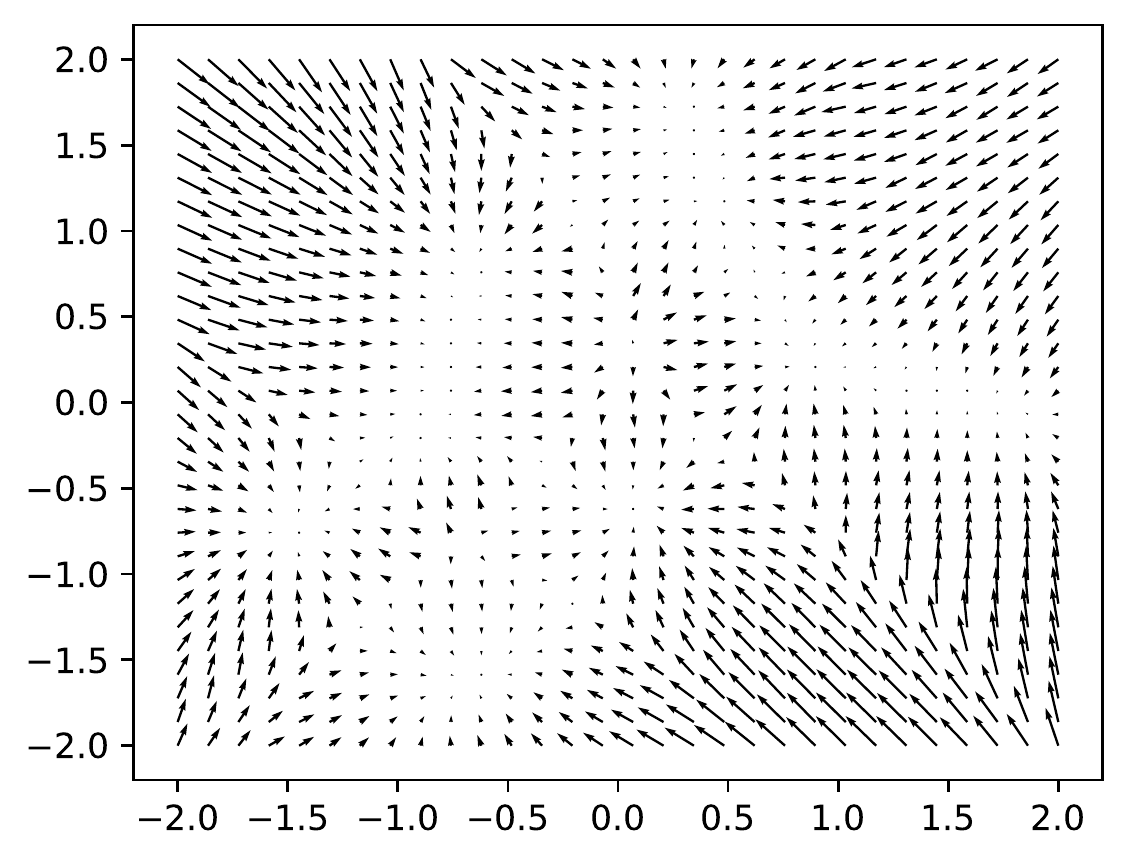}
        \\
        {\tiny (g) Stein Score}&
{\tiny (h) SVGD Score}&
{\tiny (i) PFG Score }
    \end{tabular}
    \caption{Particle-based VI for Gaussian mixture sampling.
    }
    \label{fig:dgm}
    \vspace{-0.2cm}
\end{wrapfigure}
\noindent\textbf{Gaussian Mixture.} To demonstrate the capacity of non-linear function class,  we have conducted the Gaussian mixture experiments to show the advantage over linear function class (RBF kernel) with SVGD. We consider to sample from a 10-cluster Gaussian Mixture distribution. Both SVGD and our algorithm are trained with 1,000 particles. 
Fig. \ref{fig:dgm} shows that the estimated  ``score" by RBF kernel is usually unsatisfactory: (1) In low-density area, it suffers from gradient vanishing, which makes samples stuck at these parts (similar to Fig.~\ref{fig:exp_vec} (b)); (2) The score function cannot distinguish connected clusters.
Specifically, some clusters are isolated while others might be connected. The choice of bandwidth is hard. The fixed bandwidth makes the SVGD algorithm unable to determine the gradient near connected clusters. It fails to capture the clusters near $(-0.75,0.5)$. However, the non-linear function class is able to resolve the above difficulties. In PFG, we found that the score function is well-estimated and the resulting samples mimic the true density properly.

\begin{figure}[t]
\vspace{-0.3cm}
    \centering
    \begin{tabular}{cccc}
       \multicolumn{4}{c}{\includegraphics[trim={7.cm 0.5cm 7cm 0.5cm},clip,width=0.8\textwidth]{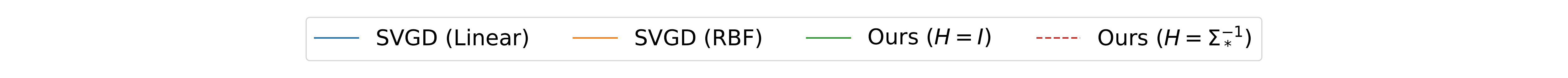} }\\
  \includegraphics[trim={0.cm .0cm 0cm 0cm},clip,width=0.22\textwidth]{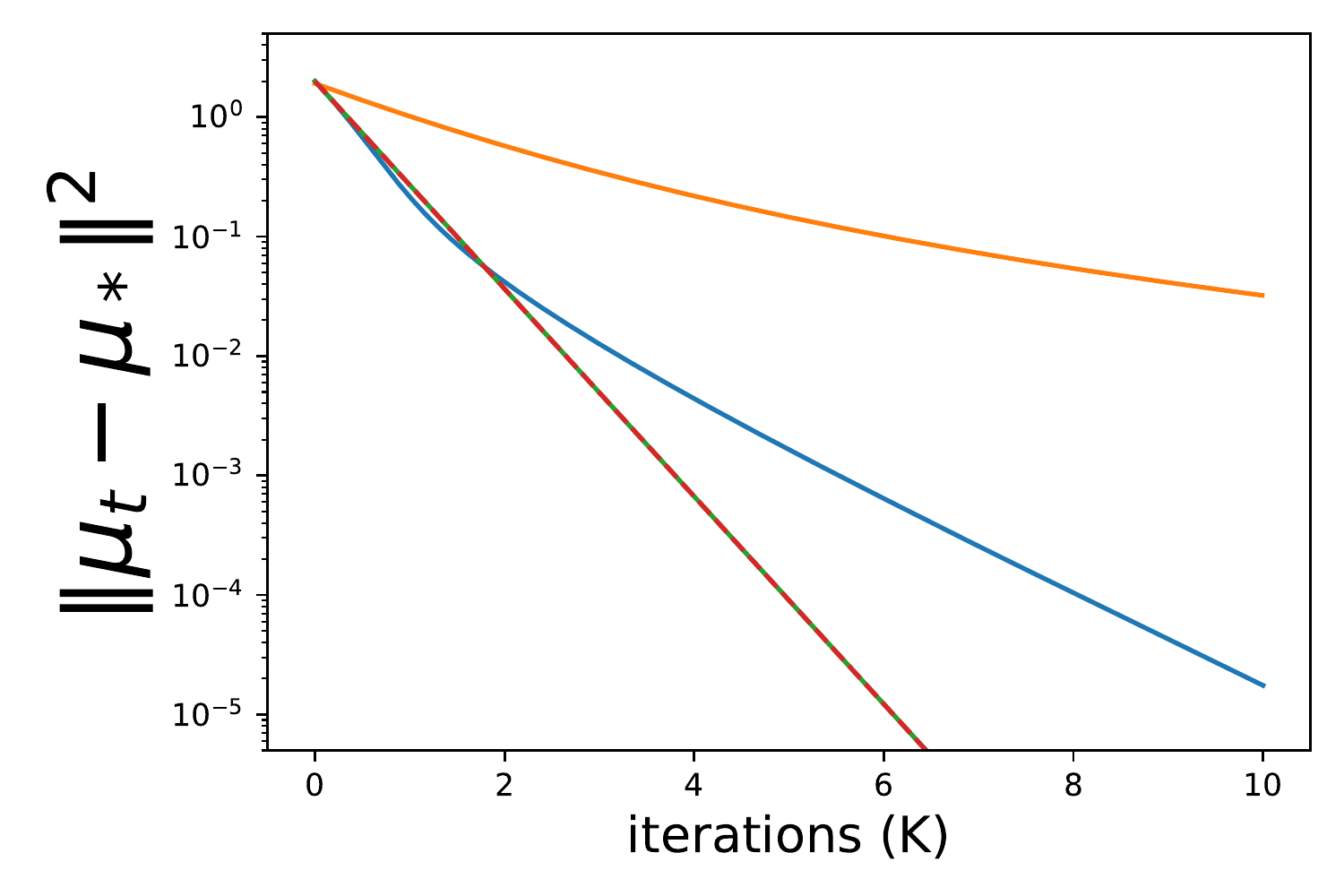}  
        & \includegraphics[trim={0.cm .0cm 0cm 0cm},clip,width=0.22\textwidth]{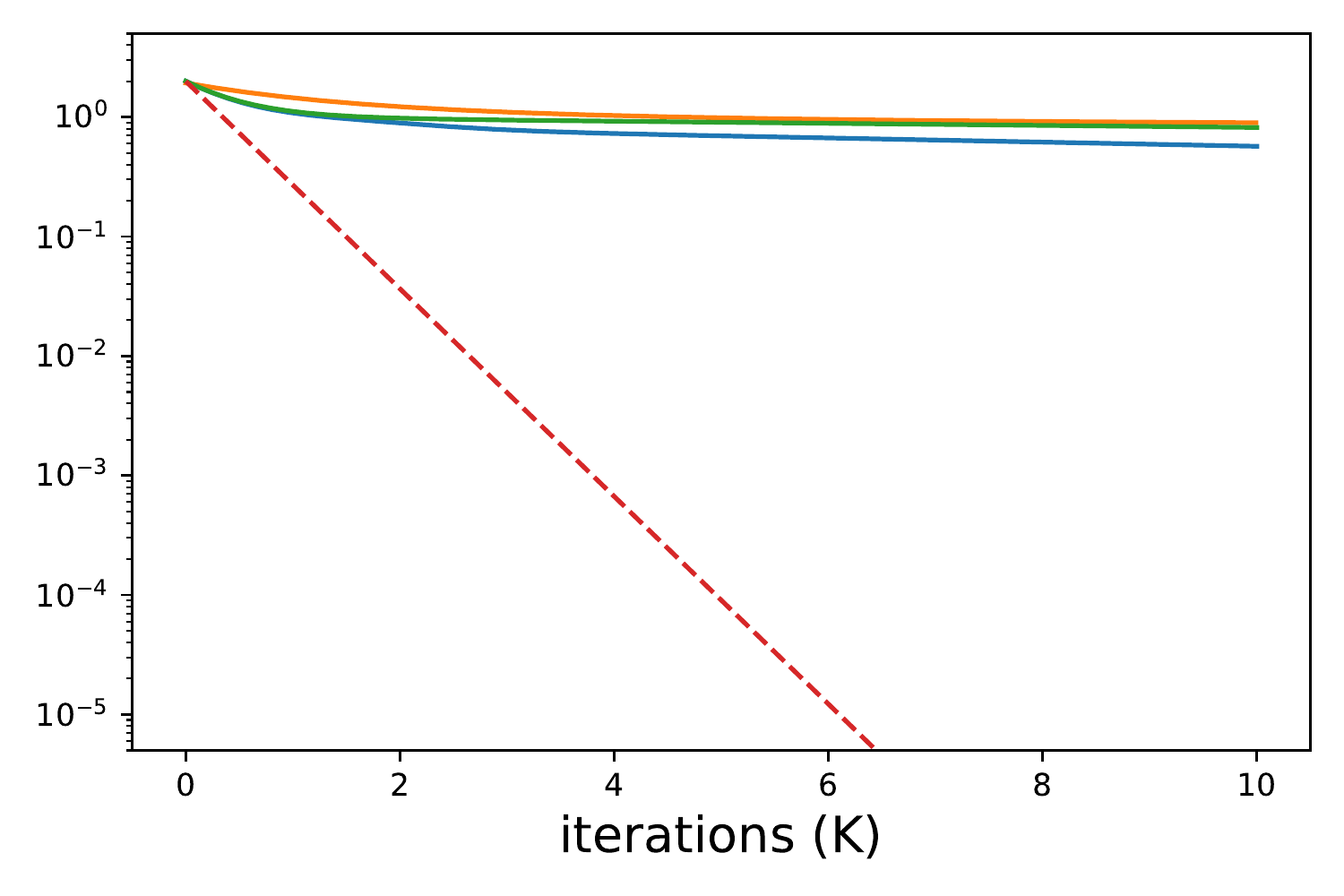}
        & \includegraphics[trim={0.cm .0cm 0cm 0cm},clip,width=0.22\textwidth]{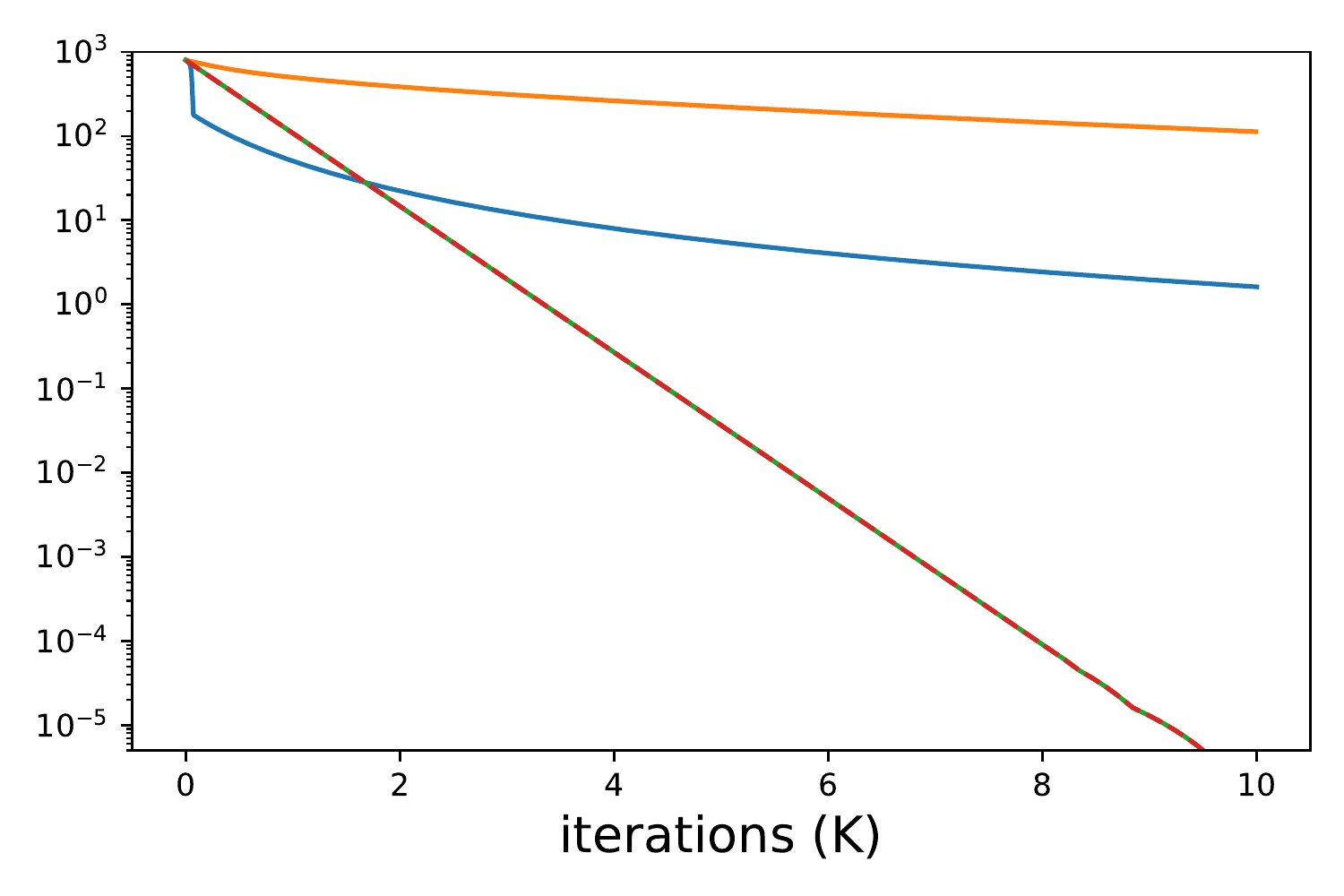}
        & \includegraphics[trim={0.cm .0cm 0cm 0cm},clip,width=0.22\textwidth]{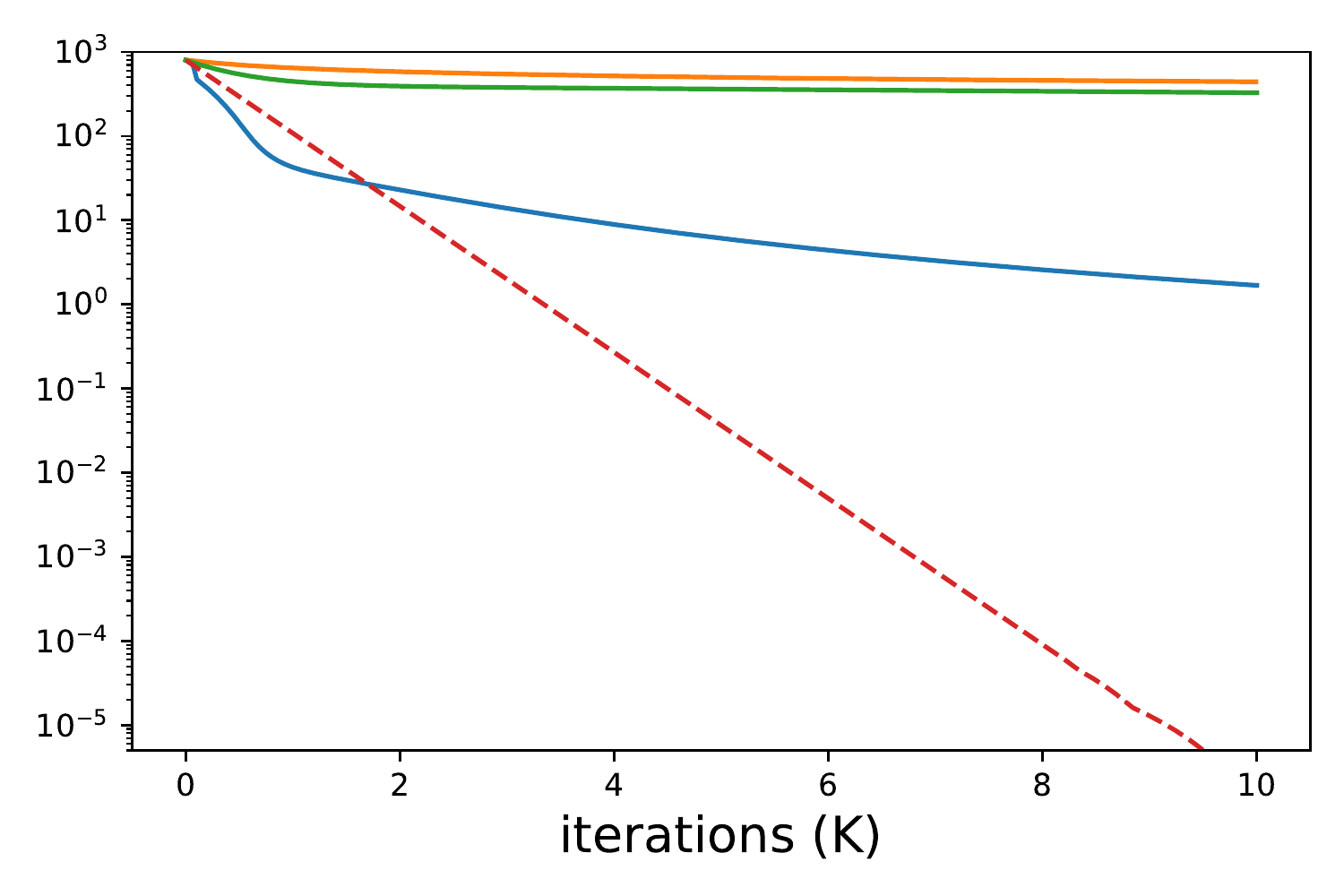}
\\
  \includegraphics[trim={0.cm 0.cm 0cm 0cm},clip,width=0.22\textwidth]{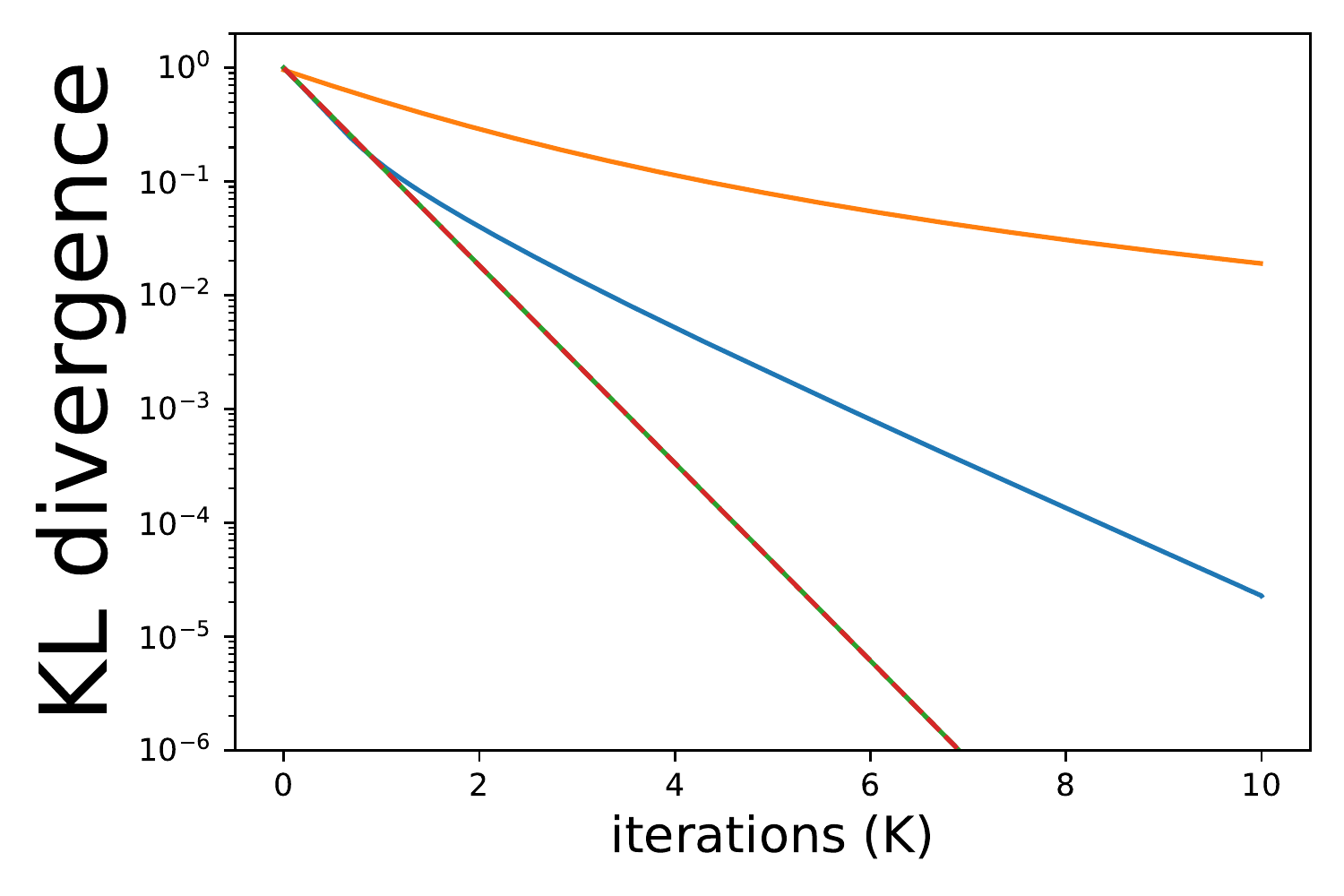}  
        & \includegraphics[trim={0.cm .0cm 0cm 0cm},clip,width=0.22\textwidth]{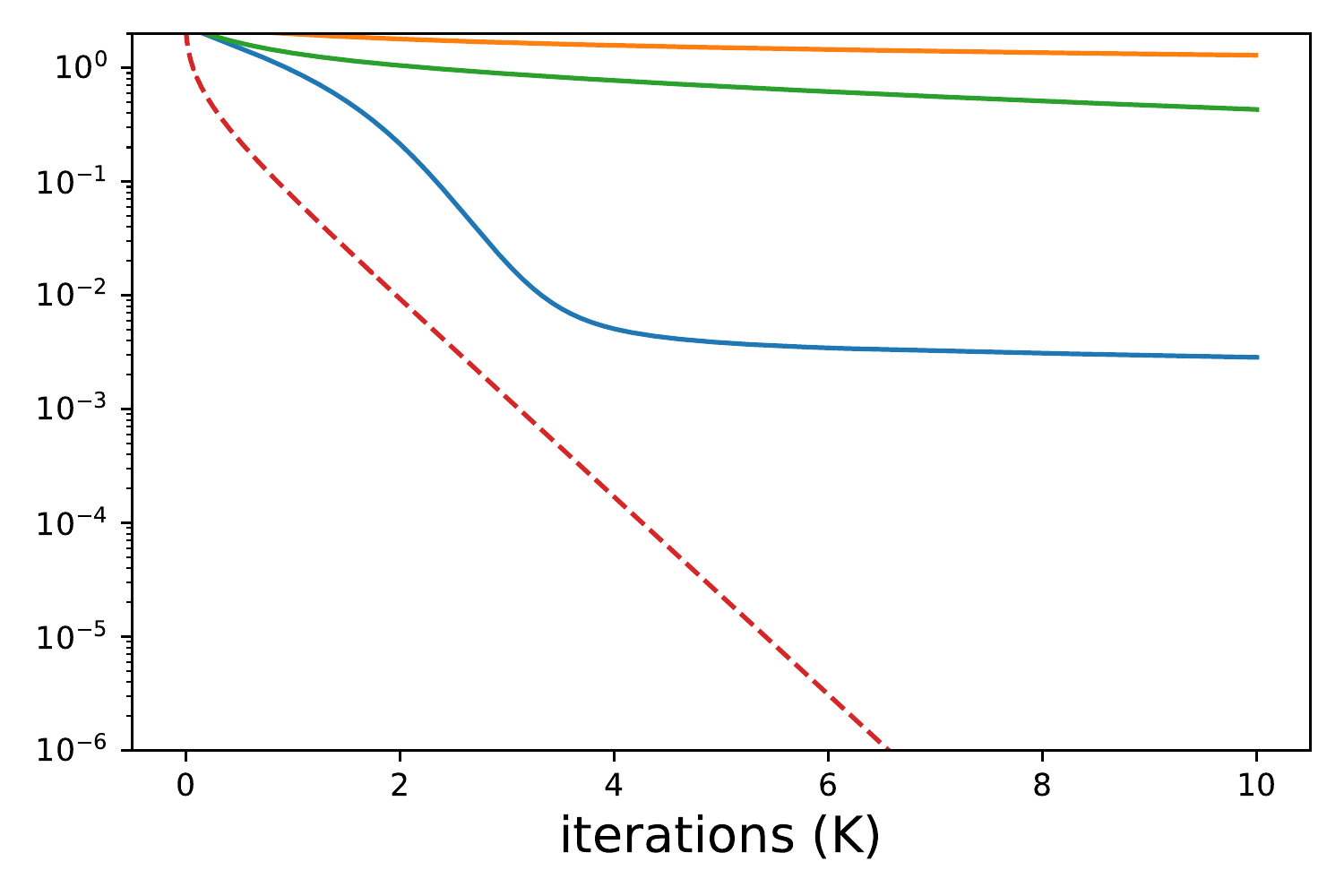}
        & \includegraphics[trim={0.cm .0cm 0cm 0cm},clip,width=0.22\textwidth]{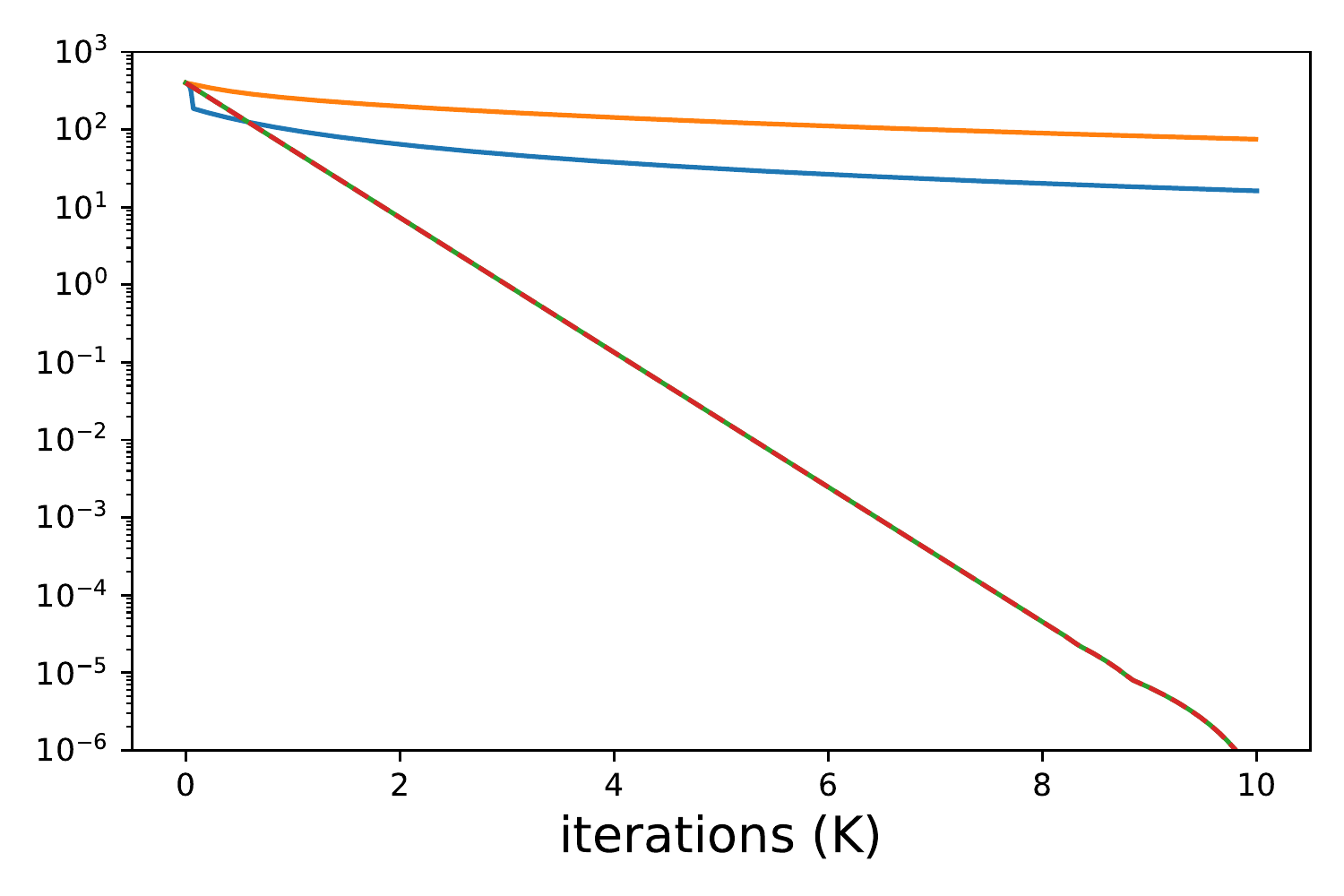}
        & \includegraphics[trim={0.cm .0cm 0cm 0cm},clip,width=0.22\textwidth]{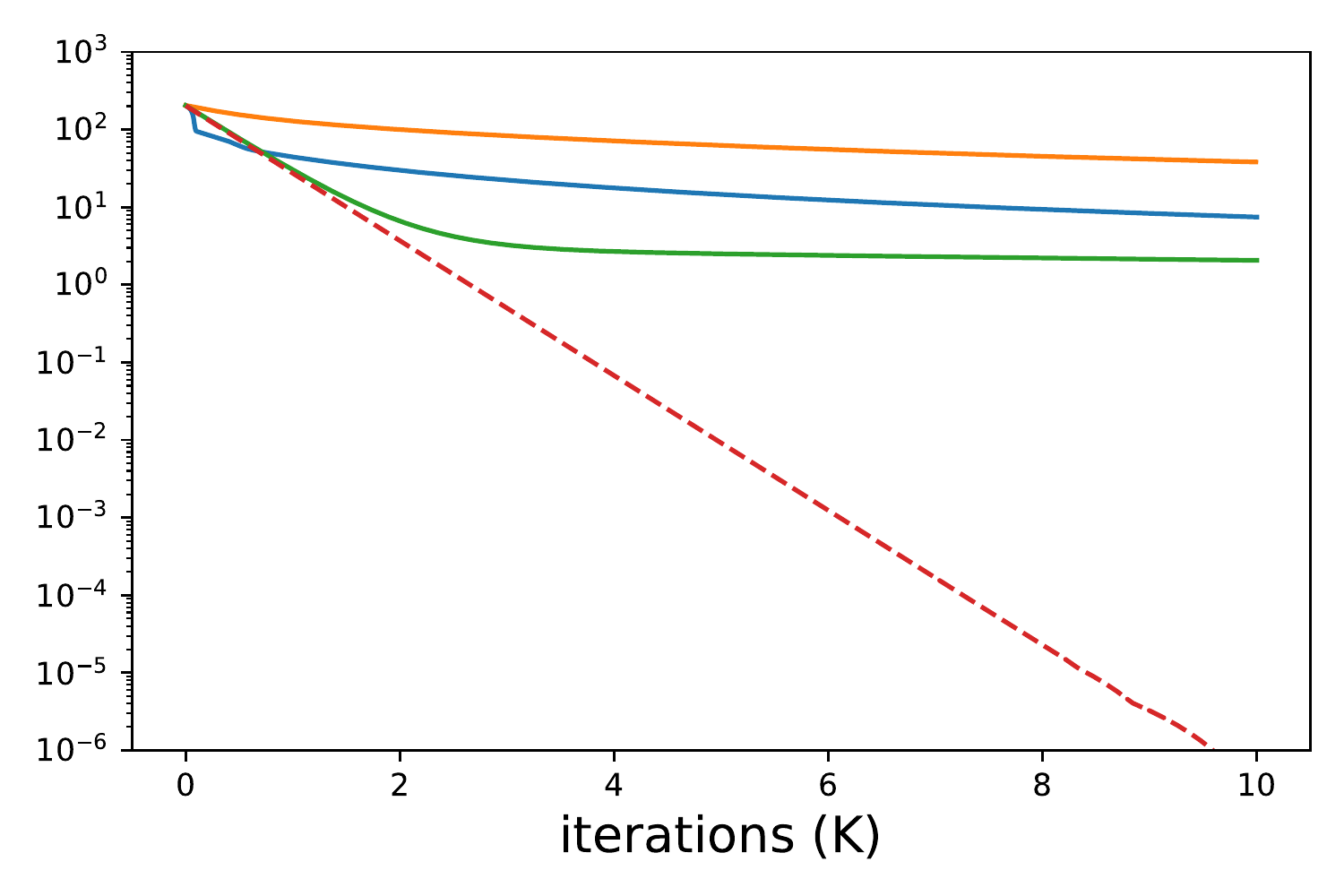}
        \\
    {\tiny (a) $\mu_*=[1,1]^\top$}&
{\tiny(b) $\mu_*=[1,1]^\top$}&
{\tiny(c) $\mu_*=[20,20]^\top$}&
{\tiny(d) $\mu_*=[20,20]^\top$ }\\
{\tiny $\Sigma_*=\diag(1,1)$}&
{\tiny $\Sigma_*=\diag(100,1)$}&
{\tiny  $\Sigma_*=\diag(1,1)$}&
{\tiny  $\Sigma_*=\diag(100,1)$ }
     
\end{tabular}

    \caption{Evolution of particle distribution from $\mathcal N(0,I)$ to $\mathcal N(\mu_*,\Sigma_*)$ (first row: mean squared error of $\mu_t$: $\Vert \mu_t-\mu_*\Vert^2$; second row: KL divergence between $p(t,x)$ and $p_*(x)$)
    }
    \label{fig:kl}
    \vspace{-0.3cm}
\end{figure}
\noindent\textbf{Ill-conditioned Gaussian distribution.} We show the effectiveness of our proposed regularizer. For ill-conditioned Gaussian distribution, the condition number of $\Sigma_*$ is large, \ie, the ratio between maximal and minimal eigenvalue is large.
We follow Example \ref{exp:breg} and compare different $\mu_*$ and $\Sigma_*$.
When $\Sigma_*$ is well-conditioned ($\Sigma_* = I$), $L_2$ regularizer (equivalent to Wasserstein gradient) performs well.
However, it will be slowed down significantly with ill-conditioned $\Sigma_*$. 
For SVGD with linear kernel, the convergence slows down with shifted $\mu_*$ or ill-conditioned $\Sigma_*$. For SVGD with RBF kernel, the convergence is slow due to the improper function class. Interestingly, for ill-conditioned case, $\mu_t$ of SVGD (linear) converges faster than our method with $H=I$ but KL divergence does not always follow the trend. The reason is that $\Sigma_t$ of SVGD is highly biased, making KL divergence large. Our algorithm provides feasible Wasserstein gradient estimators and makes the particle-based sampling algorithm compatible with ill-conditioned sampling case.


\begin{wrapfigure}{r}{0.5\textwidth}
    \vspace{-0.3cm}
    \centering
    \includegraphics[trim={15.cm 0.5cm 14cm 0.5cm},clip,width=0.4\textwidth]{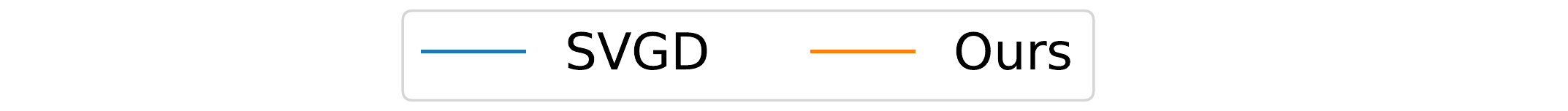}
    \begin{tabular}{cc}
  \includegraphics[width=0.22\textwidth]{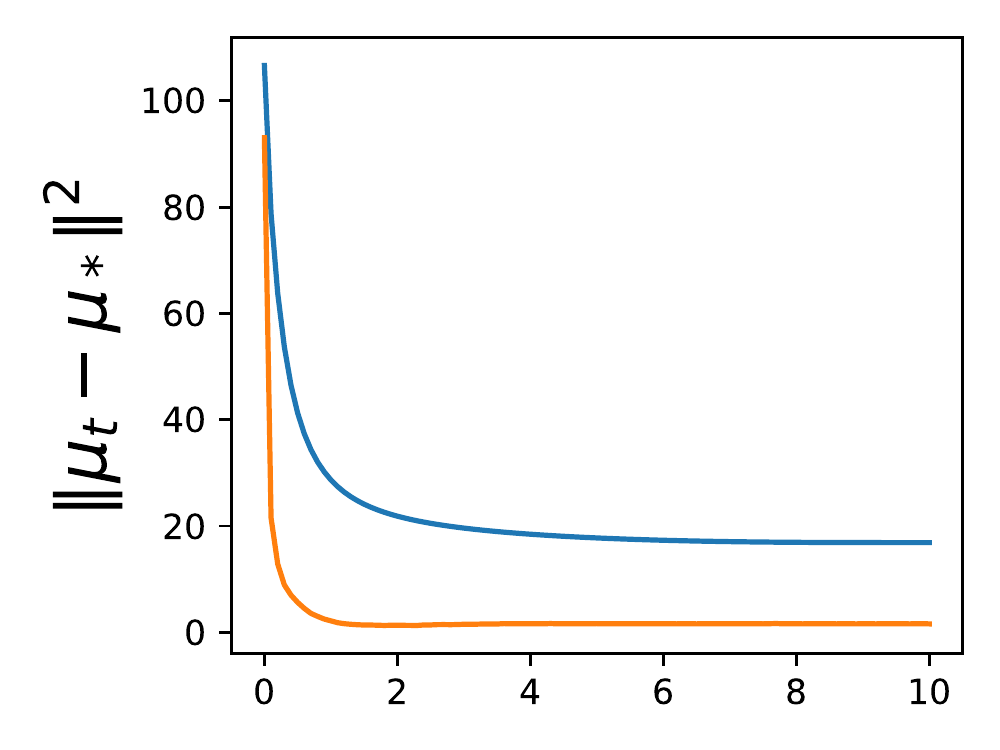}  
        &  \includegraphics[width=0.22\textwidth]{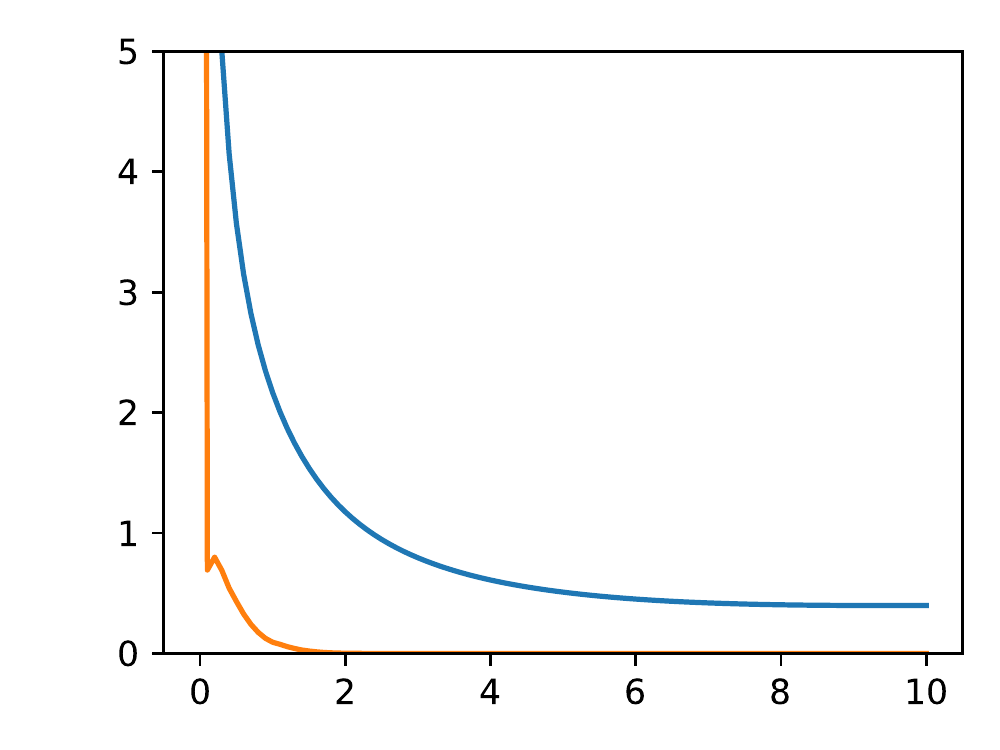}  \\
        \includegraphics[width=0.22\textwidth]{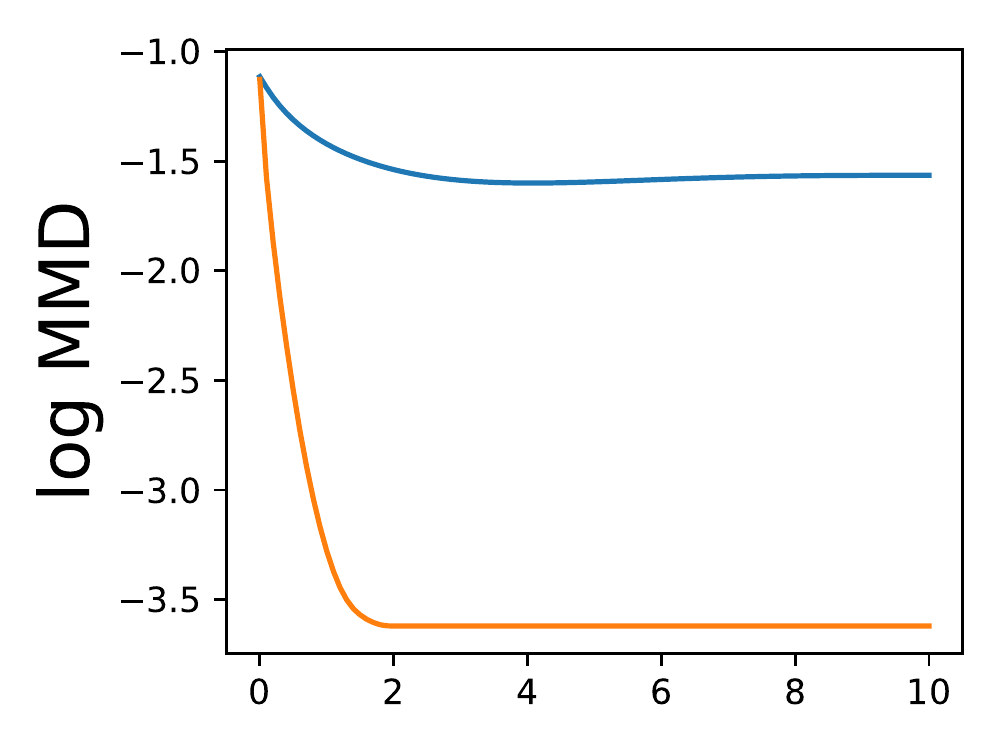}
        & \includegraphics[width=0.22\textwidth]{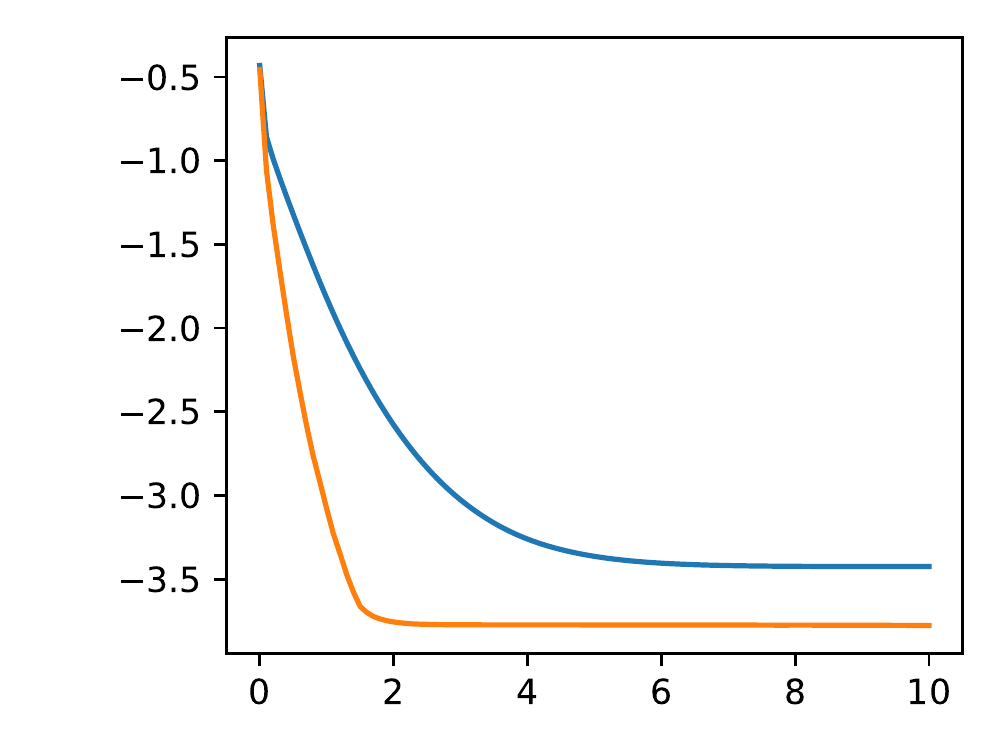}
        \\
        \includegraphics[width=0.22\textwidth]{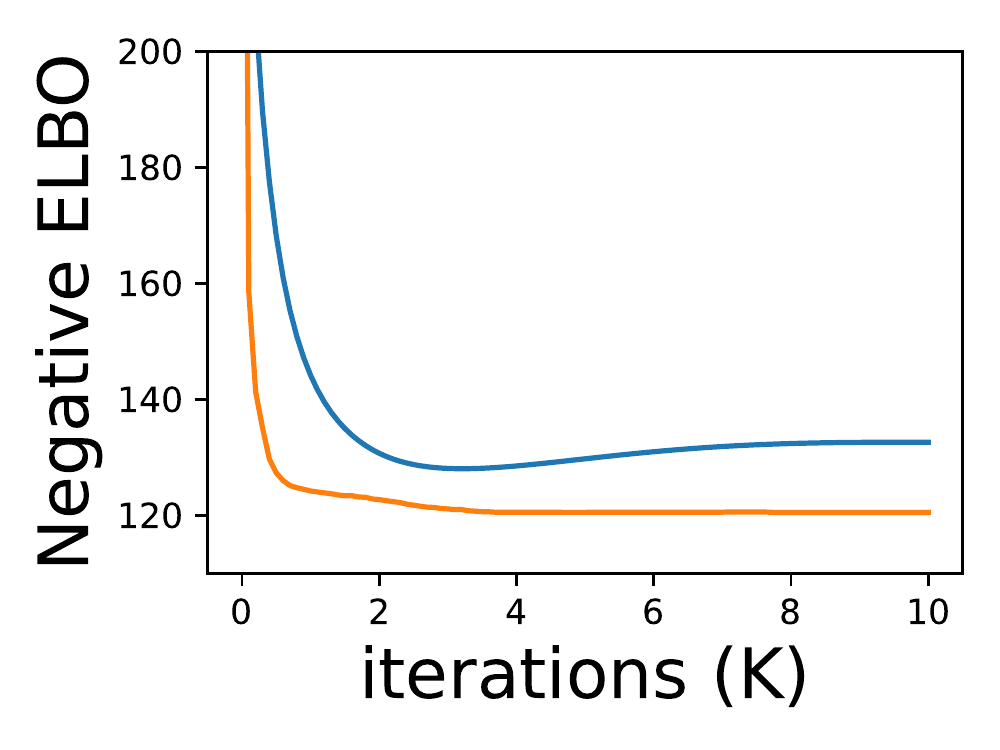}
         & \includegraphics[width=0.22\textwidth]{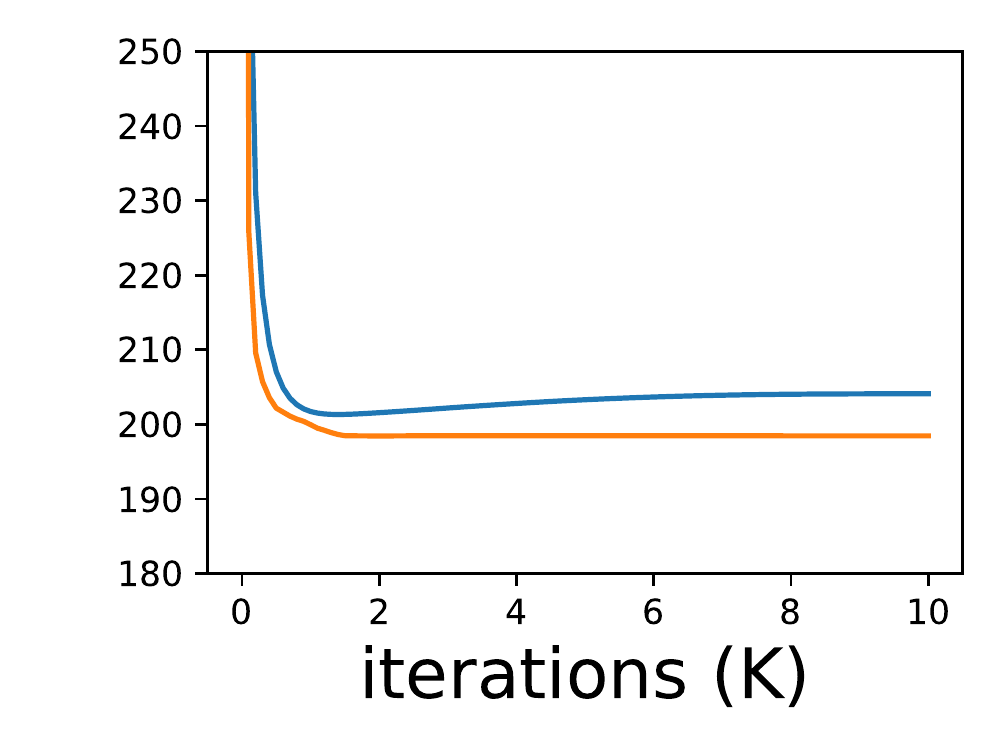}
    \end{tabular}
    
    \caption{ Posterior sampling for Bayesian logistic regression. (200 particles; dataset: \emph{sonar} (first column) and \emph{Australian} (second column). $\mu_t$: particle mean; $\mu_*$: posterior mean.)
    }
    \vspace{-0.5cm}
    \label{fig:logistic}
\end{wrapfigure}
\noindent \textbf{Logistic Regression.} We conduct Bayesian logistic regression for binary classification task on Sonar and Australian dataset \citep{uci}. In particular, the prior of weight samples are assigned $\mathcal{N}(0,1)$; the step-size is fixed as $0.01$.
We compared our algorithm with SVGD (200 particles), full batch gradient is used in this part. 
To measure the quality of posterior approximation, we use 3 metrics: (1) the distance between sample mean $\mu_t$ and posterior mean $\mu_*$; (2) the Maximum Mean Discrepancy (MMD) distance \citep{gretton2012kernel} from particle samples to posterior samples; (3) the evidence lower bound (ELBO) of current particles,
${\EE}_{x\sim p_t}\left[\log {p}_*(x)-\log p(t,x)\right]$,  where
${p}_*(x)$ is the unnormalized density $p(x,\cD)$ with training data $\cD$,
the entropy term $\log p(t,x)$ is estimated by kernel density.
The ground truth samples of $p_*$ is estimated by NUTS \citep{hoffman2014no}. Fig. \ref{fig:logistic} shows that our method outperforms SVGD consistently.
\begin{wraptable}{r}{0.47\textwidth}
    \centering
    \vspace{-0.4cm}
    \small
  \caption{Hierarchical Logistic Regression on UCI datasets (particle size = 200)}
\vspace{0.2cm}
  \begin{sc}
  \begin{tabular}{c|cccc}
  \toprule
     & \multicolumn{2}{c}{covtype} & \multicolumn{2}{c}{german} \\
     & acc.          & nll         & acc.         & nll         \\
     \midrule
SVGD &     75.1          &     0.58        &    65.1          &        0.71     \\
SGLD &      74.9         &    0.57         &    64.8          &     0.65     \\
PFG &         \textbf{75.7}  &       \textbf{0.51}      &    \textbf{67.6}         &     \textbf{0.64}   \\
\bottomrule
\end{tabular}
    \end{sc}
    \label{tab:hir}
    \vspace{-0.2cm}
\end{wraptable}

\noindent \textbf{Hierarchical Logistic Regression.} For hierarchical logistic regression, we consider a 2-level hierarchical regression, where prior of weight samples is $\mathcal{N}(0,\alpha^{-1})$. The prior of $\alpha$ is $\text{Gamma}(0,0.01)$.
All step-sizes are chosen from $\{10^{-t}:t=1,2,\cdots\}$ by validation.
We compare the test accuracy and negative log-likelihood as \citet{liu2016stein}. Mini-batch was used to estimate the log-likelihood of training samples (batch size is 200).
Tab. \ref{tab:hir} shows that our algorithm performs the best against SVGD and SGLD on hierarchical logistic regression.

\noindent \textbf{Bayesian Neural Networks.} We compare our algorithm with SGLD and SVGD variants
on Bayesian neural networks. We use two-layer networks with 50 hidden units (100 for Year dataset, 128 for MNIST) and ReLU activation function; the batch-size is 100 (1000 for Year dataset), All step-sizes are chosen from $\{10^{-t}:t=1,2,\cdots\}$ by validation.
For UCI datasets,  we choose the preconditioned version for comparison: M-SVGD \citep{wang2019stein} and pSGLD
\citep{li2016preconditioned}.
Data samples are randomly partitioned to two parts: 90\% (training), 10\% (testing) and 
 100 particles is used.
 For MNIST, 
 data split follows the default setting and we compare our algorithm with SVGD 
 for $5,10,50,100$ particles (SVGD with 100 particles exceeded the memory limit). In Tab. \ref{tab:uci_bnn}, PFG outperforms SVGD and SGLD significantly.
 In Fig.~\ref{fig:BNN}, we found that the accuracy and NLL of PFG is better than SVGD with all particle size. With more particle samples, PFG algorithm also improves. 
 

\begin{table}[H]
    \small
    \caption{
    Averaged test root-mean-square error
 (RMSE) and test log-likelihood of Bayesian Neural Networks on UCI datasets (100 particles). Results are computed by 10 trials.}\label{tab:uci_bnn}
 \vspace{0.15in}
 \centering
\begin{sc}
    \begin{tabular}{c|ccc|ccc}
\toprule  & \multicolumn{3}{c|}{ { Average Test RMSE }}&\multicolumn{3}{c}{ { Average Test Log-likelihood }} \\
{ Dataset } &  { M-SVGD } & { pSGLD } & { \ModelName }& { M-SVGD } & { pSGLD } & { \ModelName } \\
\midrule { Boston } 
& 2.72$_{\pm0.17}$ 
& 2.70$_{\pm0.16}$ 
& \textbf{2.47}$_{\pm0.11}$
&-2.86$_{\pm0.20}$
&-2.85$_{\pm0.18}$
&\textbf{-2.35}$_{\pm0.12}$
\\
{ Concrete } & 4.83$_{\pm0.11}$
&5.05$_{\pm0.13}$&
\textbf{4.69}$_{\pm0.14}$
&-3.21$_{\pm0.06}$&
-3.21$_{\pm0.07}$&
\textbf{-2.83}$_{\pm0.16}$
\\
{ Energy } & 0.89$_{\pm0.10}$ 
&0.99$_{\pm0.08}$ &
\textbf{0.48}$_{\pm0.04}$
&-1.42$_{\pm0.03}$
&-1.31$_{\pm0.05}$
&\textbf{-1.22}$_{\pm0.06}$
\\
{ Protein } & 4.55$_{\pm0.14}$
&4.59$_{\pm0.18}$&
\textbf{4.51}$_{\pm0.06}$
&-3.07$_{\pm0.13}$
&-3.22$_{\pm0.11}$
&\textbf{-2.89}$_{\pm0.07}$
\\
{ WineRed } & 0.63$_{\pm0.04}$ 
&0.64$_{\pm0.02}$ &
\textbf{0.60}$_{\pm0.02}$
&-1.77$_{\pm0.05}$&
-1.80$_{\pm0.08}$&
\textbf{-1.61}$_{\pm0.03}$
\\
{ WineWhite } & 0.65$_{\pm0.05}$ 
&0.67$_{\pm0.07}$ &
\textbf{0.59}$_{\pm0.02}$
&-1.75$_{\pm0.04}$
&-1.82$_{\pm0.07}$
&\textbf{-1.58}$_{\pm0.04}$
\\
{ Year } 
& 8.62$_{\pm0.09}$ 
& 8.66$_{\pm0.07}$ 
& \textbf{8.56}$_{\pm0.04}$
&-3.59$_{\pm0.08}$
&-3.56$_{\pm0.04}$
&-\textbf{3.51}$_{\pm0.03}$
\\
\bottomrule
\end{tabular}
\end{sc}
\end{table}

\begin{figure}[ht]
    \centering
    \begin{tabular}{cc}
  \includegraphics[width=0.6\textwidth]{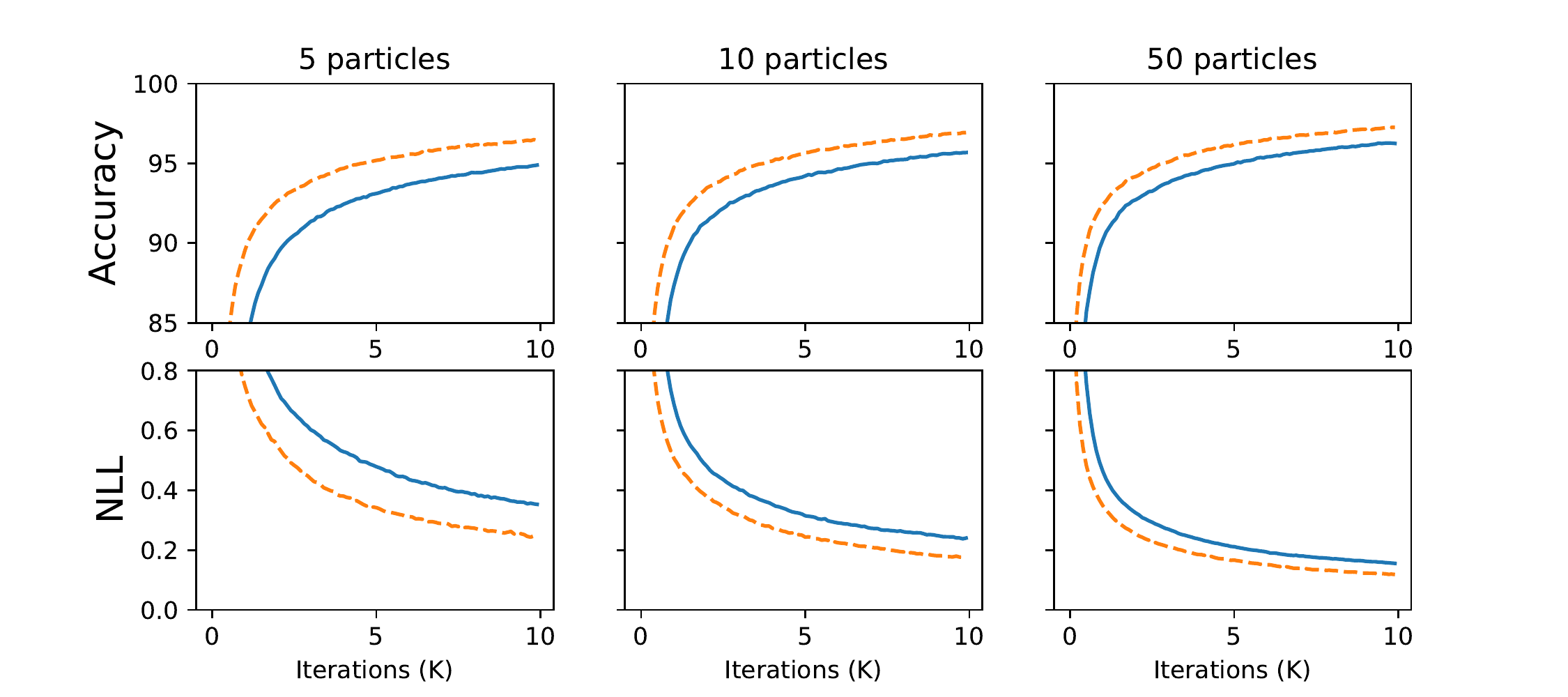}  
&
  \includegraphics[width=0.225\textwidth]{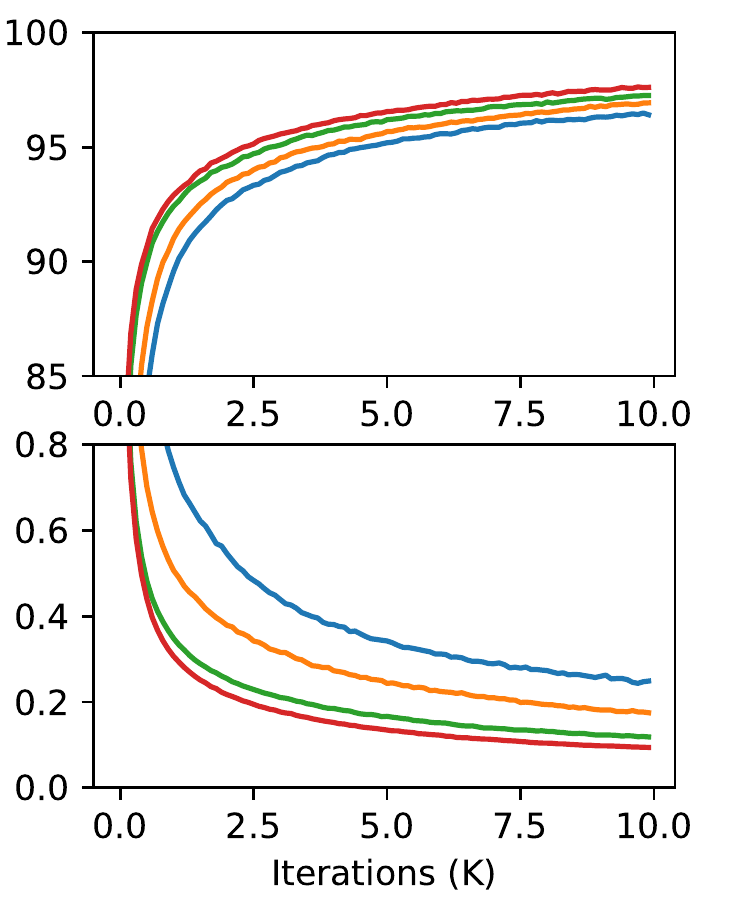}  \\
  \includegraphics[trim={7.cm 0.5cm 7cm 0.5cm},clip,width=0.6\textwidth]{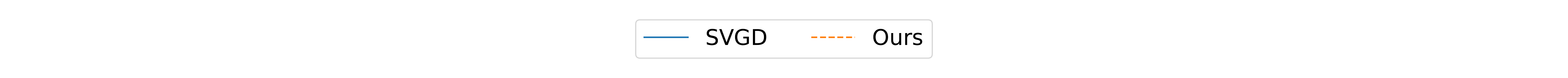}  &
  \includegraphics[trim={15.cm 0.1cm 15cm 0.1cm},clip,width=0.25\textwidth]{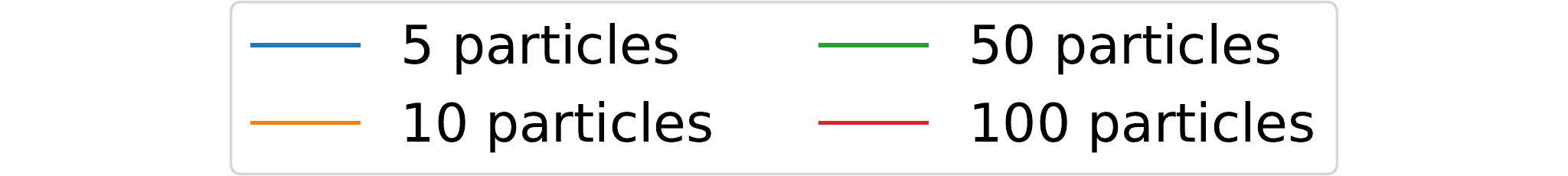}  \\
  {\tiny (a) SVGD vs PFG on MNIST classification} & {\tiny(b) PFG with different particles}
    \end{tabular}

    \caption{
    Test accuracy and NLL of
    Bayesian Neural Network (MNIST classification)
    }
    \label{fig:BNN}
\end{figure}

\noindent \textbf{Time Comparison.} As indicated, our algorithm is more scalable than SVGD in terms of particle numbers.  Tab. \ref{tab:time} shows that SVGD entails more computation cost with the increase of particle numbers. Our functional gradient is obtained with iterative approximation without kernel matrix computation. When $n$ is large, our $O(n)$ algorithm is much faster than $O(n^2)$ kernel-based method. Interestingly, the introduction of particle interactions is a key merit of particle-based VI algorithms, which intuitively needs $O(n^2)$ computation. 
The integration by parts technique draws the connection between the particle interaction and $\nabla\cdot f_\theta$ which supports more efficient computational realizations.

\begin{table}[H]
    \centering
\caption{Time Comparison on Bayesian logistic regression (sonar dataset, 1,000 iterations)}
\vspace{0.2cm}
    \label{tab:time}
    \begin{sc}
    \begin{tabular}{c|ccccc}
    \toprule
       \# of particles  &  5 &10 & 100 & 1000 & 2000\\
         \midrule
       SVGD &\textbf{5.20}$_{\pm0.10}$ &\textbf{5.25}$_{\pm0.13}$&\textbf{5.71}$_{\pm0.11}$ & 19.10$_{\pm0.52}$&81.50$_{\pm1.86}$
        \\
        PFG&7.87$_{\pm0.16}$&7.95$_{\pm0.15}$&8.15$_{\pm0.17}$&\textbf{11.56}$_{\pm0.24}$&\textbf{13.22}$_{\pm0.26}$\\
        \bottomrule
    \end{tabular}
        \end{sc}
\end{table}


\section{Related Works}

\noindent\textbf{Stein Discrepancy and SVGD.} Stein discrepancy \citep{liu2016kernelized,chwialkowski2016kernel,liu2016stein} is known as the foundation of the SVGD algorithm, which is defined on the bounded function classes, such as functions with bounded RKHS norm. 
We provide another view to derive SVGD: SVGD is a functional gradient method with RKHS norm regularizer. From this view, we found that there are other variants of particle-based VI. We suggest that the our proposed framework has many advantages over
SVGD as the function class is more general and the kernel computation can be saved. 
 We use neural networks as our $\cF$ in real practice, which is much more efficient than SVGD. Moreover, the general regularizer in Eq. \eqref{eq:reg} improves the conditioning of functional gradient, which is proven better than SVGD in both theory and experiments.
The preconditioning of our algorithm is more explainable than SVGD, without incorporating RKHS-dependent properties.  
 
 
\noindent\textbf{Learning Discrepancy with Neural Networks.} Some recent works \citep{hu2018stein,grathwohl2020learning,di2021neural} leverage neural networks to learn the Stein discrepancy, which are related to our algorithm.
They only implement $Q(f(x)) = \EE_{p_t} \Vert f(x)\Vert^2$ to estimate the discrepancy.
\citet{grathwohl2020learning} measure the similarity between distributions and train energy-based models (EBM) with the discrepancy.
However, they did not further validate the sampling performance of their ``learned discrepancy'', so it is a new version of KSD \citep{liu2016kernelized}, rather than SVGD.
\citet{hu2018stein} 
train a neural ``generator'' for sampling tasks. They discussed the scaling of $L_2$ regularization to align with the step size. In contrast, we restrict our investigation in particle-based VI and avoid learning a ``generator'', which may introduce more parameterization errors.
\citet{di2021neural} is an empirical study to implement the $L_2$ version of our algorithm with comparable performance to SVGD. We extend the design of $Q$ to a general case, and emphasize the benefits of our proposed regularizers corresponding to the preconditioned algorithm. We include the Example \ref{exp:breg} and Fig. \ref{fig:kl} to demonstrate the necessity of preconditioning. Our theoretical and experimental results have demonstrated the improvement of PFG algorithm against SVGD.

\noindent\textbf{KL Wasserstein Gradient Flow.}
 Wasserstein gradient flow \citep{ambrosio2005gradient} is the continuous dynamics to minimize functionals in Wasserstein space, which is crucial in sampling and optimal transport tasks. However, the numerical computation of Wasserstein gradient is non-trivial. Previous works \citep{peyre2015entropic,benamou2016discretization,carlier2017convergence}
  attempt to find tractable formulation with spatial discretization, which suffers from curse of dimensionality. More recently, \citep{mokrov2021large,alvarez2021optimizing} leverage neural networks to model the Wasserstein gradient and aim to find the full transport path between $p_0$ and $p_*$. The computation of full path is extremely large.
  \cite{salim2020wasserstein} defines proximal gradient in Wasserstein space by JKO operator.
  However, the work mainly focus on theoretical properties
  and the efficient implementation remains open. \citet{wang2022optimal} solves SDP to approximate Wasserstein gradient, which considers the dual form of the variational problem.
When the functional is KL divergence, Wasserstein gradient can also be realized with Langevin dynamics \citep{welling2011bayesian,bernton2018langevin} by adding Gaussian noise to particles, which are also variants of MCMC. However, the deterministic algorithm to approximate  Wasserstein gradient flow is still challenging. Our framework provides a tractable approximation to fix this gap.







\section{Conclusion}\label{sec:con_lim}

In particle-based VI, we consider that the 
particles can be updated with
a preconditioned functional gradient flow.
Our theoretical results and Gaussian example led to an algorithm: PFG that can approximate preconditioned Wasserstein gradient directly. 
Our theory indicates that when the function class is large, PFG is provably faster than conventional particle-based VI, SVGD.
The empirical result showed that PFG performs better than conventional SVGD and SGLD variants. 

\newpage
\section*{Acknowledgements}

The work was supported by the General Research Fund (GRF 16310222 and GRF 16201320).



\bibliography{overbib2}
\bibliographystyle{iclr2023_conference}

\appendix

\newpage

\section{Proofs}

\subsection{Proof of Eq.~\eqref{eq:evo}}

From the continuity equation (or Fokker-Planck equation), we have
\[
\frac{\partial p(t,x)}{\partial t}
= - \nabla \cdot [ g(t,x) p(t,x)]
\]

By chain rule and integration by parts,
\begin{align*}
\frac{d D_{\mathrm{KL}(t)}}{d t}
=& \int \frac{\partial p(t,x)}{\partial t}
\left(1+ \ln \frac{p(t,x)}{p_*(x)}\right) d x\\
=& - \int  \nabla \cdot [ g(t,x) p(t,x)]
\left(1+ \ln \frac{p(t,x)}{p_*(x)}\right) d x\\
=& \int    [ g(t,x) p(t,x)]^\top
\nabla \ln \frac{p(t,x)}{p_*(x)} d x \\
=& \int    [ g(t,x) p(t,x)]^\top
\nabla [\ln {p(t,x)}- \ln{p_*(x)}] d x \\
=& \int     g(t,x) ^\top
 [ \nabla{p(t,x)}- p(t,x)\nabla\ln{p_*(x)}] d x \\
=& -\int   p(t,x) [\nabla \cdot g(t,x) +  g(t,x) 
\nabla_x \ln {p_*(x)}] d x .
\end{align*}

\subsection{Discussion of Example \ref{exp:breg}}\label{app:exp1}

(1) SVGD with linear kernel $k(x,y) = x^\top K y + 1$
 \begin{equation}\label{eq:linear_ker}
     g(t,x)=-\Sigma_{*}^{-1}[(\Sigma_{t}+(\mu_{t}-\mu_{*})\mu_{t}^{\top})Kx + \mu_t - \mu_*] +K x.
 \end{equation}
 
 (2) SVGD with RBF kernel $k_\sigma(x,y)=\frac{1}{\sqrt{\sigma^2}}\exp\left(-
    \frac{1}{2\sigma^2}\Vert y-x\Vert^2
    \right)$
  \begin{align}g(t,x) = O(x\exp(-\Vert x\Vert^2)).
 \end{align}
 
   (3) Linear function class with $L_2$ regularization $Q(f(x)) = \frac{1}{2}\EE_{p_t}\Vert f(x)\Vert^2 $.
  \begin{align}g(t,x) = - \Sigma^{-1}_*
  (x-\mu_*)+\Sigma_t^{-1}(x-\mu_t).
 \end{align}
 
 (4) Linear function class with Mahalanobis regularization $Q(f(x)) = \frac{1}{2} \EE_{p_t}\Vert f(x)\Vert_{\Sigma_*^{-1}}^2 $
  \begin{align}g(t,x) = -
  (x-\mu_*)+\Sigma_*\Sigma_t^{-1}(x-\mu_t).
 \end{align}
 
 For optimal transport  $g(t,x) = -(x-\mu_*) + \Sigma_t^{-1/2} (\Sigma_t^{1/2}\Sigma_*\Sigma_t^{1/2})^{1/2}\Sigma_t^{-1/2}(x-\mu_t)$.

Note that $\mu_{*}\mu_{t}^{\top}$ is not symmetric, which makes the distribution rotate. Figure \ref{fig:curl_vec} shows that 
we can split the curl component
by Helmholtz decomposition, which means that this part would rotate the distribution and cannot be compromised by preconditioning method.
Thus, we cannot find any $K$ to obtain a proper preconditioning.
\begin{proof}

We consider these 4 cases seperately.

(1) SVGD with linear kernel.

If $k(x,y) = x^\top K y+1$, then $\nabla_y k(x,y) = Kx$,
$$
\begin{aligned}g(t,x)= & \int p(t,y)(-\Sigma_{*}^{-1}(y-\mu_{*})+\Sigma_{t}^{-1}(y-\mu_{t}))(y^{\top}Kx+1)dy \\
= & -\int p(t,y)(\Sigma_{*}^{-1}(y-\mu_{*})(y^{\top}Kx+1))dy\\
 & +\int p(t,y)(\Sigma_{t}^{-1}(yy^{\top}-\mu_{t}y^{\top}))dyKx\\
= & -\Sigma_{*}^{-1}[(\Sigma_{t}+(\mu_{t}-\mu_{*})\mu_{t}^{\top})Kx + \mu_t - \mu_*] +K x
\end{aligned}
$$

\begin{figure}[t]
    \centering
    \begin{tabular}{ccc}
         \includegraphics[trim={0.8cm .7cm 1cm 1cm},clip,width=0.3\textwidth]{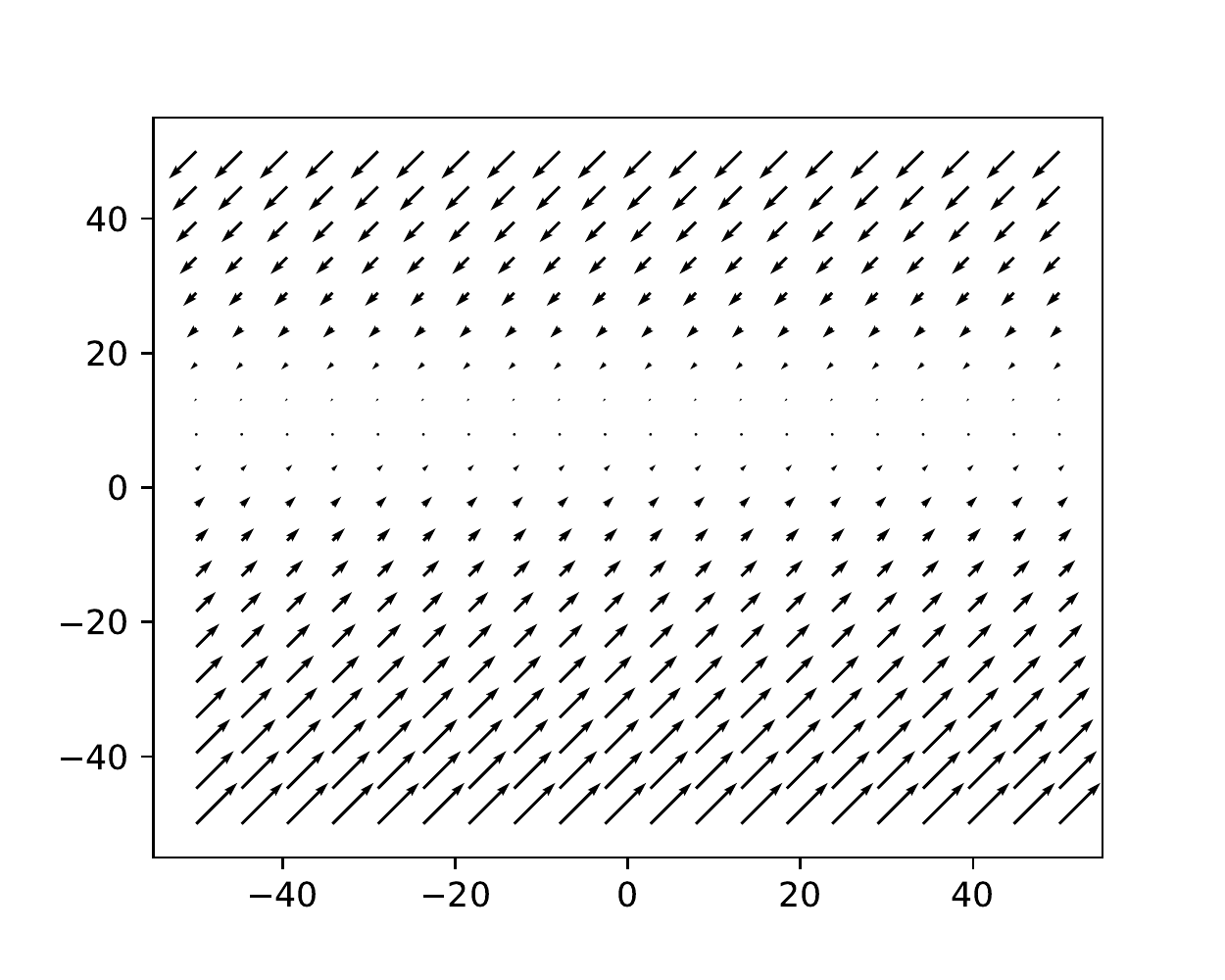}
        &\includegraphics[trim={0.8cm .7cm 1cm 1cm},clip,width=0.3\textwidth]{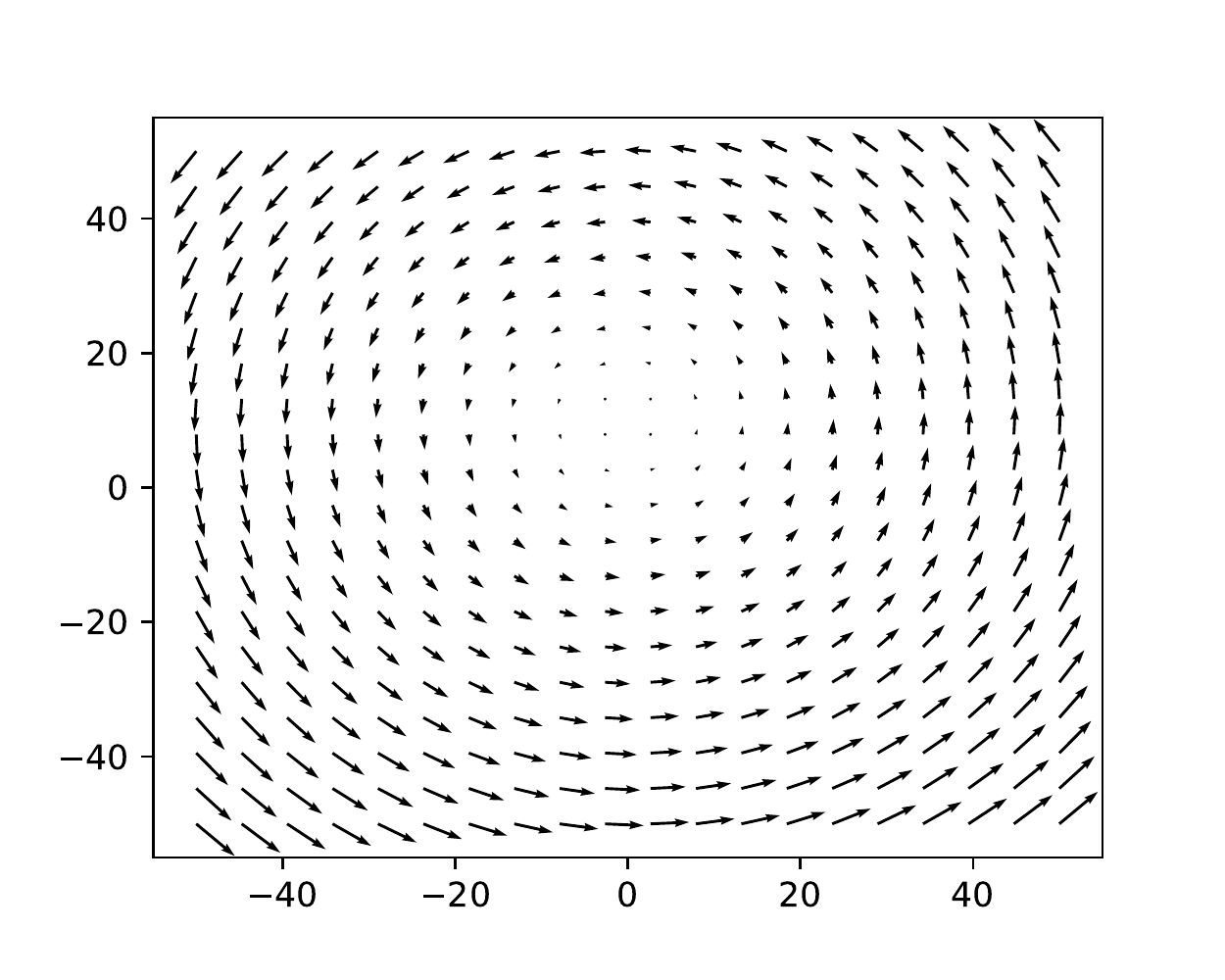}
        &
          \includegraphics[trim={0.8cm .7cm 1cm 1.cm},clip,width=0.3\textwidth]{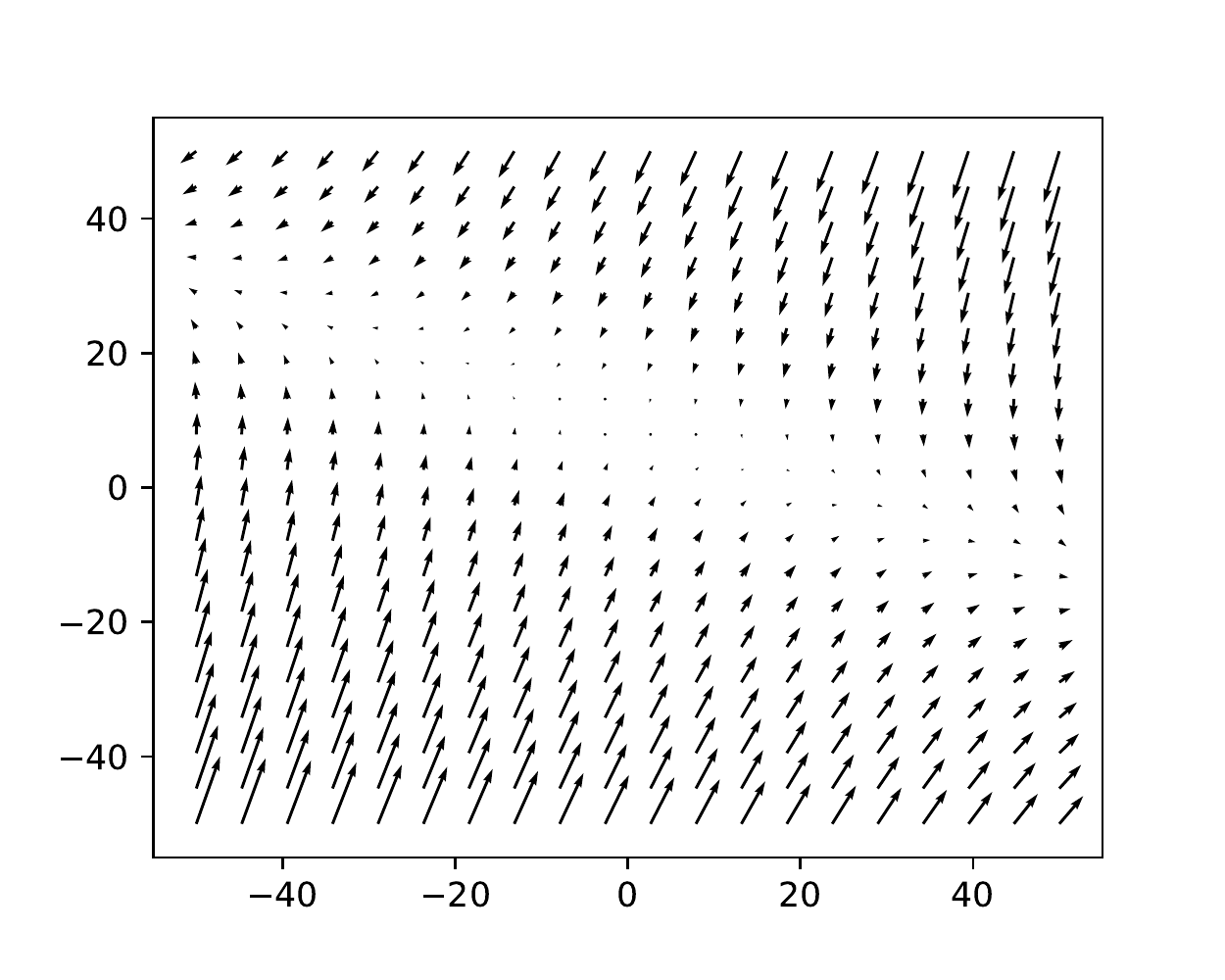}  
\\
{ (a) Vector field of $\mu_* \mu_t^\top x$}&
{ (b) Curl component}&
{(c) Divergence component}
\\
    \end{tabular}
   
    \caption{Illustration of $\mu_*\mu_t^\top$ term by Helmholtz decomposition. ($\mu_t=[0,10],\mu_*=[20,20]$)
    }
    \label{fig:curl_vec}
\end{figure}

(2) SVGD with RBF kernel.

\begin{align*}
 g(t,x) =&   \int \frac{1}{\sqrt{2\pi|\Sigma_t|}}\exp\left(-
    \frac{1}{2}(y-\mu_t)^\top\Sigma_t^{-1}(y-\mu_t)
    \right)
    \frac{1}{\sqrt{\sigma^2}}\exp\left(-
    \frac{1}{2\sigma^2}\Vert y-x\Vert^2
    \right)\nonumber\\
    &
    \left(\Sigma_*^{-1}(y-\mu_*)-\Sigma_t^{-1}(y-\mu_t)\right)dy\\
    =&\int \frac{1}{\sqrt{2\pi|\Sigma_t|}}\exp\left(-
    \frac{1}{2}\ty^\top\Sigma_t^{-1}\ty
    \right)
    \frac{1}{\sqrt{\sigma^2}}\exp\left(-
    \frac{1}{2\sigma^2}\Vert \ty+\mu_t-x\Vert^2
    \right)\nonumber\\
    &
    \left(\Sigma_*^{-1}(\ty+\mu_t-\mu_*)-\Sigma_t^{-1}\ty\right)d\tilde{y}\\
    =&\int 
    \frac{\sqrt{2\pi |(\sigma^{-2} I + \Sigma^{-1}_t)^{-1}|}}{\sqrt{2\pi \sigma^2|\Sigma_t|}}
    \frac{1}{\sqrt{2\pi |(\sigma^{-2} I + \Sigma^{-1}_t)^{-1}|}}\cdot\\
    &\quad\exp\left(-
    \frac{1}{2}(\ty-\mu_x)^\top\left(\sigma^{-2} I + \Sigma_t^{-1}\right)(\ty-\mu_x)
    \right)\cdot\nonumber\\
    &\quad\exp\left(\frac{1}{2}(\mu_t-x)^\top \Sigma_t^{-1} (\mu_t-x)\right)\left(\Sigma_*^{-1}(\ty+\mu_t-\mu_*)-\Sigma_t^{-1}\ty\right)d\tilde{y}\\
    =&\int 
    \frac{1}{\sqrt{ \sigma^2|\Sigma_t(\sigma^{-2} I + \Sigma_t^{-1})|}}
    \tilde{p}(\ty)\exp\left(\frac{1}{2}(\mu_t-x)^\top \Sigma_t^{-1} (\mu_t-x)\right)\\
    &\quad \left(\Sigma_*^{-1}(\ty+\mu_t-\mu_*)-\Sigma_t^{-1}\ty\right)d\tilde{y}
    \\
    =& C_x \left(\left(\Sigma_*^{-1}-\Sigma_t^{-1}\right)\mu_x + 
    \Sigma_*^{-1} (\mu_t-\mu_*)
    \right)
\end{align*}
where $$\mu_x = \sigma^{-2}\left(\sigma^{-2} I + \Sigma_t^{-1}\right)^{-1}(x-\mu_t )=\left( I + \sigma^{2}\Sigma_t^{-1}\right)^{-1}(x-\mu_t ),$$ $$C_x = \frac{1}{\sqrt{|\sigma^2 I +\Sigma_t|}}\exp\left(-\frac{1}{2\sigma^2}(x-\mu_t)^\top (I-(I+\sigma^2 \Sigma_t^{-1})^{-1}) (x-\mu_t)\right)$$

Note that
$g(t,x) = O(x\exp(-\Vert x\Vert^2)).
$
When $x\rightarrow\infty$, $g(t,x)$ is vanishing.

(3) and (4) are special case of Proposition \ref{prop:equi}.

Considering the evolution of $\mu_t$, we have

(1)
$
\frac{d\mu_t}{dt} =-\Sigma_{*}^{-1}[(\Sigma_{t}+(\mu_{t}-\mu_{*})\mu_{t}^{\top})K\mu_t + \mu_t - \mu_*] +K \mu_t$, which contains  non-symmetric $\mu_{*}\mu_{t}^{\top}$.
(2) The transport is not linear.

(3)
$
\frac{d\mu_t}{dt} = \Sigma^{-1}_*(\mu_* - \mu_t),
$.

(4)
$
\frac{d\mu_t}{dt} = \mu_* - \mu_t,
$
which means that the evolution of (4) is equivalent with optimal transport.

\end{proof}
\subsection{Proof of Proposition \ref{prop:equivalent} }

\begin{proof}

Assume that $Q(f) = \int p(t,x)\Vert f(x)\Vert_H^2 dx$, when we have Eq.~\eqref{eq:update} as 
 \begin{align}
      g(t,x) &= \argmin_{f \in \cF} \left[
\int   p(t,x) [-\nabla \cdot f(x) -  f(x)^\top
\nabla \ln {p_*(x)} + \frac{1}{2} \Vert f(x)\Vert_H^2] d x  \right]  \\ 
& = \argmin_{f \in \cF} \left[
\int   p(t,x) [ -  f(x)^\top
\nabla \ln \frac{p_*(x)}{p(t,x)} + \frac{1}{2} \Vert f(x)\Vert_H^2] d x  \right]\\
&=\argmin_{f \in \cF} \left[
\EE_{p_t} [ \frac{1}{2}\left\Vert H^{-1}\nabla\ln \frac{p_*(x)}{p(t,x)}\right\Vert_H^2-  f(x)^\top
\nabla \ln \frac{p_*(x)}{p(t,x)} + \frac{1}{2} \Vert f(x)\Vert_H^2]   \right] \\ &=
\argmin_{f \in \cF} \left[ \frac{1}{2}
\EE_{p_t}  \left\Vert f(x)-H^{-1}\nabla\ln \frac{p_*(x)}{p(t,x)}\right\Vert_H^2   \right]
 \end{align}
\end{proof}

\subsection{Proof of Proposition \ref{prop:equi}}

\begin{lemma}\label{lem:smooth}(A Variant of Lemma. 11 in~\citep{vempala2019rapid})
Suppose that $p(x) >0$ is L-smooth. Then, we have

\begin{equation*}
    \int {p(x)} \left\|\nabla \ln p(x)\right\|^2 d x 
    \le Ld;
\end{equation*} 
\end{lemma}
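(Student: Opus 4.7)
My plan is to evaluate the integral by the Gaussian-integration-by-parts trick: rewrite $p \lVert \nabla \ln p\rVert^2$ as $\nabla p \cdot \nabla \ln p$, move one derivative off $\nabla p$, and then bound the resulting trace of the Hessian of $\ln p$ using the $L$-smoothness hypothesis.

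First, observe the pointwise identity
\begin{equation*}
p(x)\,\lVert \nabla \ln p(x)\rVert^2 \;=\; \nabla p(x)\cdot \nabla \ln p(x),
\end{equation*}
which follows from $\nabla p = p\,\nabla \ln p$. Integrating and applying the divergence theorem (with the boundary term vanishing, since $p$ is a smooth integrable density with $\nabla p = p\,\nabla \ln p \to 0$ at infinity under the tail regularity implicit in $[\mathbf{A_2}]$), I would obtain
\begin{equation*}
\int p(x)\,\lVert \nabla \ln p(x)\rVert^2\, dx \;=\; \int \nabla p(x)\cdot \nabla \ln p(x)\, dx \;=\; -\int p(x)\,\Delta \ln p(x)\, dx \;=\; -\int p(x)\,\operatorname{tr}\!\bigl(\nabla^2 \ln p(x)\bigr)\, dx.
\end{equation*}

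Next, I would use $L$-smoothness. The assumption $\lVert \nabla \ln p(x) - \nabla \ln p(y)\rVert \le L\lVert x - y\rVert$ means $\nabla \ln p$ is $L$-Lipschitz, so the Hessian satisfies $\nabla^2 \ln p(x) \succeq -L\,I$ in the operator sense (wherever it exists; one can justify the integral identity by standard mollification if needed). Consequently each of its $d$ eigenvalues is at least $-L$, which gives $\operatorname{tr}(\nabla^2 \ln p(x)) \ge -Ld$, and therefore $-\operatorname{tr}(\nabla^2 \ln p(x)) \le Ld$. Plugging this pointwise bound into the identity above and using $\int p(x)\,dx = 1$ yields
\begin{equation*}
\int p(x)\,\lVert \nabla \ln p(x)\rVert^2\, dx \;\le\; Ld,
\end{equation*}
as claimed.

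The main obstacle I expect is the rigorous justification of the integration by parts: one needs $p(x)\,\nabla \ln p(x) = \nabla p(x) \to 0$ sufficiently fast at infinity so the boundary term vanishes. Under the smoothness and second-moment conditions in $[\mathbf{A_2}]$ this is standard (it can also be obtained by multiplying by a smooth cutoff $\chi_R$, letting $R\to\infty$, and using dominated convergence together with the fact that $\int p \lVert \nabla \ln p\rVert^2$ is a priori non-negative). Everything else is an algebraic manipulation plus a spectral bound, so the only delicate point is this tail control.
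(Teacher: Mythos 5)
Your proof is correct. The paper itself does not prove this lemma --- it simply cites it as a variant of Lemma~11 of \citet{vempala2019rapid} --- and your argument (rewrite $p\|\nabla\ln p\|^2=\nabla p\cdot\nabla\ln p$, integrate by parts to get $-\int p\,\trace(\nabla^2\ln p)\,dx$, then use $\nabla^2\ln p\succeq -LI$ from $L$-Lipschitzness of $\nabla\ln p$) is exactly the standard proof given in that reference, including the appropriate caveats about the vanishing boundary term and the almost-everywhere existence of the Hessian.
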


\begin{proof}
By $[\mathbf{A_1}]$, there exists $f$ such that
\begin{equation}
    - \int p(t,x)[\nabla \cdot f +  f^\top
\nabla_x \ln {p_*(x)}] d x < 0.
\end{equation}

Thus, consider $f_c(x) =cf(x) $,
\begin{equation}
L(c) =  - \int p(t,x)[\nabla \cdot cf +  cf^\top
\nabla_x \ln {p_*(x)}] d x =: -C_f c
\end{equation}
\begin{equation}
l(c) =   \int p(t,x)\Vert cf \Vert_H^2 d x  =: c^2\int p(t,x)\Vert f \Vert_H^2 d x = C_H c^2
\end{equation}
where $C_H,C_f>0$.

When $c=\frac{C_f}{2C_H}$, we have
$$
C_Hc^2 - C_f c = -\frac{C_f^2}{4C_H}<0.
$$
By choosing $c_0=\frac{C_f}{2C_H}$, 
\begin{align}
    & -
\int   p(t,x) [\nabla \cdot g(t,x) +  g(t,x)^\top
\nabla \ln {p_*(x)}+ \frac{1}{2}\Vert g(t,x)\Vert_H^2 ] d x    \\
\leq & -
\int   p(t,x) [\nabla \cdot c_0f +  c_0f^\top
\nabla \ln {p_*(x)}+ \frac{1}{2}\left\Vert c_0f\right\Vert_H^2 ] d x  \\
< & 0
\end{align}

Thus, 
$$
\frac{d D_{\mathrm{KL}}(t)}{dt} = -
\int   p(t,x) [\nabla \cdot g(t,x) +  g(t,x)^\top
\nabla \ln {p_*(x)}d x   < 0 
$$

By Eq.~\eqref{eq:evo},
\begin{align}
 \frac{d D_{\mathrm{KL}}(t)}{d t}
&= -\int   p(t,x) [\nabla \cdot g(t,x) +  g(t,x)^\top
\nabla_x \ln {p_*(x)}] d x\\
&=-\int   p(t,x) [  g(t,x) 
\nabla \ln \frac{p_*(x)}{p(t,x)}] d x\\
&\geq
-\frac{1}{2}\int   p(t,x) 
\left\Vert\nabla \ln \frac{p_*(x)}{p(t,x)}\right\Vert^2 d x -
\frac{1}{2}\int   p(t,x) 
\left\Vert g(t,x) \right\Vert^2 d x\\
&\geq - (1+\lambda_{\max}(H)\lambda^{-2}_{\min}(H))\int p(t,x) 
\left\Vert\nabla \ln \frac{p_*(x)}{p(t,x)}\right\Vert^2 d x\\
&\geq - (1+\lambda_{\max}(H)\lambda^{-2}_{\min}(H))\left(Ld+\int p(t,x) 
\left\Vert\nabla \ln {p_*(0)}+Lx\right\Vert^2 d x\right)\\
&>-\infty
\end{align}
where $\int p(t,x) 
\left\Vert\nabla \ln \frac{p_*(x)}{p(t,x)}\right\Vert^2 d x$ can be controlled by $Ld$ ($[\mathbf{A_2}]$ and Lemma \ref{lem:smooth}) and $\EE_{p_t}\Vert x\Vert^2$.

\end{proof}

\subsection{Proof of Theorem \ref{the:rate}}\label{sec:proof_the1}

\begin{proof}
By $\mu$-log-Sobolev inequality in $[\mathbf{A_4}]$, suppose $g^2=p_t/p_*$, then we have
\begin{equation}
    \label{ineq:lsi_ld}
    \begin{aligned}
    D_{\mathrm{KL}}(t) \le \frac{1}{2\mu} \int p(t,x) \left\|\nabla \ln \frac{p(t,x)}{p_*(x)}\right\|^2 d x.
    \end{aligned}
\end{equation}

When we consider the $H$-induced norm, for any $x$, we have $\Vert x \Vert_{H^{-1}}\geq \frac{1}{\lambda_{\max}(H)}\Vert x \Vert$. Thus, 
\begin{equation}
    \begin{aligned}
    D_{\mathrm{KL}}(t) \le 
    \frac{\lambda_{\max}(H)}{2\mu}
    \int p(t,x) \left\|\nabla \ln \frac{p(t,x)}{p_*(x)}\right\|^2_{H^{-1}} d x.
    \end{aligned}
\end{equation}

Using Cauchy-Schwarz inequality and $[\mathbf{A_3}]$, we have
\begin{equation*}
    \begin{aligned}
    \frac{dD_{\mathrm{KL}}(t)}{dt} &=  -\int p(t,x) \nabla \ln \frac{p_*(x)}{p(t,x)}\cdot g(t,x) d x\\
    &=-\int p(t,x) \nabla \ln \frac{p_*(x)}{p(t,x)}\cdot \left(g(t,x)- H^{-1}\nabla \ln \frac{p_*(x)}{p(t,x)}+H^{-1}\nabla \ln \frac{p_*(x)}{p(t,x)} \right)d x\\
    &= - \int p(t,x) \left\|\nabla \ln \frac{p(t,x)}{p_*(x)}\right\|^2_{H^{-1}} d x\\
    &\quad+ 
    \int p(t,x)  \nabla \ln \frac{p_*(x)}{p(t,x)}\cdot \left( g(t,x)- H^{-1}\nabla \ln \frac{p_*(x)}{p(t,x)} \right) d x\\
    &\leq - \int p(t,x) \left\|\nabla \ln \frac{p(t,x)}{p_*(x)}\right\|^2_{H^{-1}} d x\\
    &\quad+ 
    \int p(t,x)  \left\Vert \nabla \ln \frac{p_*(x)}{p(t,x)}\right\Vert_{H^{-1}} \left\Vert g(t,x)- H^{-1}\nabla \ln \frac{p_*(x)}{p(t,x)} \right\Vert_H d x\\
    &\leq -\frac{1}{2}\int p(t,x) \left\|\nabla \ln \frac{p(t,x)}{p_*(x)}\right\|^2_{H^{-1}} d x+ \frac{1}{2}
    \int p(t,x)   \left\Vert g(t,x)- H^{-1}\nabla \ln \frac{p_*(x)}{p(t,x)} \right\Vert^2_H d x\\
    &\leq -\frac{\mu (1-\epsilon)}{\lambda_{\max}(H)} D_{\mathrm{KL}}(t)
    \end{aligned}
\end{equation*}

    Thus, by Gr\"onwall's inequality,
    $$
    D_{\mathrm{KL}}(t)\leq \exp\left(-\frac{\mu (1-\epsilon)}{\lambda_{\max}(H)}\right) D_{\mathrm{KL}}(0)
    $$

\noindent\textbf{Remark:} Theorem \ref{the:rate} shows that PFG algorithm convergence with linear rate. Moreover, the choice of preconditioning matrix affect the convergence rate with the largest eigenvalue. In real practice, due to the discretized algorithm, one needs to normalize the scaling of $H$, which is equivalent to the learning rate in practice. For any preconditioning $\tilde{H}$, we need to compute the corresponding learning rate to get $H = \eta^{-1} \tilde{H}$. One usually set that $\Vert \eta \tilde{H}^{-1} \nabla^2 \ln p_*(x) \Vert_2^{-1}\leq 1$ to make the discretized (first-order) algorithm feasible ($\eta\leq \Vert  \tilde{H}^{-1} \nabla^2 \ln p_*(x) \Vert_2$), otherwise the second order error would be too large. 
The construction is similar to condition number dependency in optimization literature.

Take the fixed largest learning rate for example in the continuous case: 

When $\tilde{H}=I$, we have that $\eta = 1/L$, $H = LI$, 
 $$\frac{dD_{\mathrm{KL}}(t)}{dt} \leq -\frac{1}{2L}\mathbb{E}_{p_t} \left\|\nabla \ln \frac{p_t(x)}{p_*(x)}\right\|^2.$$ 

When $\tilde{H} = -\nabla^2 \ln p_*$, we have that $\eta = 1$, $H = -\nabla^2 \ln p_*$,
 $$\frac{dD_{\mathrm{KL}}(t)}{dt} \leq -\frac{1}{2}\mathbb{E}_{p_t} \left\|\nabla \ln \frac{p_t(x)}{p_*(x)}\right\|^2_{-(\nabla^{2} \ln p_*)^{-1}} \leq -\frac{1}{2L}\mathbb{E}_{p_t} \left\|\nabla \ln \frac{p_t(x)}{p_*(x)}\right\|^2.$$


\end{proof}








\subsection{Analysis of Discrete Gaussian Case}\label{sec:gauss_dis}

As is shown in Example 1, given that $x_0 \sim \mathcal{N}(0,I)$ and the target distribution is $\mathcal{N}(\mu_*,\Sigma_*)$, we have the following update for our PFG algorithm,
$$
x_{t+1} = x_t + \eta H^{-1}\left(\Sigma_*^{-1} (\mu_* - x_t) + \Sigma_t^{-1} (x_t - \mu_t )\right).
$$

By taking the expectation, we have
$$
\begin{aligned}
\mu_{t+1} - \mu_t &= \eta H^{-1}\left(\Sigma_*^{-1} (\mu_* - \mu_t)\right)\\
\Sigma_{t+1}-\Sigma_{t} & =\eta H^{-1}\left(\Sigma_{t}^{-1}-\Sigma_{*}^{-1}\right)\Sigma_{t}+\eta\Sigma_{t}\left(\Sigma_{t}^{-1}-\Sigma_{*}^{-1}\right)H^{-1}\\
 & \ \ \ \ +\eta^{2}H^{-1}\left(\Sigma_{t}^{-1}-\Sigma_{*}^{-1}\right)\Sigma_{t}\left(\Sigma_{t}^{-1}-\Sigma_{*}^{-1}\right)H^{-1}
\end{aligned}
$$

Since $\Sigma_0=I$, the eigenvectors of $\Sigma_t$ will not change during the update. For notation simplicity, we write the diagonal case in the following discussion.

Assume that $\Sigma_* = \diag(\sigma^2_1,\cdots,\sigma^2_d)$,
$\Sigma_t = \diag(s^2_1(t),\cdots,s^2_d(t))$,
$H = \diag(h_1,\cdots,h_d)$,
such that $\sigma_i\geq\sigma_{i+1}$ for any $i=1,\cdots,d-1$. Without loss of generality, we assume that $\sigma_1\leq 1$, otherwise we can still rescale the initialization to obtain the similar constraint.

Given the fact that $x-\ln(1+x)\leq \frac{x^2}{2}$ when $x>0$, we have,
$$
 D_{\mathrm{KL}}(t) \leq  \frac{1}{2} \left({\Vert \mu_{t}-\mu_*\Vert^2_{\Sigma_*^{-1}}} +\frac{1}{2}\sum_{i=1}^d\left(
\frac{s_i^2(t)-\sigma_i^2}{\sigma_i^2}  \right)^2\right).
$$

Thus, we define
\begin{equation}\label{eq:c_0}
   C_0 :=   \frac{1}{2} \left({\Vert \mu_{0}-\mu_*\Vert^2_{\Sigma_*^{-1}}} +\frac{1}{2}\sum_{i=1}^d\left(
\frac{1-\sigma_i^2}{\sigma_i^2}  \right)^2\right). 
\end{equation}


Considering the $i$-th dimension (with notation $[\cdot]_i$),
\begin{equation}\label{eq:mu}
[\mu_{t+1}]_i = \left(1-\frac{\eta}{h_i\sigma^2_i}\right)[\mu_{t}]_i +  \frac{\eta}{h_i\sigma^2_i}[\mu_{*}]_i
\end{equation}
\begin{equation}\label{eq:sigma}
  s^2_i(t+1) = s^2_i(t) +  \frac{2\eta s^2_i(t)}{h_i}\left(\frac{1}{s_i(t)^2}-\frac{1}{\sigma_i^2}\right) + \frac{\eta^2 s_i^2(t)}{h^2_i} \left(\frac{1}{s_i(t)^2}-\frac{1}{\sigma_i^2}\right)^2.  
\end{equation}

To guarantee the convergence of the algorithm, we need to make sure that both $\mu_t$ and $\Sigma_t$ converge. By rearranging \eqref{eq:mu} and \eqref{eq:sigma}, we have
$$
\frac{[\mu_{t+1}]_i -  [\mu_{*}]_i}{\sigma_i}= \left(1-\frac{\eta}{h_i\sigma^2_i}\right)\frac{[\mu_{t}]_i -  [\mu_{*}]_i}{\sigma_i}
$$
$$
\frac{s_i^2(t+1)-\sigma_i^2}{\sigma_i^2} =
\left(1-\frac{2\eta}{h_i\sigma^2_i}\right) \left(\frac{s_i^2(t)-\sigma_i^2}{\sigma_i^2}\right) + \frac{\eta^2 }{h^2_i\sigma_i^2s_i^2(t)} \left(\frac{s_i^2(t)-\sigma_i^2}{\sigma_i^2}\right)^2.
$$


One needs to construct contraction sequences of both mean and variance gap, which are guided by $1-\eta/(h_i\sigma_i^2)$ and $1-2\eta/(h_i\sigma_i^2)$.
 Notice that $s_i^2(0)\geq\sigma_i^2$.
 Assume that $\eta\leq \min_i h_i\sigma_i^2/2$. Then we have $s_i^2(t+1)\geq\sigma_i^2$.

Thus, we obtain
$$
\frac{\eta}{h_i}\left|\frac{1}{\sigma_i^2}-\frac{1}{s^2_i(t)}\right| \leq 1.
$$



It is natural to obtain the contraction of both mean and variance gap,
$$
\Vert \mu_{t+1}- \mu_{*}\Vert_{\Sigma_*^{-1}}^2 \leq 
\max_i\left(1-\frac{\eta}{h_i\sigma^2_i}\right)^2 \Vert \mu_{t}- \mu_{*}\Vert_{\Sigma_*^{-1}}^2
$$

$$
\left|\frac{s_i^2(t+1)-\sigma_i^2}{\sigma_i^2}\right| \leq 
\left(1-\frac{\eta}{h_i\sigma^2_i}\right) \left|\frac{s_i^2(t)-\sigma_i^2}{\sigma_i^2}\right|
$$

$$
D_{\mathrm{KL}}(t)  \leq  
\max_i\left(1-\frac{\eta}{h_i\sigma^2_i}\right)^{2t}
C_0,
$$
where $C_0$ is defined in Eq. \eqref{eq:c_0}.

When $h_i=1$, we have $\eta = \frac{\sigma_d^2}{2}$
$$
D_{\mathrm{KL}}(t)  \leq  
\left(1-\frac{\sigma^2_d}{2\sigma^2_1}\right)^{2t}
C_0.
$$

When $h_i=\sigma_i^{-2}$, we have $\eta = \frac{1}{2}$
$$
D_{\mathrm{KL}}(t)  \leq  \frac{1}{4^{t}}
C_0.
$$

This result is equivalent to the Remark in \ref{sec:proof_the1}. 




\subsection{Proof of Infinite-dimensional case of RKHS}\label{sec:inf}

\begin{proof}
Assume the feature map, $\psi(x):\RR^d\rightarrow \mathcal{H}$
and let kernel
$
k(x,y) = \left<\psi(x), \psi(y)\right>_{\cH} ,
$  $Q(f)=\frac{1}{2}\Vert f\Vert_{\cH^d}$.
One can perform spectral decomposition to obtain that
$$
k(x,y) = \sum_{i=1}^\infty {\lambda_i\psi_i(x)\psi_i(y)}{}
$$
where $\psi_i:\RR^d\rightarrow\RR$ are orthonormal basis and $\lambda_i$ is the corresponding eigenvalue. 

For any $g\in\cH^d$, we have the decomposition,
$$
g(x) = \sum_{i=1}^\infty g_i \sqrt{\lambda_i}\psi_i(x),
$$
where $g_i\in \RR^d$ and $\sum_{i=1}^\infty \Vert g_i\Vert^2<\infty$.

The solution is defined by
\begin{align}
    \hat{g} &=
   \argmin_{g\in\cH^d} 
-\int   p(t,x) \left(\nabla \cdot g+\nabla \ln {p_*(x)}^\top g(x)\right)
+ \frac{1}{2}\Vert g\Vert_{\cH^d}^2,
    \\&= 
\argmin_{g\in\cH^d} 
-\int   p(t,x) \left(\nabla\cdot \sum_{i=1}^\infty g_i  \sqrt{\lambda_i}
\psi_i(x)
+ \sum_{i=1}^\infty \sqrt{\lambda_i}\nabla \ln {p_*(x)}^\top g_i \psi_i(x)\right)
+ \sum_{i=1}^\infty \Vert g_i\Vert^2,
\end{align}
which gives
$$
\hat{g}_i = 
\sqrt{\lambda_i} \int   p(t,y) [\nabla\psi_i(y) +
\nabla \ln {p_*(y)}\psi_i(y)] d y .
$$

This implies that 
\begin{equation}
g(t,x)=\sum_{i=1}^\infty \sqrt{\lambda_i} \hat{g}_i\psi_i(x)
= \int   p(t,y) [\nabla_{y} k(y,x)+
\nabla \ln {p_*(y)} k(y,x)] d y,
\end{equation}
which is equivalent to SVGD. 

\end{proof}

\newpage

\section{More discussions}

\subsection{Score matching}
Score matching (SM) is related to our algorithm due to the integration by parts technique, which uses parameterized models to estimate the stein score $\nabla \ln p_*$. We made some modifications to the original techniques in SM: (1) We extend the score matching $\nabla \ln p_*$ to Wasserstein gradient approximation $\nabla \ln p_*-\nabla \ln p_t$; (2) We introduce the geometry-aware regularization to approximate the preconditioned gradient. Thus, our proposed modifications make it suitable for sampling tasks.

In a word, both SM and PFG are derived from the integration by parts technique. SM is a very promising framework in generative modeling and our PFG is more suitable for sampling tasks.

\subsection{Other preconditioned sampling algorithms}

Preconditioning is a popular topic in SVGD literature.
Stein variational Newton method (SVN) \citep{detommaso2018stein}
  approximates the Hessian to accelerate SVGD. However, due to the gap between SVGD and Wasserstein gradient flow, the theoretical interpretation is not clear. Matrix SVGD \citep{wang2019stein} leverages more general matrix-valued kernels in SVGD, which includes SVN as a variant and is better than SVN.  \citep{chen2019projected} perform a parallel update of the parameter samples projected into a low-dimensional subspace by an SVN method. It implements dimension reduction to SVN algorithm. \cite{liu2022grassmann} projects SVGD onto arbitrary dimensional subspaces with Grassmann Stein kernel discrepancy, which implement Riemannian optimization technique to find the proper projection space.

Note that all of these algorithms are based on RKHS norm regularization. As we mentioned in Example \ref{exp:breg}, due to the introduction of RKHS, the preconditioning matrix is often hard to find. For example, the preconditioning matrix of the linear kernel varies in time, while our algorithm only needs a constant matrix. For RBF kernel, the previous works have demonstrated the improvement of the Hessian inverse matrix, but the design is still heuristic due to the gap between the Wasserstein gradient and SVGD. Our algorithm directly approximates the preconditioned Wasserstein gradient, and further analysis is clear and conclusive. Also, the SVGD-based algorithms suffer from the drawbacks of RKHS norm regularization, as discussed in Example \ref{exp:breg}.

Besides, there are some other interesting works \citep{li2019affine,garbuno2020interacting,wang2021projected,lin2021wasserstein,wang2020information,wang2022accelerated,li2019hessian} also take the preconditioning methods / local geometry / subspace properties into consideration, and they have also shown the improvement over the plain versions, which have similar motivation to our algorithm.
Although these methods are out of the particle-based VI scope, we believe that these methods have great potentials in our literature, which can be interesting future works.

\subsection{Other Wasserstein gradient flow algorithms}

There is another line of work to approximate Wasserstein gradient flow through JKO-discretization \citep{mokrov2021large,alvarez2021optimizing,fan2021variational}. The continuous versions of these Wasserstein gradient flow algorithms are related to our algorithm when the functional is chosen as KL divergence. These are promising alternative algorithms to PFG in transport tasks. From the theoretical view, they solves the JKO operator with special neural networks (ICNN) and aims to estimate the Wasserstein gradient of general functionals, including KL/JS divergence, etc. On the other hand, our algorithm is motivated by the continuous case of KL Wasserstein gradient flow and we only consider the Euler discretization (rather than JKO), which is the setting of particle-based variational inference.

When we consider the KL divergence and posterior sampling task, our proposed method is more efficient than Wasserstein gradient flow based ones, due to the flexibility of the gradient estimation function. We provide the empirical results on Bayesian logistic regression below (Covtype dataset).

\begin{table}[H]
    \centering
\caption{Bayesian logistic regression (Covtype dataset)}\label{tab:jko}
\vspace{0.2cm}
    \begin{sc}
    \begin{tabular}{c|ccc}
    \toprule
       Method&Accuracy (\%)&NLL \\
         \midrule
         Ours&\textbf{75.7}&\textbf{0.58}\\
JKO-ICNN&75.0&0.62\\
JKO-Variational&75.3&0.59\\
        \bottomrule
    \end{tabular}
        \end{sc}
\end{table}

According to the table, our algorithm outperforms JKO-type Wasserstein gradient methods. Moreover, it is important to mention that JKO-type Wasserstein gradient is extremely computation-exhausted, due to the computation of JKO operator. Thus, it would be more challenging to apply them to larger tasks such as Bayesian neural networks.

\subsection{Variance collapse and high-dimensional performance}

Some recent works \citep{ba2021understanding,gong2020sliced,zhuo2018message,liu2022grassmann} suggest that SVGD suffers from curse of dimensionality and the variance of SVGD-trained particles tend to diminish in high-dimensional spaces. We highlight that one of the key issues is that the SVGD algorithm with RBF function space is improper. As shown in Tab. \ref{tab:misspecified}, when fitting a Gaussian distribution, an ideal function class is the linear function class given the Gaussian initialization. However, provided with RBF-based SVGD, the variance collapse phenomenon is extremely severe. On the contrary, when using a proper function class (linear), both SVGD and PFG performs well. More importantly, for some powerful non-linear function classes (neural networks), it still performs well with PFG. The effectiveness of neural network function class and PFG algorithm would be particularly important when the target distribution is more complex.

\begin{table}[H]
    \centering
\caption{Illustration of Variance Collapse in RBF function classes (20-dim $\mathcal N(0,I)$)}\label{tab:misspecified}
\vspace{0.2cm}
    \begin{sc}
    \begin{tabular}{c|ccccc}
    \toprule
       Method&Target&SVGD&SVGD&PFG &PFG \\
       && (RBF)& (Linear)& (Linear)& (Neural Network)\\
         \midrule
Variance&1.00&0.35&1.01&1.00&0.99\\
        \bottomrule
    \end{tabular}
        \end{sc}
\end{table}

To further justify the effectiveness of PFG algorithm in high dimensional  cases, we evaluate our model under different settings following \cite{liu2022grassmann}. Tab. \ref{tab:high1} shows that our PFG algorithm is robust to high dimensional fitting problems, which is comparable to GSVGD \citep{liu2022grassmann} in Gaussian case.

\begin{table}[H]
    \centering
\caption{Estimating the dimension-averaged marginal variance of $\mathcal N(0,I)$}\label{tab:high1}
\vspace{0.2cm}
    \begin{sc}
    \begin{tabular}{c|ccccc}
    \toprule
       dimension  &  20&40&60&80&100\\
         \midrule
       SVGD&0.35&0.18&0.12&0.09&0.04
        \\
        GSVGD&0.96&0.97&\textbf{1.00}&1.01&\textbf{1.00}\\
        PFG&\textbf{1.00}&\textbf{0.99}&0.98&\textbf{1.00}&0.97\\
        \bottomrule
    \end{tabular}
        \end{sc}
      
\end{table}

We also include a further justification in Tab. \ref{tab:high2}, which compute the energy distance between particle the target distribution and the particle estimation on a 4-mode Gaussian mixture distribution. In this case, the function class of gradient should be much larger than linear case. We can find that the resulting performance improves the previous work (GSVGD).

\begin{table}[H]
    \centering
\caption{Energy distance ($\times10^{-2}$) between the target distribution and the particle estimation on Multimodal Gaussian mixture distribution (4 modes)}\label{tab:high2}
\vspace{0.2cm}
    \begin{sc}
    \begin{tabular}{c|ccccc}
    \toprule
       dimension  &  20&40&60&80&100\\
         \midrule
       SVGD&2.1&6.7&10.8&24.9&39.6
        \\
        GSVGD&1.0&2.3&3.4&3.2&3.5\\
        PFG&\textbf{0.6}&\textbf{0.8}&\textbf{1.2}&\textbf{1.6}&\textbf{2.1}\\
        \bottomrule
    \end{tabular}
        \end{sc}
\end{table}

\subsection{Estimation of preconditioning matrix}

Ideally, one would leverage the exact Hessian inverse as the preconditioning matrix to make the algorithm well-conditioned, which is also the target of other preconditioned SVGD algorithms. However, the computation of exact Hessian is challenging. Thus, motivated by adaptive gradient optimization algorithms, we leverage the moving average of Fisher information to obtain the preconditioning matrix. We conduct hierarchical logistic regression to discuss the importance of preconditioning. We compare 3 kinds of preconditioning strategies to estimate: (1) Full Hessian: compute the exact full Hessian matrix; (2) K-FAC: Kronecker-factored Approximate Curvature to estimate Hessian; (3) Diagonal: Diagonal moving average to estimate Hessian. According to the table, we can find that the preconditioning indeed improves the performance of Bayesian inference, and the choice of estimation strategies does not differ much. Thus, we choose the most efficient one: diagonal estimation.

\begin{table}[H]
    \centering
\caption{
Average Performance on Hierarchical Logistic Regression (German dataset)}\label{tab:prec}
\vspace{0.2cm}
    \begin{sc}
    \begin{tabular}{c|cccc}
    \toprule
&Full Hessian&K-FAC&Diagonal&w/o preconditioning\\
\midrule
Accuracy&67.8&67.4&67.6&66.2\\
NLL&0.62&0.65&0.64&0.70\\
        \bottomrule
    \end{tabular}
        \end{sc}
\end{table}

\subsection{More comparisons}

We also include a more complete empirical results on Bayesian neural networks, including the comparison with plain version of SVGD \citep{liu2016stein}, SGLD \citep{welling2011bayesian}, and
Sliced SVGD (S-SVGD) \citep{gong2020sliced}. Our algorithm is still the most competitive one according to the table. 
 Besides, we also conduct an ablation study to further justify the importance of preconditioning, which implies the value of  $Q(\cdot)$ design. In Table \ref{tab:uci_bnn3}, we have shown that the full algorithm with proper $Q$ outperforms the plain version.

\begin{table}[H]
    \small
    \caption{
       Comparison with plain version of SVGD and SGLD.  Averaged test root-mean-square error
 (RMSE) and test log-likelihood of Bayesian Neural Networks on UCI datasets (100 particles). Results are computed by 10 trials.}\label{tab:uci_bnn_}
 \centering
\begin{sc}
    \begin{tabular}{c|ccc|ccc}
\toprule  & \multicolumn{3}{c|}{ { Average Test RMSE }}&\multicolumn{3}{c}{ { Average Test Log-likelihood }} \\
{ Dataset } &  { SVGD } & { SGLD } & { \ModelName }& { SVGD } & { SGLD } & { \ModelName } \\
\midrule { Boston } & 3.04$_{\pm0.12}$ 
& 2.79$_{\pm0.16}$
& \textbf{2.47}$_{\pm0.11}$
&-2.76$_{\pm0.15}$
&-2.63$_{\pm0.14}$
&\textbf{-2.35}$_{\pm0.12}$
\\
{ Concrete } & 5.51$_{\pm0.17}$
&4.97$_{\pm0.12}$&
\textbf{4.69}$_{\pm0.14}$
&-3.07$_{\pm0.23}$&
-3.03$_{\pm0.13}$&
\textbf{-2.83}$_{\pm0.16}$
\\
{ Energy } & 1.96$_{\pm0.10}$ 
&0.84$_{\pm0.07}$ &
\textbf{0.48}$_{\pm0.04}$
&-2.15$_{\pm0.11}$&-1.82$_{\pm0.16}$
&\textbf{-1.22}$_{\pm0.06}$
\\
{ Protein } & 4.89$_{\pm0.11}$
&4.61$_{\pm0.13}$&
\textbf{4.51}$_{\pm0.06}$
&-3.12$_{\pm0.14}$&-2.99$_{\pm0.14}$&
\textbf{-2.89}$_{\pm0.07}$
\\
{ WineRed } & 0.68$_{\pm0.03}$ 
&0.67$_{\pm0.05}$ &
\textbf{0.60}$_{\pm0.02}$
&-1.92$_{\pm0.05}$&
-1.81$_{\pm0.08}$&\textbf{-1.61}$_{\pm0.03}$
\\
{ WineWhite } & 0.69$_{\pm0.04}$ 
&0.71$_{\pm0.06}$ &
\textbf{0.59}$_{\pm0.02}$
&-1.85$_{\pm0.04}$
&-1.69$_{\pm0.07}$
&\textbf{-1.58}$_{\pm0.04}$
\\
{ Year } & 8.65$_{\pm0.08}$ & 8.69$_{\pm0.06}$ & \textbf{8.56}$_{\pm0.04}$
&-3.54$_{\pm0.10}$&-3.64$_{\pm0.09}$&-\textbf{3.51}$_{\pm0.03}$
\\
\bottomrule
\end{tabular}
\end{sc}
\end{table}

\begin{table}[H]
    \small
    \caption{
     Comparison with Sliced SVGD (S-SVGD). 
    Averaged test root-mean-square error
 (RMSE) and test log-likelihood of Bayesian Neural Networks on UCI datasets (100 particles). Results are computed by 10 trials.}\label{tab:uci_bnn2}
 \vspace{0.2cm}
 \centering
\begin{sc}
    \begin{tabular}{c|cc|cc}
\toprule  & \multicolumn{2}{c|}{ { Average Test RMSE }}&\multicolumn{2}{c}{ { Average Test LL }} \\
{ Dataset } &   { S-SVGD } & { \ModelName }&  { S-SVGD } & { \ModelName } \\
\midrule { Boston } 
& 2.87$_{\pm0.16}$ 
& \textbf{2.47}$_{\pm0.11}$
&-2.50$_{\pm0.16}$ 
&\textbf{-2.35}$_{\pm0.12}$
\\
{ Concrete } 
&4.88$_{\pm0.08}$&
\textbf{4.69}$_{\pm0.14}$
&
 -3.00$_{\pm0.02}$ &
\textbf{-2.83}$_{\pm0.16}$
\\
{ Energy } 
&1.13$_{\pm0.05}$  &
\textbf{0.48}$_{\pm0.04}$
&-1.54$_{\pm0.05}$
&\textbf{-1.22}$_{\pm0.06}$
\\
{ Protein } & 4.58$_{\pm0.07}$
&
\textbf{4.51}$_{\pm0.06}$
&-2.99$_{\pm0.12}$&
\textbf{-2.89}$_{\pm0.07}$
\\
{ WineRed } 
&0.64$_{\pm0.01}$ &
\textbf{0.60}$_{\pm0.02}$
 &
-1.82$_{\pm0.06}$&\textbf{-1.61}$_{\pm0.03}$
\\
{ WineWhite } 
&0.62$_{\pm0.01}$  &
\textbf{0.59}$_{\pm0.02}$
&-1.71$_{\pm0.03}$
&\textbf{-1.58}$_{\pm0.04}$
\\
{ Year } &  8.77$_{\pm0.06}$ & \textbf{8.56}$_{\pm0.04}$
&-3.60$_{\pm0.05}$
&-\textbf{3.51}$_{\pm0.03}$
\\
\bottomrule
\end{tabular}
\end{sc}
\end{table}

\begin{table}[H]
    \small
    \caption{
  Ablation Study. Averaged test root-mean-square error
 (RMSE) and test log-likelihood of Bayesian Neural Networks on UCI datasets (100 particles). Results are computed by 10 trials.}\label{tab:uci_bnn3}
 \vspace{0.2cm}
 \centering
\begin{sc}
    \begin{tabular}{c|cc|cc}
\toprule  & \multicolumn{2}{c|}{ { Average Test RMSE }}&\multicolumn{2}{c}{ { Average Test LL }} \\
{ Dataset } &   { w/o precondition } & { \ModelName (full) }&  { w/o precondition } & { \ModelName (full) } \\
\midrule { Boston } 
& 2.69$_{\pm0.13}$ 
& \textbf{2.47}$_{\pm0.11}$
&-2.55$_{\pm0.12}$ 
&\textbf{-2.35}$_{\pm0.12}$
\\
{ Concrete } 
&4.90$_{\pm0.06}$&
\textbf{4.69}$_{\pm0.14}$
&
 -2.94$_{\pm0.02}$ &
\textbf{-2.83}$_{\pm0.16}$
\\
{ Energy } 
&1.15$_{\pm0.05}$ &
\textbf{0.48}$_{\pm0.04}$
&-1.58$_{\pm0.08}$
&\textbf{-1.22}$_{\pm0.06}$
\\
{ Protein } & 4.60$_{\pm0.08}$
&
\textbf{4.51}$_{\pm0.06}$
&-3.04$_{\pm0.04}$
&
\textbf{-2.89}$_{\pm0.07}$
\\
{ WineRed } 
&0.66$_{\pm0.02}$ &
\textbf{0.60}$_{\pm0.02}$
 &
 -1.75$_{\pm0.06}$&\textbf{-1.61}$_{\pm0.03}$
\\
{ WineWhite } 
&0.64$_{\pm0.02}$ &
\textbf{0.59}$_{\pm0.02}$
&-1.80$_{\pm0.03}$
&\textbf{-1.58}$_{\pm0.04}$
\\
{ Year } &  8.67$_{\pm0.06}$ & \textbf{8.56}$_{\pm0.04}$
&-3.59$_{\pm0.06}$
&-\textbf{3.51}$_{\pm0.03}$
\\
\bottomrule
\end{tabular}
\end{sc}
\end{table}

\subsection{Particle-based VI vs Langevin dynamics}

One may be interested in the reason why we choose particle-based VI rather than Langevin dynamics. In the continuous case, Langevin dynamics solves Fokker-Planck equation. We have several reasons to demonstrate the superiority of proposed framework rather than Langevin dynamics.

\noindent \emph{1. Motivation of particle-based variational inference: Deterministic update and repulsive interactions.}

One of the key algorithmic differences between particle-based variational inference and Langevin dynamics is the realization of $\nabla \ln p_t$, where particle-based variational inference explicitly estimates the deterministic repulsive function and Langevin dynamics uses Brownian motion.

The deterministic version repulsive force introduces interactions between particles while the stochastic version only maintain the variance with the randomness. Thus, for each particle, only the deterministic algorithm leads to a convergence, which is more stable and efficient (wrt the number of particles). 

Figure \ref{fig:exp_3} has demonstrated the phenomenon: we use both particle-based variational inference and Langevin dynamics to sample from Gaussian distribution. It is clear that although both algorithms return reasonable particles, the Langevin-induced particles are fully random and highly unstable due to the empirical randomness, but particle-based VI is robust against different random seeds. When the number of particles is small, the sample efficiency of particle-based VI should be much better than Langevin dynamics.

Besides, the deterministic update can induce a transport function, which can be used to  map input to the target distribution directly, which can be a great potential of our propose framework. For example, we can maintain the composite function of all the particle updates. When $x_{t+1} = f_t(x_t)= x_t+\eta g(t,x_t)$, $t=1,\cdots, T-1$, then $x_T = f_{T-1}\circ\cdots\circ f_1(x_1)$, then we can perform resampling painlessly. For the Gaussian case, this composite function is just a linear transform without other overheads. For neural networks, we may also use distillation to obtain a transport function. We believe that the distillation of the transport function have great potential in the future.

\noindent\emph{2. Discretization of Fokker-Planck equation: Forward-Flow discretization vs Euler discretization.}

About the discretization of Fokker-Planck equation, Langevin dynamics is performing Forward-Flow (FFl) discretization, which is not the same as conventional gradient desent (Euler discretization). In a word, FFI only discretizes the $\ln p_*$ term, but solve $\ln p_t$ term with SDE, so that the discretized gradient is biased in general \citep{wibisono2018sampling}.
The particle-based variational inference tend to perform Euler discretization, which is unbiased by discretizing the full (Wasserstein) gradient (similar to conventional gradient descent).   Thus, from a theoretical perspective, Euler-type discretization is simpler and more direct, which is worthwhile to be further explored as our paper.

\begin{figure}[ht]
    \centering
    \begin{tabular}{ccc}
{\small Random seed = 1}&
{\small Random seed = 2}&
{\small Random seed = 3}\\
  \includegraphics[width=0.3\textwidth]{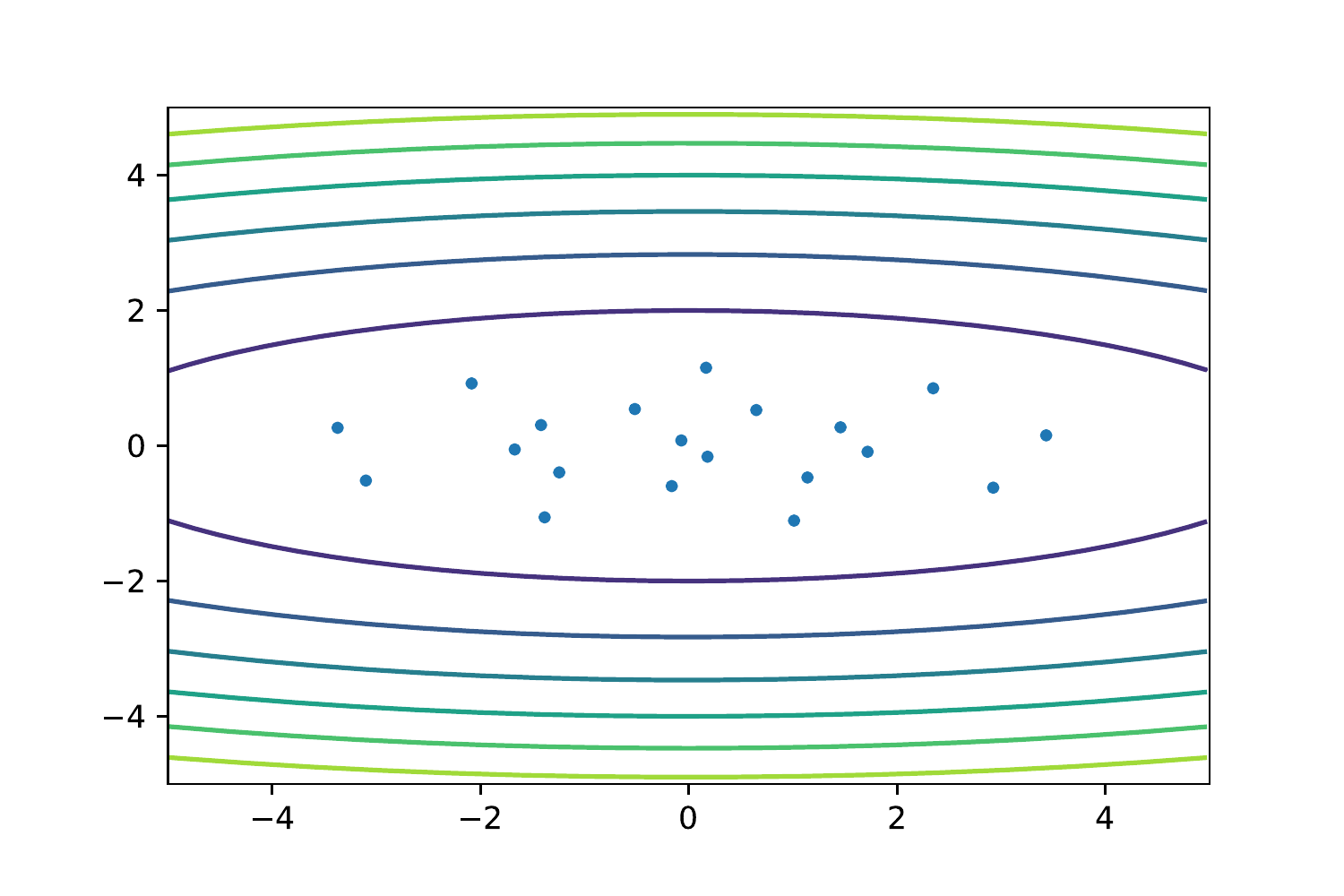}  
        & \includegraphics[width=0.3\textwidth]{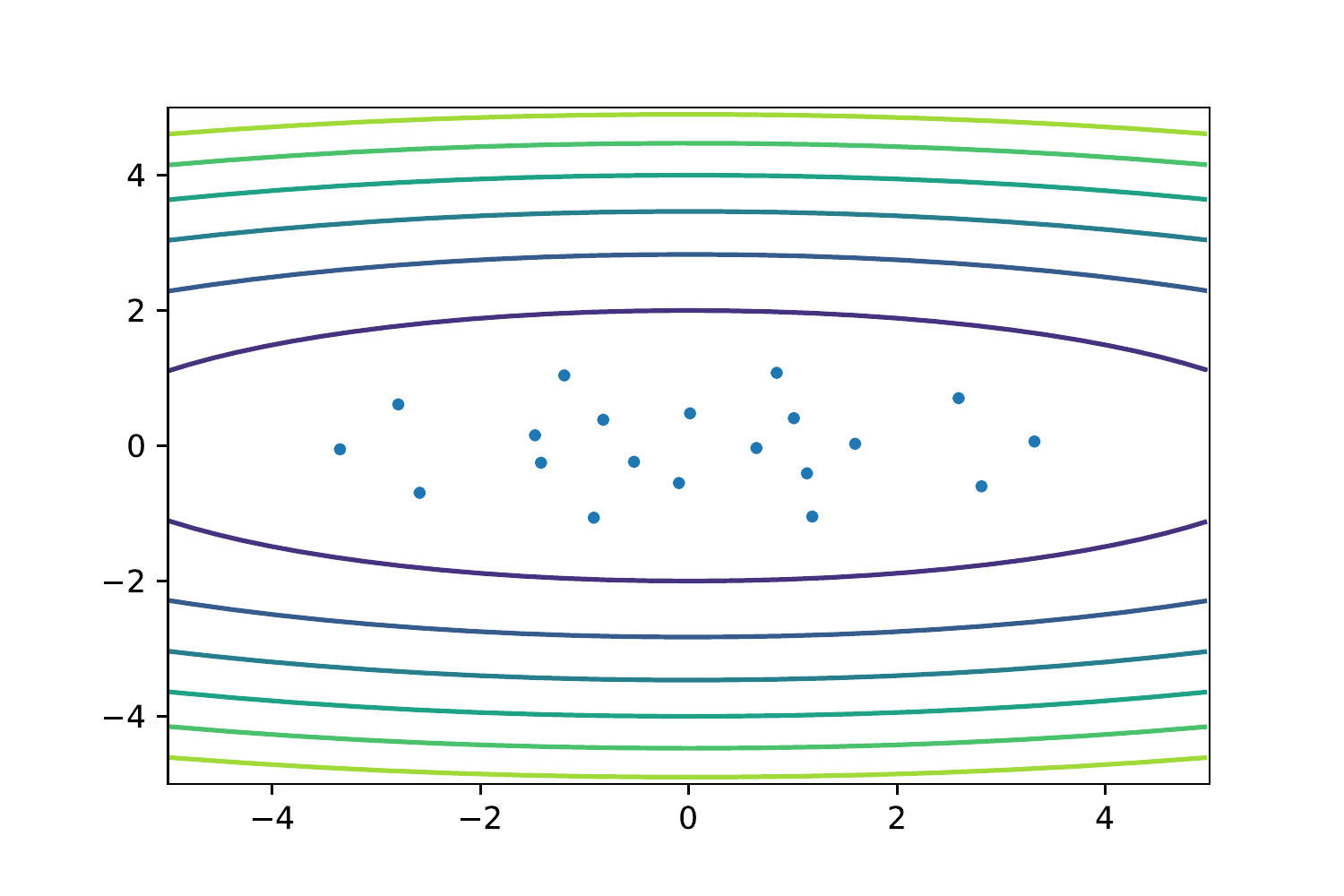}
        & \includegraphics[width=0.3\textwidth]{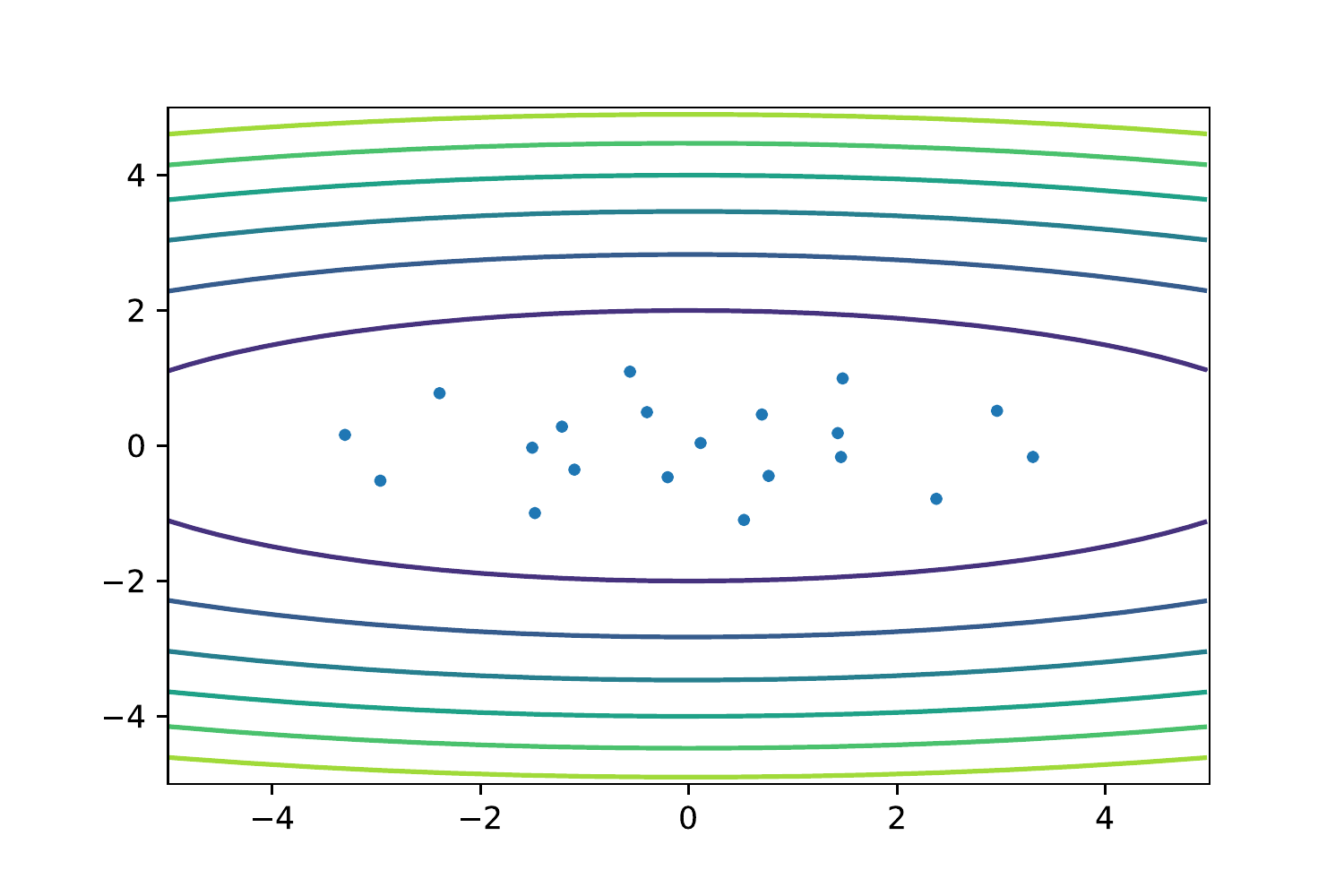}
\\
{\small (a) }&
{\small (b) }&
{\small (c) }\\
  \includegraphics[width=0.3\textwidth]{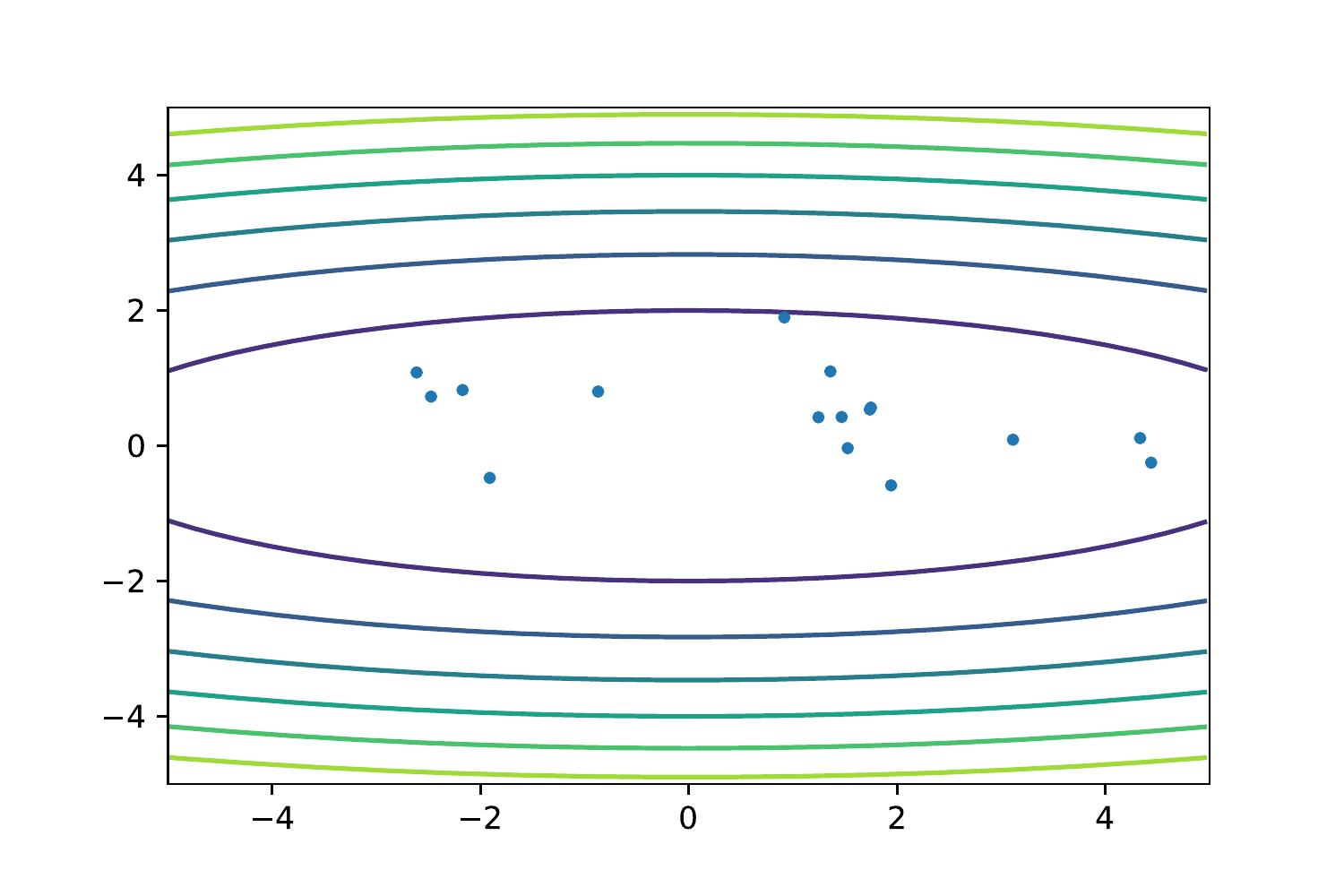}  
        & \includegraphics[width=0.3\textwidth]{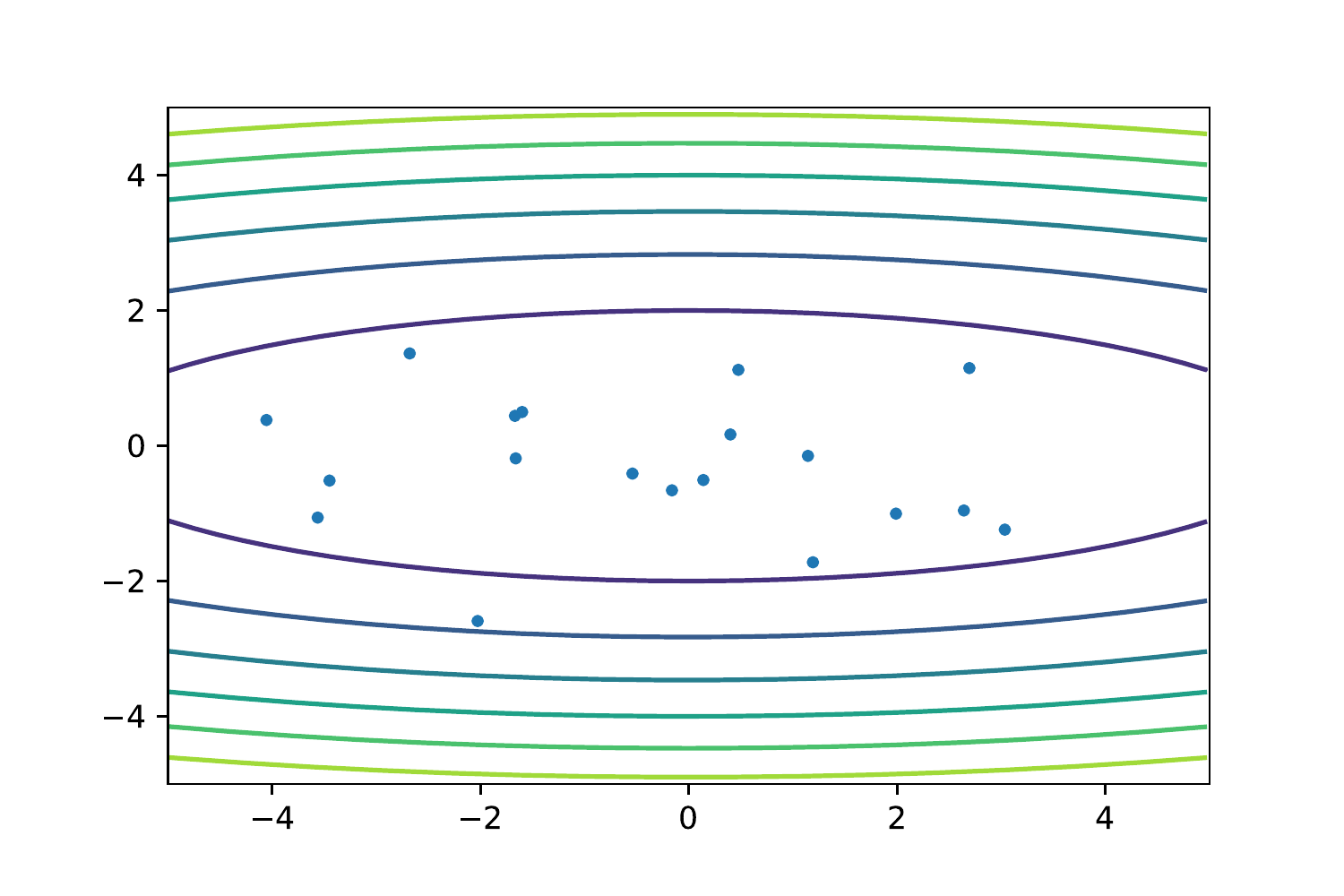}
        & \includegraphics[width=0.3\textwidth]{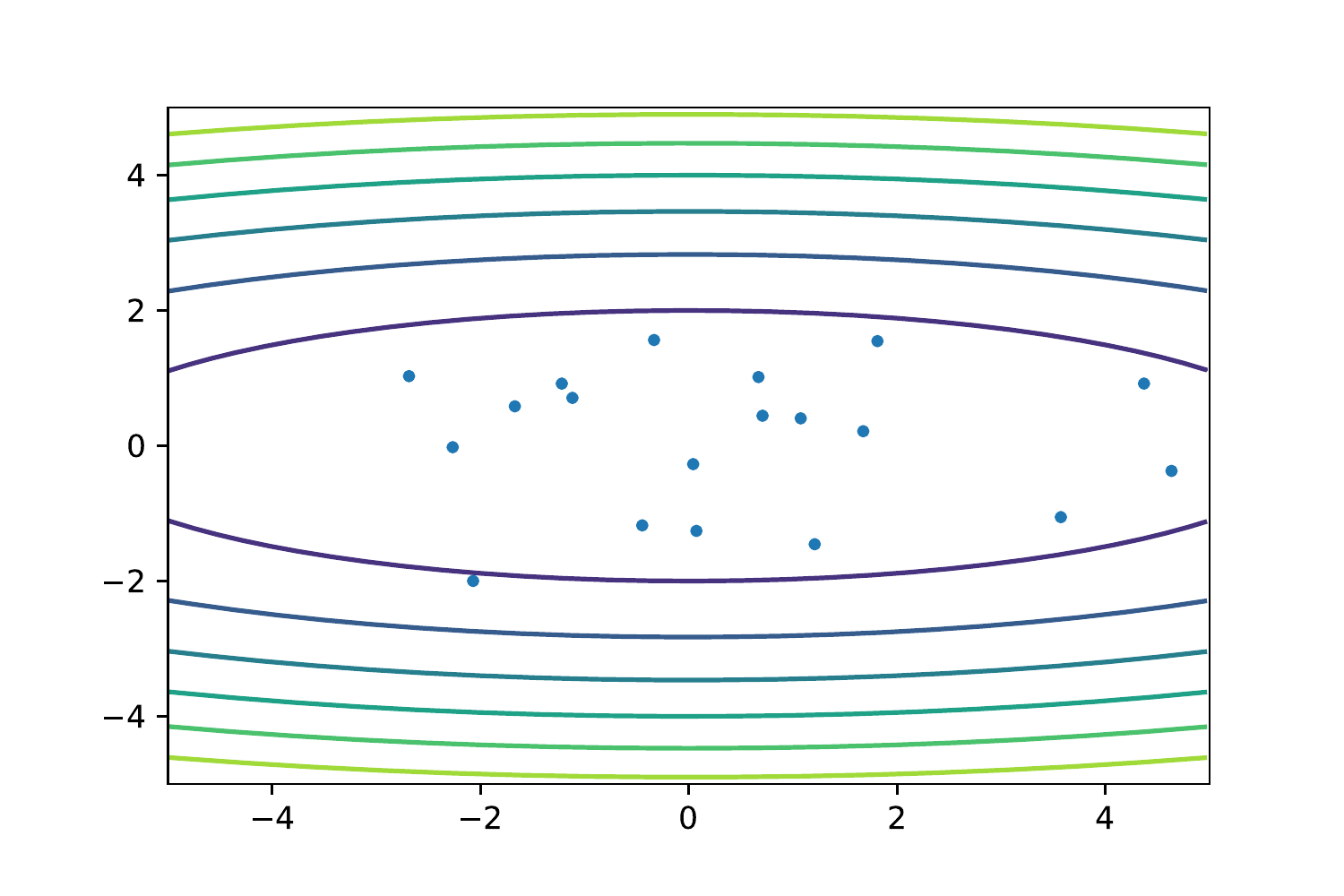}
\\
{\small (d) }&
{\small (e) }&
{\small (f) }\\
    \end{tabular}
    \caption{ Particle-based VI (a-c) vs Langevin dynamics (d-f). (Blue dots refer to particles obtained)
    }
    \label{fig:exp_3}
\end{figure}

\noindent\emph{3. Function classes: non-linear function class (neural networks) vs linear function class (RKHS). (Note that the linearity is wrt function bases rather than the plain linear function)} 

In our framework, both RKHS and neural networks (or other function class) are valid function classes to estimate the Wasserstein gradient.

In particular, neural networks are proven to outperform conventional RKHS in many areas, because it is a non-linear function class with learnable features, that can work with uneven subspaces. The RBF kernel uses the same smoothing operator for all gradients. In Figure \ref{fig:dgm}, we have shown that the RKHS is incapable of capturing functional gradient near connected clusters, while neural networks can do so. As a result, the sampling quality can be improved.

\newpage
\section{Implementation Details}

All experiments are conducted on Python 3.7 with
NVIDIA 2080 Ti. Particularly, we use PyTorch 1.9 to build models. Besides, Numpy, Scipy, Sklearn, Matplotlib, Pillow are used in the models.

\subsection{Synthetic experiments}

\subsubsection{Ill-conditioned Gaussian distribution}

For ill-conditioned Gaussian distribution, we reproduce the continuous dynamics with Euler discretization. We consider 4 cases: 
{ (a) $\mu_*=[1,1]^\top$, $\Sigma_*=\diag(1,1)$};
{(b) $\mu_*=[1,1]^\top$,  $\Sigma_*=\diag(100,1)$};
{(c) $\mu_*=[20,20]^\top$, $\Sigma_*=\diag(1,1)$;}
{(d) $\mu_*=[20,20]^\top$, $\Sigma_*=\diag(100,1)$ } to illustrate the behavior of SVGD and our algorithm clearly.

Notably, for our algorithms and SVGD (linear kernel), we use step size $10^{-3}$ to approximate the continuous dynamics and solve the mean and variance exactly (equivalent to infinite particles); for SVGD with RBF kernel, we use step size $10^{-2}$ with $1,000$ particles to approximate mean and variance. For the neural network function class, we assume that the function class of two-layer neural network $\cF_\theta = \{f_\theta:\theta\in\Theta\}$. In practice, we may incorporate base shift $f_0$ as gradient boosting to accelerate the convergence,
we use
$\cF = \{f_0+f_\theta:f_\theta\in\cF_\theta\}$, where $f_0(x)=0$ or $f_0(x)=c\nabla \ln p_*(x)$ with some constant $c$. In practice, we select $c$ from $\{0,0.1,0.2,0.5,1\}$ by validation. Empirically, high dimensional models (such as Bayesian neural networks) can be accelerated significantly with the base function.

\subsubsection{Gaussian Mixture }

For Gaussian Mixture, we sample a 2-$d$ 10-cluster Gaussian Mixture, where cluster means are sampled from standard Normal distribution, covariance matrix is $0.1^2 I$, the marginal probability of each cluster is $1/10$. The function class is 2-layer network with 32 hidden neurons and Tanh activation function. For inner loop, we choose $T'=5$ and SGD optimizer (lr = 1e-3) with momentum 0.9. For particle optimization, we choose step-size = 1e-1. For SVGD, we choose RBF kernel with median bandwidth, step-size is 1e-2.

\subsection{Approximate Posterior Inference}

To compute the trace, we use the 
randomized trace estimation (Hutchinson's trace estimator) \citep{hutchinson1989stochastic}  to accelerate the computation .

{ \footnotesize
\begin{itemize}
    \item {Sample $\{\xi_i\}_{i=1}^K$, such that $\EE\xi=0$ and $\cov \xi = I$;}
    \item  Compute $\hat{L}(\theta) = \frac{1}{m}\sum_{i=1}^m \left(Q(f_\theta(x_t^i)+f_0(x_t^i)) - f_\theta(x_t^i)\cdot \nabla U(x_t^i) - \frac{1}{K}\sum_{j=1}^K\xi_i^\top \nabla f_\theta(x_t^i) \xi_i\right) $;
    \item {Compute $\hat{L}(\theta) = \frac{1}{m}\sum_{i=1}^m \left( Q(f_\theta(x_t^i)+f_0(x_t^i)) - f_\theta(x_t^i)\cdot \nabla U(x_t^i) - \nabla\cdot f_\theta(x_t^i)\right) $  };
    \item Update $\theta_t^{t'} = \theta_t^{t'-1} - \eta'\nabla \hat{L}(\theta_t^{t'-1})$;
\end{itemize}
 }
 
where $\xi$ is chosen as Rademacher distribution.

\subsubsection{Logistic regression}

For logistic regression, we compare our algorithm with SVGD in terms of the ``goodness" of particle distribution. The ground truth is computed by NUTS \citep{hoffman2014no}. The mean is computed by 40,000 samples, the MMD is computed with 4,000 samples. The $H$ in this part is chosen as $H = I$ or $H = \hat{H}^{0.1}$.

In the experiments, we select step-size from $\{10^{-1},10^{-2},10^{-3},10^{-4}\}$ for each algorithm by validation. And we use Adam optimizer without momentum.
For SVGD, we choose RBF kernel with median bandwidth. For PFG, we select inner loop from $\{1,2,5,10\}$ by validation. The hidden neuron is 32.

\subsubsection{Hierarchical Logistic Regression}

For hierarchical logistic regression, we use the test performance to measure the different algorithms, including likelihood and accuracy. The $H$ in this part is chosen as $H = \hat{H}^{0.5}$.

For all experiments, batch-size is 200 and we select step-size from $\{10^{-1},10^{-2},10^{-3},10^{-4}\}$ for each algorithm by validation. And we use Adam optimizer without momentum.
For SVGD, we choose RBF kernel with median bandwidth. For PFG, we select inner loop from $\{1,2,5,10\}$ by validation. The hidden neuron is 32.

\subsubsection{Bayesian Neural Networks (BNN)}

We have two experiments in this part: UCI datasets, MNIST classification. The metric includes MSE/accuracy and likelihood. The hidden layer size is chosen from $\{32,64,128,256,512\}$. To approximate $H$, we use the approximated diagonal Hessian matrix $\hat H$, and choose $H = \hat H^\alpha$, where $\alpha\in \{0,0.1,0.2,0.5,1\}$. The inner loop $T'$ of PFG is chosen from $\{1,2,5,10\}$ and with SGD optimizer (lr = 1e-3, momentum=0.9).

For UCI datasets, we follow the setting of SVGD \citep{liu2016stein}.  Data samples are randomly partitioned to two parts: 90\% (training), 10\% (testing) and 
 we use 100 particles for inference. 
 We conduct the two-layer BNN with 50 hidden units (100 for Year dataset) with ReLU activation function. The batch-size is 100 (1000 for Year dataset).
 Step-size is 0.01.

For MNIST,  we conduct the two-layer BNN with 128 neurons.
We compared the experiments with different particle size. Step-sizes are chosen from $\{10^{-1},10^{-2},10^{-3},10^{-4}\}$. The batch-size is 100.

\end{document}